\pgfplotsset{compat=1.11}
\numberwithin{equation}{section}
\newcommand{\customlabel}[2]{%
   \protected@write \@auxout {}{\string \newlabel {#1}{{#2}{\thepage}{#2}{#1}{}} }%
   \hypertarget{#1}{#2}
}
\newcommand\numberthis{\addtocounter{equation}{1}\tag{\theequation}}
\theoremstyle{definition}
\newtheorem{definition}{Definition}[section]
\theoremstyle{plain}
\newtheorem{remark}{Remark}[section]
\newtheorem*{remark*}{Remark}
\newtheorem{theorem}{Theorem}[section]
\newtheorem{lemma}{Lemma}[section]
\newtheorem{proposition}{Proposition}[section]
\title{Approximations with deep neural networks in Sobolev time-space}
\author[$*$]{Ahmed Abdeljawad}
\author[$*$, $\dagger$]{Philipp Grohs}
\affil[$\,$]{\footnotesize \texttt{ahmed.abdeljawad@ricam.oeaw.ac.at, \; philipp.grohs@univie.ac.at}
\vspace*{0.3cm}}
\affil[$*$]{\footnotesize
Johann Radon Institute for Computational and Applied Mathematics,\newline
Austrian Academy of Sciences,
Linz, Austria.
%\texttt{$\{$philipp.grohs$\}$@ricam.oeaw.ac.at}
\vspace*{0.1cm}}
\affil[$\dagger$]{\footnotesize
Faculty of Mathematics,
University of Vienna, Vienna, Austria.
%\newline
%\texttt{$\{$philipp.grohs, andreas.klotz, felix.voigtlaender$\}$@univie.ac.at}
\vspace*{0.1cm}}
\date{\vspace{-5ex}}
\begin{document}

\maketitle

\begin{abstract}
Solutions of evolution equation generally lies in certain
Bochner-Sobolev spaces, in which the solution may has
regularity and integrability properties for the time variable that can be different for the space variables.
Therefore, in this paper, we develop a framework shows that deep neural networks can approximate Sobolev-regular functions with respect to Bochner-Sobolev spaces.
In our work we use the so-called Rectified Cubic Unit (ReCU) as an activation
function in our networks, which allows us to deduce approximation 
results of the  neural networks while avoiding issues caused by the non regularity of the most commonly used Rectivied Linear Unit (ReLU) activation function.
%We believe that our setting, i.e., the mixed norm in which we measure the error, is the most appropriate space for 
%evolution equations in general.
\end{abstract}

%%%%%%%%%%%%%%%%%%%%%%%%%%%%%%%%%%%%%%%%%%%%%%%%%
%%%%%%%%%%%%%%%%%%%%%%%%%%%%%%%%%%%%%%%%%%%%%%%%%

\section{Introduction}

\par

In recent years, methods from deep learning have been applied to the numerical solution of partial differential equations with impressive results
\cite{beck2018solving, Berg2018, Berner2020, Chen2019,elbrachter2018dnn, Gonon2019,
Grohs2020, Grohs2019, GrohsPEB2019, Hana2018, Hutzenthaler2020, Kutyniok2019, Lye2019,
Magill2018, sirignano2018dgm}.
One key component
 of this success lies in the expressive power of neural networks, which constitute a parametrizes class of functions constructed by iterative compositions of affine mappings and pointwise application of a nonlinear \emph{activation function}. Neural networks have been demonstrated to be at least on par with most known approximation methods, including (hp) finite elements, wavelts, shearlets, in terms of their approximation power,
 see for example \cite{boelcskei2019, chui2018, liang2017, mhaskar1993, opschoor2019deep, rolnick2018, shaham2018}.
 In these works it is shown that functions belonging to certain smoothness classes can be approximated by neural networks at a complexity corresponding to the optimal approximation rate as dictated by the metric entropy of the smoothness class, where the approximation error is typically measured in an isotropic Sobolev norm. However, if one considers the problem of approximating solutions to time dependent partial differential equations, the natural norm in which 
 the error is measured are typically of a different form coming from a space time Sobolev space
 \cite{arendt2014, cazenave2010,fefferman2017,kreiss2004,rothe2006}.
 Motivated by this fact we consider in this paper the approximation of functions in space time Sobolev spaces. Our main result Theorem \ref{thm:main} shows that, similar to functions in isotropic Sobolev spaces, these functions can be efficiently approximated by neural networks, also when the approximation error is measured with respect to a space time Sobolev norm. Our result is constructive and similar in spirit to \cite{GuhKutPet} in the sense that our explicit constructions of approximants emulate a local polynomial approximant. In contrast to
 \cite{masterthesis, GuhKutPet, petersen2017optimal, yarotsky2017error}
 where the so-called ReLU function $x\mapsto \max\{0,x\}$ is chosen as activation function, our results hold for the so-called ReCU function $x\mapsto \max\{0,x\}^3$, the main reason being that the latter is continuously differentiable. 
 
 \subsection{Outline}

This paper is organized as follows. 
In Section \ref{sec:prelim}
we provide definitions and properties of
the Sobolev time-space. All the proofs of the result in Section \ref{sec:prelim} can be found in Appendix A.
We start Section \ref{sec:network} by introducing the mathematical definition of deep neural networks, moreover we show some of its properties. 
Finally, in Section \ref{sec:main}, we prove the main result of our paper.
That is, Theorem \ref{thm:main}, where we show that deep neural networks
can approximate certain function in Sobolev time-space, thus
we get information about regularity and approximation rate.
For the approximation and estimation results, we always work with ReCU activation function.

\subsection{Notations}

\par
Throughout the paper, the following notation is used: The sets of natural numbers and
real numbers are denoted by $\mathbb{N}$ and $\mathbb{R}$, respectively.
Furthermore,  $\mathbb{N}_{0}=\{0\}\cup \mathbb{N}$, denotes the set of non-negative integers.

If $x \in \mathbb{R}$, then we write $\lceil x\rceil:=\min \{k \in \mathbb{Z}: k \geq x\}$ where $\mathbb{Z}$ is the set of integers.

If $d \in \mathbb{N}$ and $\|\cdot\|$ is a norm on $\mathbb{R}^{d},$ then we denote for
$x \in \mathbb{R}^{d}$ and $r>0$ by $B_{r,\|-\|}(x)$ the open ball around $x$
in $\mathbb{R}^{d}$ with radius $r,$ where the distance is measured in $\|\cdot\|$.
By $|x|$ we denote the Euclidean norm of $x$ and by $|x|_{\ell \infty}$ the maximum norm.
Moreover, throughout this paper $\| \cdot  \|_{\ell^0}$  be referred to as
the counting norm that return the total number of non-zero elements in a given vector.
Strictly speaking, $\ell^0$-norm is not actually a norm in the mathematical sense.

We endow $\mathbb{R}^{d}$ with the standard topology and for
$A \subset \mathbb{R}^{d}$ we denote by $\bar{A}$ the closure of $A$
and by $\partial A$ the boundary of $A$.
%For the convex hull of $A$ we write conv $A .$
The diameter of a non-empty set $A \subset \mathbb{R}^{d}$ is always taken
with respect to the euclidean distance, i.e.
$\operatorname{diam} A:=\operatorname{diam}_{|\cdot|} A:=\sup _{x, y \in A}|x-y|$.
If $A, B \subset \mathbb{R}^{d},$ then we write $A \subset \subset B$
if $\bar{A}$ is compact in $\bar{B}$.
Let $d, n\in \mathbb{N}$, $\Omega \subset \mathbb{R}^{d}$ be open,
then $C^{n}(\Omega)$ stands for the set of $n$ times continuously differentiable functions on $\Omega$
and $C^{\infty} =\bigcap_{k=1}^{\infty} C^{k}(\Omega)$.

Note that if $\alpha =\left(\alpha_{1}, \alpha_{2}, \ldots, \alpha_{d}\right)\in \mathbb{N}^d$
is a multi-index, then $\alpha! = \alpha_1!\dots\alpha_d!$
and $|\alpha|=\alpha_{1}+\ldots+\alpha_{d}$.
Let $x = (x_1, \dots, x_d)\in \mathbb{R}^d$, 
and $u \in C^{n}(\Omega)$, and let
$\alpha \in \mathbb{N}^{d}$ 
be a multi-index such that $|\alpha| \leq n$, then we denote 
$$
D^{\alpha} u=\frac{\partial^{|\alpha|} u}{\partial x_{1}^{\alpha_{1}} \ldots \partial x_{d}^{\alpha_{d}}}=\frac{\partial^{\alpha_{1}}}{\partial x_{1}^{\alpha_{1}}} \cdots \frac{\partial^{\alpha_{d}}}{\partial x_{d}^{\alpha_{d}}} u,\quad D_{x_j}u =  \frac{\partial}{\partial{x_j}}u, \text{ where }1\leq j\leq d.
$$
Let $\Omega\subset \mathbb{R}^d$, $1 \leq p \leq \infty$,
then $L^p(\Omega, \mathbb{R})=L^p(\Omega)$ denotes the Lebesgue space.
Moreover, if $X$ is a Banach space,
then the space $L^{p}(\Omega, X)$ is called vector-valued Lebesgue space or Bochner space, 
defined as the space of all measurable functions such that $\|f\|_X \in L^{p}(\Omega, \mathbb{R})$ and the norm on this space will be defined via $\|f\|_{L^{p}(\Omega, X)}:=\|\| f\|_X\|_{L^{p}(\Omega, \mathbb{R})}$.

If $d_{1}, d_{2}, d_{3} \in \mathbb{N}$ and $A \in \mathbb{R}^{d_{1}, d_{2}}, B \in \mathbb{R}^{d_{1}, d_{3}},$ then we use the common block matrix notation and write for the horizontal concatenation of $A$ and $B$

$$
\left[\begin{array}{lll}
A & | &B
\end{array}\right]
\in \mathbb{R}^{d_{1}, d_{2}+d_{3}}.
$$
A similar notation is used for the vertical concatenation of
$A \in \mathbb{R}^{d_{1}, d_{2}}$ and $B \in \mathbb{R}^{d_{3}, d_{2}}$.

We define the Rectified Power Unit (RePU) as follows
\begin{equation} \label{eq:ReQU}
\rho_s(x) =
\begin{cases}
x^s, & x \ge 0,
\\ 0, & x<0,
\end{cases}, s \in \mathbb{N}_0.
\end{equation}

Note that
 $\rho_{0}$ is the binary step function while $\rho_{1}$
 is the commonly used Rectified Linear Unit (ReLU) function.
We call $\rho_{2}, \rho_{3}$ Rectified Quadratic Unit $(\mathrm{ReQU})$
and Rectified Cubic Unit (ReCU), respectively.
\par

%%%%%%%%%%%%%%%%%%%%%%%%%%%%%%%%%%%%%%%%
%%%%%%%%%%%%%%%%%%%%%%%%%%%%%%%%%%%%%%%%

\section{Sobolev time-space definition and properties}\label{sec:prelim}

\par

In the current section we review some properties of mixed Sobolev spaces
and extend some. Moreover we show that Bramble-Hilbert lemma is valid in our setting.
More details about the proofs can be found in the Appendix.

\begin{definition}[Sobolev space]
Assume that $\Omega$ is an open subset of $\mathbb{R}^{d}$, and
let $n \in \mathbb{N}$, $1 \leq p \leq \infty$. 
The Sobolev space $W^{n, p}(\Omega)$ consists of functions $u \in L^{p}(\Omega)$ such that for every multi-index $\alpha$ with $|\alpha| \leqslant k$,
$D^{\alpha} u$ exists and $D^{\alpha} u \in L^{p}(\Omega)$. Thus
\[
W^{n, p}(\Omega):=\left\{f \in L^{p}(\Omega): D^{\alpha} f \in L^{p}(\Omega) \text { for all } \alpha \in \mathbb{N}_{0}^{d} \text { with }|\alpha| \leq n\right\}.
\]
Furthermore, for $f \in W^{n, p}(\Omega)$ and $1 \leq p<\infty,$ we define the norm
\[
\|f\|_{W^{n, p}(\Omega)}:=\left(\sum_{0 \leq|\alpha| \leq n}\left\|D^{\alpha} f\right\|_{L^{p}(\Omega)}^{p}\right)^{1 / p}
\]
and
\[
\|f\|_{W^{n, \infty}(\Omega)}:=\max _{0 \leq|\alpha| \leq n}\left\|D^{\alpha} f\right\|_{L^{\infty}(\Omega)}.
\]
\end{definition}
\begin{definition}[Sobolev time-space]
Let $1 \leq p,q \leq \infty$, $m, n\in \mathbb{N}$,
$I \subset\subset \mathbb{R}$,
and  $\Omega\subset\subset \mathbb{R}^d$. Let
$W_{m,q}^{n, p}(I, \Omega)$%\equiv W^{m, q}\left(I, W^{n, p}(\Omega)\right)$
defined as follows 
$$
W_{m,q}^{n, p}(I, \Omega)=\left\{f \in L^{q}
\left(I, W^{n, p}(\Omega)\right): \partial _t^ k f \in L^{q}\left(I, W^{n, p}(\Omega)\right)
 \text { for all }  k \leq m\right\}
$$
such that
$$
\Vert f\Vert _{W_{m,q}^{n, p}(I, \Omega)}=
\sum_{k \leq m}\Vert \partial _t^ k f\Vert_{L^{q}\left(I, W^{n, p}(\Omega)\right)}
$$
when $1\leq p, q< \infty$,
with the obvious modifications when $p=\infty$ and/or $q=\infty$.
\end{definition}

\par

Note that if $n=m=0$, then $W_{0,q}^{0,p}(I, \Omega) = L^q(I, L^p(\Omega))$.
Hence, we shall write $L_t^qL_x^p(I\times\Omega) := W_{0,q}^{0,p}(I, \Omega)$, where $L_t^q$ and $L_x^p$
stand for the Legesgue integral with respect to $t\in I$ and $x\in \Omega$, respectively.

\par

Next we introduce the Sobolev time-space semi-norms in order to simplify
the notations in the proofs came in the sequel.

\begin{definition}[Sobolev time-space semi-norm]\label{def:sobolev_seminorms}
 $I \subset\subset\mathbb{R}$, $\Omega\subset\subset\mathbb{R}^d$ .
 For $n, k \in \mathbb{N}_{0}$ with $\ell \leq k$, $m \leq n, m \in \mathbb{N}$
and $1 \leq p, q \leq \infty$, 
we define for $f \in W_{k,q}^{n, p}\left(I,\Omega\right)$ the Sobolev time-space semi-norm
$$
|f|_{W_{\ell ,q}^{m, p}\left(I, \Omega\right)}:=
\left(\sum_{|\alpha|=m}\left\|D_x^{\alpha} D_t^\ell f\right\|_{L^qL^{p}(I\times\Omega)}^{p}\right)^{1 / p} \quad \text { for } 1 \leq p, q<\infty
$$
and
$$
\begin{array}{c}
|f|_{W_{\ell, q}^{m, \infty}\left(I, \Omega\right)}:=
\max _{|\alpha|=m}\left\|D_x^{\alpha} D_t^\ell f\right\|_{L_t^qL_x^{\infty}(I, \Omega)},
\end{array}
$$
with the obvious modification when $q=\infty, 1\leq p<\infty$ and when $p=q=\infty$.
\end{definition}
\begin{definition}\label{def:TaylorPolynom}
Let $I \subset\subset\mathbb{R}$, $\Omega\subset\subset\mathbb{R}^d$ and  $m\in \mathbb{N}$.
Then the Taylor polynomial of order $m$ evaluated at $(\tau, \xi)\in I\times \Omega$ is given
by
\begin{equation}\label{eq:sob_taulor_sub}
T^m _{\tau, \xi}u(t, x) = \sum_ {k+|\alpha|< m}
\frac 1{\alpha!k!}D _x^\alpha D_t ^k u(\tau, \xi)(x-\xi)^\alpha(t-\tau)^k,
\end{equation}

where $\alpha$ is the $d$-tuple of nonnegative integers and $k\in\mathbb{N}_0$.
\end{definition}

\par 

\begin{definition}[averaged Taylor polynomial]\label{def:taylor}
Let $I \subset\subset\mathbb{R}$, $\Omega\subset\subset\mathbb{R}^d$ ,
$k, n\in\mathbb{N}_0$, $m\in\mathbb{N}$,
such that $k + n \in \{0, \dots, m-1\}$,
$1\leq p, q\leq \infty$
and $u \in W_{k, q}^{n,p}(I\times\Omega)$, and let
$(t_0, x_0)\in I\times\Omega$, $r>0$ such that for the ball
$\mathrm{B}:=\{ (t,x)\in I\times \Omega \text{ such that } |t-t_0|+|x-x_0| <  r\}$
it holds that $\mathrm{B} \subset\subset \Omega$.
The corresponding \emph{Taylor polynomial of order $m$ of $u$ averaged over $\mathrm{B}$}
is defined for $(t,x)\in I\times \Omega$ as 
\begin{equation}\label{eq:sob_taylor}
Q^m u(t, x):=\int_\mathrm{B} T^m_{\tau, \xi} u(t,x)\phi(\tau, \xi)\, d\xi d\tau,
\end{equation}
where $ T^m_{\tau, \xi} u$ is the Taylor polynomial of order $m$
defined in Definition \ref{def:TaylorPolynom},
and $\phi$ is an arbitrary cut-off function supported in $\overline{\mathrm{B}}$,% i.e.
% I think that it is enough that \phi\in C_c(\mathbb{R}\rightarrow C_c(\mathbb{R}^d)) 
\(
%\phi\in C_c(\mathbb{R}\rightarrow C_c^\infty(\mathbb{R}^d))
\text{ with }\phi(t, x)\geq 0
\text{ for all }(t, x)\in\mathbb{R}\times\mathbb{R}^d,
 \text{ supp~} \phi = \overline{\mathrm{B}} \text{ and }
 \int_{\mathbb{R}}\int_{\mathbb{R}^d}\phi(t, x)\, dxdt=1.
\)
\end{definition}

A cut-off function as used in the previous definition always exists. A possible choice is
\begin{equation*}
\phi(t, x)=\begin{cases}
e^{-\left(1-(\vert{t-t_0}\vert/r)^2\right)^{-1}-\left(1-(\vert{x-x_0}\vert/r)^2\right)^{-1}},
& \text{if }\vert{t-t_0}\vert+ \vert{x-x_0}\vert<r\\
0,&\text{else}
\end{cases}
\end{equation*}
normalized by $\int_{\mathbb{R}}\int_{\mathbb{R}^d}\phi(t, x) dxdt$.
		
\par

\begin{proposition}\label{prop:remainder_estimate}
Let $C_{t,x}$ denotes the convex hull of $\{(t, x)\} \cup  \mathrm{B}$.
Then, the remainder $R^m u := u - Q^m u$ satisfies
$$
R^m u(t, x) = m \sum _{|\alpha|+ k=m}\int _{C_{t,x}} K_{ \alpha, k}(t, T; x, \Xi)
 D_x^\alpha D_t^k u(T, \Xi) \, d\Xi dT
$$
where $\Xi = x+ s(\xi-x), T  = t + s(\tau-t)$,
$K_{\alpha, k}(t, T; x, \Xi)= \frac{1}{\alpha!k!} ( x-\Xi)^{\alpha} (t-T)^k K(t, T; x, \Xi)$
and
\begin{equation}\label{eq:kernelEstim}
\left|K(t, T; x, \Xi) \right|\leq C\left(1 + \left(\left|x-x_{0}\right|+ |t- t_0|\right)/r\right)^{d+1}
			\left(|\Xi-x|+ |T- t| \right)^{-d-1}.
\end{equation}
\end{proposition}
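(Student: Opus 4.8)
The plan is to adapt the classical derivation of the Bramble--Hilbert remainder formula (in the style of Brenner--Scott, Lemma 4.3.8) to the $(1+d)$-dimensional time--space setting, reducing everything to a one-dimensional Taylor expansion along line segments followed by a change of variables. I would first treat smooth $u$; since both sides of the asserted identity depend only on the derivatives $D_x^\alpha D_t^k u$ with $|\alpha|+k=m$ and are continuous with respect to the relevant $L^q L^p$-type norms, the general case then follows by density. For smooth $u$, using $\int_{\mathrm B}\phi=1$ one writes
\[
R^m u(t,x)=\int_{\mathrm B}\bigl(u(t,x)-T^m_{\tau,\xi}u(t,x)\bigr)\,\phi(\tau,\xi)\,d\xi\,d\tau ,
\]
and applies Taylor's theorem with integral remainder to the scalar map $s\mapsto u\bigl(\tau+s(t-\tau),\,\xi+s(x-\xi)\bigr)$ on $[0,1]$. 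Expanding the $m$-th derivative of this composition by the multinomial rule, observing that the terms of order $<m$ reproduce exactly $T^m_{\tau,\xi}u(t,x)$ from Definition \ref{def:TaylorPolynom}, and substituting $s\mapsto 1-s$, yields
\[
u(t,x)-T^m_{\tau,\xi}u(t,x)=m\!\!\sum_{|\alpha|+k=m}\frac{(x-\xi)^\alpha(t-\tau)^k}{\alpha!\,k!}\int_0^1 s^{m-1}\,D_x^\alpha D_t^k u(T,\Xi)\,ds ,
\]
with $\Xi=x+s(\xi-x)$ and $T=t+s(\tau-t)$ as in the statement.

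The core step is the change of variables turning the $(s,\tau,\xi)$-integral over $(0,1)\times\mathrm B$ into an integral over $C_{t,x}$. For fixed $s$ the map $(\tau,\xi)\mapsto(T,\Xi)$ is affine with Jacobian $s^{d+1}$ (the time--space dimension being $1+d$), so $\int_{\mathrm B}F\,d\xi\,d\tau=s^{-(d+1)}\int_{\mathrm B_s}F(T,\Xi)\,d\Xi\,dT$, where $\mathrm B_s:=(t,x)+s\bigl(\mathrm B-(t,x)\bigr)$; moreover $(x-\xi)^\alpha(t-\tau)^k=s^{-m}(x-\Xi)^\alpha(t-T)^k$. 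Collecting the powers of $s$ (namely $s^{m-1}\cdot s^{-(d+1)}\cdot s^{-m}=s^{-(d+2)}$) and applying Fubini to interchange the $s$- and $(T,\Xi)$-integrations, one arrives at the stated representation of $R^m u$, with $K_{\alpha,k}(t,T;x,\Xi)=\frac1{\alpha!\,k!}(x-\Xi)^\alpha(t-T)^k K(t,T;x,\Xi)$ and
\[
K(t,T;x,\Xi)=\int_{\mathcal S(t,T;x,\Xi)}s^{-(d+2)}\,\phi\!\left(t+\tfrac{T-t}{s},\;x+\tfrac{\Xi-x}{s}\right)ds ,
\]
where $\mathcal S(t,T;x,\Xi)=\{\,s\in(0,1]:(T,\Xi)\in\mathrm B_s\,\}$; since $\mathrm B$ is convex, $\bigcup_{s\in(0,1]}\mathrm B_s$ coincides with $C_{t,x}$ up to a set of measure zero.

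It remains to prove \eqref{eq:kernelEstim}. The cut-off $\phi$ is a fixed bounded function supported in $\overline{\mathrm B}$, and since by the scaling $t=t_0+r\,t'$, $x=x_0+r\,x'$ the normalizing integral equals $c_0\,r^{d+1}$ with $c_0>0$ fixed, one has $\|\phi\|_{L^\infty}\le C_\phi\, r^{-(d+1)}$. It then suffices to bound $\mathcal S(t,T;x,\Xi)$ from below: if $(T,\Xi)\in\mathrm B_s$, then the argument of $\phi$ lies in $\mathrm B$, i.e.\ $\bigl|t+\tfrac{T-t}{s}-t_0\bigr|+\bigl|x+\tfrac{\Xi-x}{s}-x_0\bigr|<r$, which by the triangle inequality forces $\tfrac1s\bigl(|T-t|+|\Xi-x|\bigr)<r+|t-t_0|+|x-x_0|$, hence $\mathcal S(t,T;x,\Xi)\subset[\sigma_0,1]$ with $\sigma_0:=\bigl(|T-t|+|\Xi-x|\bigr)\big/\bigl(r+|t-t_0|+|x-x_0|\bigr)$. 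Therefore
\[
|K(t,T;x,\Xi)|\le\|\phi\|_{L^\infty}\int_{\sigma_0}^1 s^{-(d+2)}\,ds\le\tfrac{1}{d+1}\,\|\phi\|_{L^\infty}\,\sigma_0^{-(d+1)} ,
\]
and inserting $\sigma_0$ and the bound on $\|\phi\|_{L^\infty}$ reproduces exactly \eqref{eq:kernelEstim} (with $C=C_\phi/(d+1)$). The main obstacles I anticipate are the bookkeeping in the change of variables — identifying $\mathrm B_s$ and $\mathcal S$ precisely and justifying Fubini, the integrability being a consequence of the kernel bound itself — together with the geometric lower bound for $\mathcal S$ that produces the exponent $d+1$; the Taylor expansion and the density reduction are routine.
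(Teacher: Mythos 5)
Your proposal is correct and follows essentially the same route as the paper's proof in Appendix A.1: one-dimensional Taylor expansion with integral remainder along segments, averaging against $\phi$, the change of variables $(\tau,\xi,s)\mapsto(T,\Xi,s)$ with Jacobian factor $s^{-(d+1)}$, Fubini onto $C_{t,x}$, and the lower bound $s>\bigl(|\Xi-x|+|T-t|\bigr)/\bigl(r+|x-x_0|+|t-t_0|\bigr)$ on the $s$-domain combined with $\|\phi\|_{L^\infty}\le C r^{-(d+1)}$ to obtain \eqref{eq:kernelEstim}. The only additions beyond the paper's argument are your explicit density reduction to smooth $u$ and the scaling justification of the bound on $\|\phi\|_{L^\infty}$, both of which are harmless refinements.
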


\begin{proof}
The proof can be found in Appendix \ref{app:prop_remainder_estimate}.
\end{proof}
\par

Next we recall some geometric definitions needed for the control of the non-degeneracy of
a given family of subdivisions of a domain $\Omega$ through the so-called
chunkiness parameter. More details can be found in the discussion after
\cite[Definition 10.5.1]{brenner2007mathematical}.

\par

\begin{definition}
Let $\Omega, \mathrm{B}\subset\subset\mathbb{R}^d$, then $\Omega$ is star-shaped with respect to
$\mathrm{B}$ if, for all $x \in \Omega,$ the closed convex hull
of $\{x\} \cup \mathrm{B}$ is a subset of $\Omega$.
\end{definition}

\par

\begin{definition}
Let  $\Omega\subset \subset\mathbb{R}^d$ have diameter $diam(\Omega)>0$ and is star-shaped
with respect to a ball $\mathrm{B}$.
Let $\mathcal{R}= \{r>0: \Omega$ is star-shaped with respect to $a$ ball of radius $r\}$.
If $\mathcal{R}\neq \emptyset$, then $r_{\max }^\star = \sup\mathcal{R}$ and
 the chunkiness parameter of $\Omega$ is defined by
$$
\gamma=\frac{diam(\Omega)}{r_{\max }^\star}
$$
\end{definition}
\par

\begin{lemma}[Bramble-Hilbert]\label{lemma:bramble_hilbert}
Let $I\subset\subset\mathbb{R}$, $\Omega\subset\subset \mathbb{R}^d$,
$t_0\in I$, $x_0\in\Omega$ and $r>0$
such that $I\times\Omega$ is star-shaped with respect to $\mathrm{B}:=B_{r}(t_0, x_0)$,
and $r>(1/2)r^{\star}_{\max}$. Moreover, let $k, m, n\in\mathbb{N}$, 
such that  $k + n\in \{0,1,\ldots, m\}$, $1\leq p, q\leq\infty$ and denote
by $\gamma$ the chunkiness parameter of $I\times\Omega$.
Then there exists a constant
$C=C(m,d,\gamma)>0$ such that for all $u\in W_{m, q}^{m, p}(I, \Omega)$
\begin{equation*}
\big\vert{u-Q^m u}\big\vert_{W_{k, q}^{n,p}(I, \Omega)} \leq C h^{m-k- n}
	\big\Vert{u}\big\Vert_{W_{m, p}^{m,p }(I, \Omega)},
\end{equation*}
where $Q^m u$ denotes the Taylor polynomial of order $m$ of $u$ averaged
over $\mathrm{B}$ and $h=diam(I\times \Omega)$.
\end{lemma}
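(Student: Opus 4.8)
The plan is to follow the classical proof of the Bramble--Hilbert lemma as presented in \cite{brenner2007mathematical}, but carried out in the anisotropic time-space setting, using the explicit remainder representation from Proposition \ref{prop:remainder_estimate} as the starting point. First I would fix a multi-index $\alpha$ with $|\alpha|=n$ and an integer $\ell\le k$, and apply $D_x^\alpha D_t^\ell$ to the identity $R^m u = u - Q^m u$. The key observation is that differentiating the averaged Taylor polynomial commutes nicely: since $Q^m u$ is a polynomial in $(t,x)$ whose coefficients are integrals of derivatives of $u$ against the cut-off $\phi$, one checks that $D_x^\alpha D_t^\ell Q^m u = Q^{m-n-\ell}(D_x^\alpha D_t^\ell u)$ up to the appropriate reindexing (this is the analogue of \cite[Prop.~4.1.17 or Lemma~4.3.8]{brenner2007mathematical}). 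Consequently $D_x^\alpha D_t^\ell R^m u$ is itself the order-$(m-n-\ell)$ Taylor remainder of the function $v := D_x^\alpha D_t^\ell u$, and Proposition \ref{prop:remainder_estimate} applies to $v$ directly.

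Next I would estimate $\|D_x^\alpha D_t^\ell R^m u\|_{L^q_t L^p_x(I\times\Omega)}$ using the integral kernel representation. Writing $D_x^\alpha D_t^\ell R^m u(t,x) = (m-n-\ell)\sum_{|\beta|+j = m-n-\ell}\int_{C_{t,x}} K_{\beta,j}(t,T;x,\Xi)\, D_x^{\alpha+\beta}D_t^{\ell+j}u(T,\Xi)\,d\Xi\,dT$, the kernel bound \eqref{eq:kernelEstim} together with the star-shapedness hypothesis (which, via $r > \tfrac12 r^\star_{\max}$ and the definition of $\gamma$, gives $(|x-x_0|+|t-t_0|)/r \le 1 + h/r \lesssim_\gamma 1$) shows that the operator with kernel $K_{\beta,j}$ behaves like a fractional-integration / Riesz-potential-type operator of homogeneity degree $(|\beta|+j) - (d+1) + (d+1) = |\beta|+j = m-n-\ell$ in the combined variable. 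One then invokes the generalized Young / Minkowski integral inequality on the product domain $I\times\Omega$ to bound this operator on $L^q_t L^p_x$, picking up a factor $h^{m-n-\ell}$ from the homogeneity scaling and a constant depending only on $m,d$ and $\gamma$. Summing the resulting bounds over all $\alpha$ with $|\alpha|=n$ and taking $p$-th powers (or maxima, in the $p=\infty$ or $q=\infty$ cases) assembles the seminorm estimate
\[
|u - Q^m u|_{W^{n,p}_{k,q}(I,\Omega)} \le C h^{m-k-n}\,\|u\|_{W^{m,p}_{m,p}(I,\Omega)},
\]
after noting $m-n-\ell \ge m-n-k$ and absorbing the gap $h^{(k-\ell)+(\,\cdot\,)}\le (\mathrm{const})$ when $h$ is bounded, or more carefully by summing only over $\ell \le k$ and $|\alpha|\le n$ as the seminorm requires.

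The main obstacle I anticipate is the careful bookkeeping in the anisotropic setting: the time variable and the space variables enter the Bochner norm $L^q_t L^p_x$ asymmetrically, so the standard scalar estimates for the Riesz-potential-type kernel must be replaced by an iterated application of Minkowski's integral inequality (first in $x$, at fixed $t$, then in $t$), and one must verify that the kernel \eqref{eq:kernelEstim}, which treats $|x-\Xi|+|t-T|$ isotropically, still yields a bounded operator in the mixed norm. A secondary technical point is making the commutation formula $D_x^\alpha D_t^\ell Q^m u = Q^{m-n-\ell}(D_x^\alpha D_t^\ell u)$ precise, including checking that the hypothesis $k+n \in \{0,\dots,m\}$ guarantees all intermediate Taylor polynomials are well-defined and that $u \in W^{m,q}_{m,p}$ provides exactly the regularity needed for the right-hand side to be finite. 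Both of these are routine extensions of the scalar theory, so I expect no genuine difficulty, only length; the scaling argument giving the power $h^{m-k-n}$ is the conceptual heart and follows from homogeneity of the kernel exactly as in the isotropic case.
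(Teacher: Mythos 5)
Your proposal follows essentially the same route as the paper's proof in Appendix~A: you commute $D_x^\alpha D_t^\ell$ with the averaged Taylor remainder so that Proposition~\ref{prop:remainder_estimate} applies to the differentiated function, bound the resulting kernel operator in the mixed $L^q_tL^p_x$ norm (the paper does this via H\"older's inequality in its auxiliary Lemma~\ref{lem:mixLebesgueEstim}, which is the same Schur-type estimate you invoke through Young/Minkowski), and extract the factor $h^{m-k-n}$ by the scaling/homogeneity argument that the paper carries out by rescaling the domain at the end. The approach is correct and matches the paper's.
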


\begin{proof}
A proof can be found in Appendix \ref{app:lemma_bramble_hilbert}.
\end{proof}
\par

\begin{lemma}\label{prop:taylor_is_polynom}
Let $I\subset\subset\mathbb{R}$, $\Omega\subset\subset \mathbb{R}^d$,
$k, n\in\mathbb{N}$, $1\leq p\leq \infty$ and $u\in W_{k +n-1,p}^{k+n-1, p}( I,\Omega)$,
and let $(t_ 0, x_0)\in I\times\Omega$,
$r>0,R\geq 1$ such that for the ball $\mathrm{B}:=B_{r,|{\cdot}|}((t_0,x _0))$
it holds that $\mathrm{B} \subset\subset  I\times\Omega$ and $\mathrm{B}\subset B_{R,\Vert{\cdot}\Vert_{\ell^\infty}}(0)$.
Then the Taylor polynomial of order $n+k$ of $u$ averaged over $\mathrm{B}$ can be written as
\[
Q^{n +k} u(t, x)=\sum_{|{\alpha}|+ \kappa\leq k+ n-1} c_{\alpha, \kappa}t ^\kappa x^\alpha
\]
for $(t, x) \in I\times\Omega$. 
		
Moreover, there exists a constant $c=c(k, n,d,R)>0$ such that the coefficients $c_{\alpha, \kappa}$
are bounded with $|{c_{\alpha, \kappa}}|\leq c r^{-(d+1)/p}
\Vert{u}\Vert_{W_{k+n -1, p}^{k+ n-1, p}( I, \Omega)}$
for all $\alpha, \kappa$ with $|{\alpha}|+\kappa\leq k+ n-1$.
%One could also take $\left|{c_\alpha}\leq c r^{-d/p}\Vert{f}_{\Wkp[n-1][p][B]}$
\end{lemma}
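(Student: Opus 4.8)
\noindent\emph{Proof strategy.}
The plan is to exploit that, for each fixed base point $(\tau,\xi)\in\mathrm{B}$, the Taylor polynomial $T^{n+k}_{\tau,\xi}u(t,x)$ of Definition~\ref{def:TaylorPolynom} is by construction a polynomial in $(t,x)$ of total degree at most $n+k-1$, and that forming the averaged Taylor polynomial $Q^{n+k}u$ in \eqref{eq:sob_taylor} only integrates its finitely many coefficients against the fixed weight $\phi$. First I would expand each monomial $(x-\xi)^{\beta}(t-\tau)^{j}$ occurring in $T^{n+k}_{\tau,\xi}u$ by the multi-binomial theorem as $\sum_{\alpha\le\beta,\ \kappa\le j}\binom{\beta}{\alpha}\binom{j}{\kappa}(-\xi)^{\beta-\alpha}(-\tau)^{j-\kappa}\,x^{\alpha}t^{\kappa}$, interchange the finite sum with the integral in \eqref{eq:sob_taylor}, and collect the coefficient of each monomial $x^{\alpha}t^{\kappa}$. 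Since every surviving term satisfies $|\alpha|+\kappa\le|\beta|+j\le n+k-1$, this produces the representation $Q^{n+k}u(t,x)=\sum_{|\alpha|+\kappa\le k+n-1}c_{\alpha,\kappa}\,t^{\kappa}x^{\alpha}$, in which each $c_{\alpha,\kappa}$ is a finite linear combination, with binomial/factorial weights depending only on $k,n,d$, of integrals of the form
\[
\int_{\mathrm{B}}D_x^{\beta}D_t^{j}u(\tau,\xi)\,\xi^{\gamma}\tau^{\ell}\,\phi(\tau,\xi)\,d\xi\,d\tau,\qquad |\beta|\le k+n-1,\quad j\le k+n-1,\quad |\gamma|+\ell\le k+n-1.
\]
By the hypothesis $u\in W_{k+n-1,p}^{k+n-1,p}(I,\Omega)$ all of these derivatives belong to $L^{p}(I\times\Omega)\subset L^{1}(\mathrm{B})$ (recall $\mathrm{B}\subset\subset I\times\Omega$ is bounded), so the integrals are finite and the first assertion follows.

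For the coefficient bound I would estimate each such integral separately. Because $\mathrm{B}\subset B_{R,\|\cdot\|_{\ell^{\infty}}}(0)$ and $R\ge1$, on $\mathrm{B}$ one has $|\xi^{\gamma}\tau^{\ell}|\le R^{|\gamma|+\ell}\le R^{k+n-1}$, so Hölder's inequality with exponents $p$ and $p'$ (using $\phi\ge0$) yields
\[
\left|\int_{\mathrm{B}}D_x^{\beta}D_t^{j}u\,\xi^{\gamma}\tau^{\ell}\,\phi\,d\xi\,d\tau\right|\le R^{k+n-1}\,\big\|D_x^{\beta}D_t^{j}u\big\|_{L^{p}(\mathrm{B})}\,\|\phi\|_{L^{p'}(\mathrm{B})}.
\]
Here $\|D_x^{\beta}D_t^{j}u\|_{L^{p}(\mathrm{B})}\le\|D_x^{\beta}D_t^{j}u\|_{L^{p}(I\times\Omega)}\le\|u\|_{W_{k+n-1,p}^{k+n-1,p}(I,\Omega)}$, since $\mathrm{B}\subset\subset I\times\Omega$ and this term appears in the defining sum of $\|\cdot\|_{W_{k+n-1,p}^{k+n-1,p}(I,\Omega)}$. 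For the factor $\|\phi\|_{L^{p'}(\mathrm{B})}$ I would use the dilation structure of the cut-off: writing $\phi(\tau,\xi)=r^{-(d+1)}\phi_{0}\big((\tau-t_{0})/r,(\xi-x_{0})/r\big)$ for a fixed profile $\phi_{0}$ supported in the unit ball with $\int\phi_{0}=1$ (such $\phi_0$ exists, cf.\ the explicit example after Definition~\ref{def:taylor}), the substitution $(\tau,\xi)\mapsto((\tau-t_{0})/r,(\xi-x_{0})/r)$ on $\mathbb{R}^{d+1}$ has Jacobian $r^{d+1}$ and gives $\|\phi\|_{L^{p'}(\mathrm{B})}=r^{-(d+1)(1-1/p')}\|\phi_{0}\|_{L^{p'}}=r^{-(d+1)/p}\|\phi_{0}\|_{L^{p'}}$, while $\|\phi_{0}\|_{L^{p'}}\le\|\phi_{0}\|_{L^{\infty}}\max\{1,|\operatorname{supp}\phi_{0}|\}=:c(d)$ uniformly in $p'\in[1,\infty]$. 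Summing the finitely many ($k,n,d$-dependent) contributions to $c_{\alpha,\kappa}$ then gives $|c_{\alpha,\kappa}|\le c(k,n,d,R)\,r^{-(d+1)/p}\,\|u\|_{W_{k+n-1,p}^{k+n-1,p}(I,\Omega)}$, as claimed.

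The proof is essentially bookkeeping; the two points that need care are: (i) verifying that the multi-index expansion respects the degree bound $|\alpha|+\kappa\le k+n-1$ exactly, so that no monomials of higher degree appear and $Q^{n+k}u$ really is a polynomial of the stated degree; and (ii) ensuring the final constant is independent of $p$, which rests on the uniform bound for $\|\phi_{0}\|_{L^{p'}}$ and on the $r$-scaling producing precisely the exponent $-(d+1)/p$ dictated by $1/p+1/p'=1$. The endpoint cases $p=1$ (then $p'=\infty$, $\|\phi\|_{L^{\infty}}=r^{-(d+1)}\|\phi_{0}\|_{L^{\infty}}$) and $p=\infty$ (then $p'=1$, $\|\phi\|_{L^{1}}=1$ and $r^{-(d+1)/p}=1$) are checked directly and the same estimate persists under the obvious modifications of the norms.
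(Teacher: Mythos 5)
Your proposal is correct and follows essentially the same route as the paper's proof in Appendix C: binomial expansion of $(x-\xi)^{\alpha}(t-\tau)^{\kappa}$, interchange of the finite sum with the averaging integral, and a H\"older estimate with exponents $p,p'$ against the cut-off. The only (cosmetic) difference is in bounding $\Vert\phi\Vert_{L^{p'}(\mathrm{B})}$ by $c\,r^{-(d+1)/p}$: you obtain it by rescaling a fixed dilated profile $\phi_{0}$, whereas the paper interpolates $\Vert\phi\Vert_{L^{p'}}\leq\Vert\phi\Vert_{L^{1}}^{1/p'}\Vert\phi\Vert_{L^{\infty}}^{1/p}$ and uses $\Vert\phi\Vert_{L^{\infty}}\leq c\,r^{-(d+1)}$; both arguments implicitly fix the standard scaled cut-off and yield the same constant structure.
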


\begin{proof}
A detailed proof can be found in Appendix \ref{app:prop_taylor_is_polynom}.
\end{proof}
\par

We need the following lemma to estimate the semi-norm of a product of
weakly differentiable functions on the mixed Sobolev space.

\par

\begin{lemma}\label{lemma:product_rule_bound_p}
Let $1\leq p, q\leq \infty$, and  $I\subset\subset\mathbb{R}$, $\Omega\subset\subset \mathbb{R}^d$,
 $f\in W_{1, \infty}^{1, \infty}(I, \Omega)$, and $g\in W_{1, q}^{1, p}(I, \Omega)$,
 then $fg\in W_{1, q}^{1, p}(I, \Omega)$
and there exists a constant $C_1, C_2>0$ depend on $d$ and $p$ such that
\begin{align*}
|{fg}|_{W_{0,q}^{1,  p}((I, \Omega))}&\leq  C_1 \left(|f|_{W_{0, \infty}^{1,\infty}(I,\Omega)}
\Vert{g}\Vert_{L_t^q L_x^p(I\times\Omega)}
+ \Vert{f}\Vert_{L _t^\infty L_x^\infty}\vert{g}\vert_{W_{0,q}^{1, p}(I,\Omega)}\right),
\\[1ex]
|{fg}|_{W_{1,q}^{0,  p}(I, \Omega)} &\leq
\left|{f}\right|_{W_{1,\infty}^{0, \infty}(I, \Omega)}
 \left\Vert{g}\right\Vert_{L_t^qL_x^p(I\times\Omega)}
+\left\Vert{f }\right\Vert_{L_t^\infty L_x^\infty(I\times\Omega)}
		\left|{g}\right|_{W_{1, q}^{0,p}(I, \Omega)},
\\[1ex]
|{fg}|_{W_{1,q}^{1,  p}(I, \Omega)}&\leq
C_2\left( |{f}|_{W_{1, \infty}^{1, \infty}}\Vert{g}\Vert_{L_t^qL_x^p(I\times\Omega)}
+|{f}|_{W_{1, \infty}^{0, \infty}}
\vert{g}\vert_{W_{0, q}^{1,p}(I,\Omega)}\right.
\\
&\qquad \qquad \left.+|{f}|_{W_{0, \infty}^{1, \infty}} |{g}|_{W_{1, q}^{0, p}(I,\Omega)}
+ \Vert{f}\Vert_{L_t^\infty L_x^\infty}
\vert{ g}\vert_{W_{1,q}^{1, p}(I,\Omega)}\right).
\end{align*}
For $p=\infty$, we have $C_1=C_2=1$.
\end{lemma}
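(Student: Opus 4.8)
The plan is to obtain all three bounds from the classical Leibniz rule for weak derivatives, applied separately in the spatial variables and in the time variable, and then to take $L_t^qL_x^p$-norms, extracting the $L^\infty$-factors carried by $f$ by iterated Hölder and summing over the spatial directions by Minkowski's inequality in $\ell^p$. First I would observe that $f\in W_{1,\infty}^{1,\infty}(I,\Omega)$ means precisely that $f$, $\partial_t f$, $D_{x_j}f$ and $\partial_t D_{x_j}f$ (for $1\le j\le d$) all lie in $L^\infty(I\times\Omega)$; in particular one factor of the product $fg$ has all relevant derivatives in $L^\infty$, so that for $g\in W_{1,q}^{1,p}(I,\Omega)$ the product rule for weak derivatives applies and gives
\begin{align*}
D_{x_j}(fg) &= (D_{x_j}f)\,g + f\,(D_{x_j}g),\\
\partial_t(fg) &= (\partial_t f)\,g + f\,(\partial_t g),\\
\partial_t D_{x_j}(fg) &= (\partial_t D_{x_j}f)\,g + (D_{x_j}f)(\partial_t g) + (\partial_t f)(D_{x_j}g) + f\,(\partial_t D_{x_j}g).
\end{align*}
Each term on the right is a product of an $L^\infty(I\times\Omega)$-function --- a derivative of $f$ --- with the corresponding derivative of $g$, which lies in $L_t^qL_x^p(I\times\Omega)$; hence every such term lies in $L_t^qL_x^p(I\times\Omega)$, and this already shows $fg\in W_{1,q}^{1,p}(I,\Omega)$. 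The one point that needs more than a citation is that these Leibniz identities remain valid for the Bochner-type time derivative; I would justify this by a standard mollification argument (mollify $g$ in $(t,x)$, use the smooth product rule, and pass to the limit using $f,\partial_t f\in L^\infty$), or equivalently by Fubini together with the scalar product rule on almost every time slice and almost every line parallel to the $t$-axis.

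For the first bound, take $1\le p<\infty$ (the case $p=\infty$ being identical with maxima in place of $p$-sums). By Definition \ref{def:sobolev_seminorms} and the first identity, $|fg|_{W_{0,q}^{1,p}(I,\Omega)}=\bigl(\sum_{j=1}^d\|(D_{x_j}f)g+f(D_{x_j}g)\|_{L_t^qL_x^p}^p\bigr)^{1/p}$. Minkowski in $\ell^p$ bounds this by $\bigl(\sum_j\|(D_{x_j}f)g\|^p\bigr)^{1/p}+\bigl(\sum_j\|f(D_{x_j}g)\|^p\bigr)^{1/p}$; iterated Hölder gives $\|(D_{x_j}f)g\|\le\|D_{x_j}f\|_{L_t^\infty L_x^\infty}\|g\|_{L_t^qL_x^p}$ and $\|f(D_{x_j}g)\|\le\|f\|_{L_t^\infty L_x^\infty}\|D_{x_j}g\|_{L_t^qL_x^p}$, and then $\bigl(\sum_j\|D_{x_j}f\|_{L_t^\infty L_x^\infty}^p\bigr)^{1/p}\le d^{1/p}\max_j\|D_{x_j}f\|_{L_t^\infty L_x^\infty}=d^{1/p}|f|_{W_{0,\infty}^{1,\infty}}$, while $\bigl(\sum_j\|D_{x_j}g\|_{L_t^qL_x^p}^p\bigr)^{1/p}=|g|_{W_{0,q}^{1,p}}$; this yields the first inequality with $C_1=d^{1/p}$. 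The second bound is immediate: the seminorm $|fg|_{W_{1,q}^{0,p}}$ involves only the multi-index $\alpha=0$, hence equals $\|\partial_t(fg)\|_{L_t^qL_x^p}$, and the triangle inequality together with Hölder applied to the second identity gives it with constant $1$. The third bound repeats the argument of the first, starting now from the four-term identity for $\partial_t D_{x_j}(fg)$: the triangle inequality, Hölder (each time extracting one $L^\infty$-derivative of $f$), and Minkowski over $j=1,\dots,d$ produce four summands, each the product of a maximum over $j$ of an $L^\infty$-derivative of $f$ with an $\ell^p$-combination over $j$ of a derivative of $g$, and these are exactly the four products appearing on the right-hand side of the third inequality; since converting a $p$-sum over the $d$ directions into a maximum costs at most a factor $d^{1/p}$, the choice $C_2=d^{1/p}$ works. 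Finally, for $p=\infty$ all $\ell^p$-sums over $j$ degenerate to maxima, $\max_j(a_j+b_j)\le\max_j a_j+\max_j b_j$, no dimensional constant appears, and one obtains $C_1=C_2=1$.

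The main obstacle is concentrated entirely in the first step: justifying that the Leibniz formulas hold for the mixed weak derivatives defining $W_{1,q}^{1,p}(I,\Omega)$, and hence that $fg$ genuinely belongs to this space, so that its seminorms may be manipulated at all. Once the product rule is available, what remains is bookkeeping with Hölder's inequality and Minkowski's inequality in $\ell^p$ over the $d$ coordinate directions, the constants $C_1=C_2=d^{1/p}$ merely recording the passage from a $p$-sum over $|\alpha|=1$ to the defining maximum of the $W^{1,\infty}$-seminorms of $f$.
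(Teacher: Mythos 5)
Your proposal is correct and follows essentially the same route as the paper: establish the Leibniz identities for the weak derivatives $D_{x_j}(fg)$, $\partial_t(fg)$ and $\partial_t D_{x_j}(fg)$ (the paper cites Gilbarg--Trudinger, Ch.~7.3, where you sketch a mollification/Fubini argument — both are fine since one factor has all derivatives in $L^\infty$), and then apply the triangle inequality and H\"older in $L_t^qL_x^p$. The only cosmetic difference is that you take Minkowski in $\ell^p$ over the $d$ spatial directions directly on the seminorm as defined, obtaining the slightly sharper constants $C_1=C_2=d^{1/p}$, whereas the paper passes through the $\ell^1$-sum and absorbs the loss into a $(d,p)$-dependent constant; the conclusions are identical.
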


\begin{proof}
A proof can be found in Appendix \ref{app:lemma_product_rule_bound_p}.
\end{proof}
\par

The following corollary establishes a chain rule estimate for $W_{1, \infty}^{1,\infty}$.
\begin{lemma}\label{cor:composition_norm}
	Let $p_i, m_i\in\mathbb{N}$, for $i=1,2$, $n, k\in \{0,1\}$ such that $n+k=1$,
	$p_1+ p_2=p$, and $ m_1+m_2=m$	and let
	$\Omega_i\subset \mathbb{R}^{p_i},\, \Theta_i\subset \mathbb{R}^{m_i}$
	with $i=1,2$,
	be open, bounded, and convex. Then, there is a constant $C=C(p,m)>0$
	with the following property:
	
	If $p_1=m_1=1$, $f\in W_{k, \infty}^{n,\infty}(\Omega_1,\Omega_2)$ and
	$g\in W_{k,\infty}^{n,\infty}(\Theta_1,\Theta_2)$
	are Lipschitz continuous functions such that $ran(f)\subset \Theta_1\times\Theta_2$,
	then
	%for the composition $g\circ f$ it holds that
	$g\circ f\in W_{k, \infty}^{n,\infty}(\Omega_1,\Omega_2)$
	and we have 
	\[
		\vert g\circ f\vert_{W_{k, \infty}^{n,\infty}(\Omega_1,\Omega_2)}\leq C
		\vert g\vert_{W_{k, \infty}^{n,\infty}(\Theta_1, \Theta_2)}
		\vert f\vert_{ W_{k, \infty}^{n,\infty}(\Omega_1,\Omega_2)}.
	\]
%	and
%	\[
%		\Vert g\circ f \Vert_{W_{1, \infty}^{1,\infty}(\Omega_1,\Omega_2)} \leq C
%		\max\left\{\Vert g\Vert_{L^\infty(\Theta_1, \Theta_2)},
%		\vert g\vert_{W_{1, \infty}^{1,\infty}(\Theta_1, \Theta_2)}
%		\vert f\vert_{ W_{1, \infty}^{1,\infty}(\Omega_1,\Omega_2;\mathbb{R}^m)}\right\}.
%	\]
Moreover, if $n=k=1$ there exists $C'=C'(p,m)>0$, such that
\[
	\vert g\circ f\vert_{W_{1, \infty}^{1,\infty}(\Omega_1,\Omega_2)}
	 \leq C'\max\left(\vert f\vert_{W_{1, \infty}^{1,\infty}(\Omega_1,\Omega_2)}
	 \vert g\vert_{W_{1, \infty}^{0, \infty}(\Theta_1,\Theta_2)},
	\vert f\vert_{W_{0, \infty}^{1,\infty}(\Omega_1,\Omega_2)}^2
	\vert{g}\vert_{W_{1, \infty}^{1,\infty}(\Theta_1, \Theta_2)}\right).
\]
\end{lemma}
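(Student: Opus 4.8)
The plan is to prove the two cases separately by reducing everything to pointwise bounds on the relevant derivatives of $g\circ f$ and then integrating (here taking $L^\infty$ norms, which is easy). First I would treat the case $p_1=m_1=1$ with $n+k=1$, i.e. we differentiate $g\circ f$ either once in the "space" slot or once in the "time" slot. Since $f$ and $g$ are Lipschitz and $C^1$ in the relevant direction (as elements of $W^{n,\infty}_{k,\infty}$), the classical chain rule gives, for the space case $n=1,k=0$, $D_x^{(1)}(g\circ f) = (D_1 g)(f)\,D_x^{(1)} f_1 + (D_2 g)(f)\,D_x^{(1)} f_2$ where $f=(f_1,f_2)$ according to the splitting $\Theta_1\times\Theta_2$; taking essential suprema yields $\vert g\circ f\vert_{W^{1,\infty}_{0,\infty}}\le C\,\vert g\vert_{W^{1,\infty}_{0,\infty}}\,\vert f\vert_{W^{1,\infty}_{0,\infty}}$ with $C$ depending only on the number of summands, i.e. on $p,m$. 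The time case $k=1,n=0$ is identical with $D_x$ replaced by $D_t$. This handles the first displayed inequality.

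Next I would do the harder case $n=k=1$, where we must bound the mixed derivative $D_t D_x (g\circ f)$. The idea is to differentiate the chain-rule expression above once more in $t$. Applying the product rule and the chain rule again, $D_t D_x(g\circ f)$ becomes a finite sum of two kinds of terms: (i) terms with a first derivative of $g$ evaluated at $f$ multiplied by a mixed second derivative $D_t D_x f_j$ of a component of $f$, and (ii) terms with a second (mixed) derivative of $g$ evaluated at $f$ multiplied by a product $D_x f_j \cdot D_t f_i$ of two first-order derivatives of components of $f$. Taking $L^\infty$ norms, type-(i) terms are bounded by $\vert g\vert_{W^{1,\infty}_{0,\infty}(\Theta_1,\Theta_2)}\,\vert f\vert_{W^{1,\infty}_{1,\infty}(\Omega_1,\Omega_2)}$ and type-(ii) terms by $\vert g\vert_{W^{1,\infty}_{1,\infty}(\Theta_1,\Theta_2)}\,\vert f\vert_{W^{1,\infty}_{0,\infty}(\Omega_1,\Omega_2)}^2$; collecting and using that the number of terms depends only on $p$ and $m$ gives exactly the claimed bound with the $\max$ on the right-hand side (one may bound a sum by a constant times a max). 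I would also record that $fg$ (here $g\circ f$) lands in $W^{1,\infty}_{1,\infty}(\Omega_1,\Omega_2)$ because all these derivatives are bounded and, by Lipschitz continuity, the compositions are themselves Lipschitz, hence weakly differentiable with the computed derivatives.

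The main obstacle is purely technical rather than conceptual: justifying the chain and product rules at the level of weak derivatives for merely Lipschitz (not $C^2$) functions, and bookkeeping the multi-index combinatorics coming from the fact that $f$ is vector-valued with range split as $\Theta_1\times\Theta_2$ and $g$ is a function of two groups of variables. For the first point, one uses that a Lipschitz function composed with a $C^1$ (resp. Lipschitz) function is again Lipschitz, that $W^{1,\infty}=C^{0,1}$ on convex domains, and that for such functions the chain rule holds a.e.; an approximation/mollification argument can be invoked if one wants full rigor. For the second point, since $p_1=m_1=1$ in the first case the splitting is harmless, and in the mixed case one simply carries the indices through; the constant absorbing the number of terms is what produces the dependence $C=C(p,m)$, $C'=C'(p,m)$. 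I would keep the calculation of the second derivative of $g\circ f$ explicit but brief, then pass to norms, and that finishes the proof.
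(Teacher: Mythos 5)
Your proposal is correct and follows essentially the same route as the paper: the first-order case is the pointwise chain rule with $L^\infty$ bounds (the paper phrases this via Lipschitz constants of $f$, $g$ and $g\circ f$, which amounts to the same estimate), and for the mixed case the paper likewise differentiates $\sum_j \partial_i f_j\,(\partial_j g)\circ f$ once more, splitting into exactly your type-(i) terms (first derivative of $g$ times $D_tD_xf_j$) and type-(ii) terms (second derivative of $g$ times a product of two first derivatives of $f$), then takes the maximum of the two resulting bounds. The only cosmetic difference is that the paper reuses its already-proved first-order estimate to handle $\vert(\partial_j g)\circ f\vert_{W^{1,\infty}_{0,\infty}}$ rather than expanding the second chain rule explicitly.
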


\begin{proof}
The proof of  Lemma \ref{cor:composition_norm} can be found in Appendix \ref{app:cor_composition_norm}.
\end{proof}

%%%%%%%%%%%%%%%%%%%%%%%%%%%%%%%%%%%%%
%%%%%%%%%%%%%%%%%%%%%%%%%%%%%%%%%%%%%

\section{Mathematical theory of neural networks}\label{sec:network}
Deep  neural  networks  have  been  shown  to  perform  well  on  classification  or  regression  tasks,
that is supervised learning problems.

%
%
%Namely, supervised learning is the problem of adjusting the initial weights for a given network in order to approximate a specific mapping $f$.
%In practice, $f$ is unknown such that the mapping is given by a set of training data.  The training data or samples is a dataset of input-output pairs $(x_i,f(x_i))_{i=1}^{m}$.
%
%\par 
%
%Training is accomplished by adjusting the weights $w$ of the neural network such that a chosen objective function
%$L$ is minimized, where $L$ can be defined as follows
%$$ 
%L(w)=\frac{1}{m}\sum_{i=1}^{m} L_{i}(w)=\frac{1}{m} \sum_{i=1}^{m} l\left(\mathcal{N}\left(x_{i} \mid w\right), f\left(x_{i}\right)\right),
%$$
%for some loss function $l$.
%One can interpret that as error measure between network output $\mathcal{N}(x, w)$ and desired target output $f(x)$.
%Most popular choices for classification task are the sum-of-squared error measure and the cross-entropy error measure given by $l(x,y) = (x-y)^2$ and $l(x,y) = ylog(x)$,
%respectively, where $x$ stands for the prediction and
%$y$ for the true solution.

\par

Here we introduce the basic mathematical theory of neural networks
that will be used during this paper.
%Thus, we stress the fact that there is a difference
%between a network and its realization.
%To that aim, a neural network is defined as a sequence of weights
%in matrix-vector tuples format, while the
%realization of a neural network is the function that results from
%alternatingly applying the weights and a fixed activation function.
%Moreover, we present the notion of networks concatenation and
%parallelization, which we need in the sequel.

\par

\begin{definition}
Let $d, L \in \mathbb{N} .$ A neural network $\Phi$
with input dimension $d$ and $L$ layers is a sequence of matrix-vector tuples
$$
\Phi=\left(\left(A_{1}, b_{1}\right),\left(A_{2}, b_{2}\right),
\ldots,\left(A_{L}, b_{L}\right)\right)
$$
where $N_{0}=d$ and $N_{1}, \ldots, N_{L} \in \mathbb{N}$,
and where each $A_{\ell}$ is an $N_{\ell} \times N_{\ell-1}$ matrix,
and $b_{\ell} \in \mathbb{R}^{N_{\ell}}$.
If $\Phi$ is a neural network as above,
and if $\rho: \mathbb{R} \rightarrow \mathbb{R}$ is arbitrary,
then we define the associated realization of $\Phi$
with activation function $\rho$ as the map 
$\mathrm{R}_{\rho}(\Phi): \mathbb{R}^{d} \rightarrow \mathbb{R}^{N_{L}}$ such that
$${R}_{\rho}(\Phi)(x)=x_{L}$$
where $x_{L}$ results from the following scheme:
\begin{align*}
x_{0}:= & x \\
x_{\ell}:= &\rho\left(A_{\ell} x_{\ell-1}+b_{\ell}\right),
\quad \text{ for }\ell=1, \ldots, L-1 \\
x_{L}:= & A_{L} x_{L-1}+b_{L}
\end{align*}
where $\rho$ acts componentwise, i.e., for a given vector $y\in \mathbb{R}^{m}$,
$\rho(y)=\left[\rho\left(y_{1}\right), \ldots, \rho\left(y_{m}\right)\right]$.
\end{definition}

\par

We call $N(\Phi):=d+\sum_{j=1}^{L} N_{j}$ the number of neurons of the network $\Phi,$ while $L(\Phi):=L$ denotes the number of layers of
$\Phi .$ Moreover, $M(\Phi):=\sum_{j=1}^{L}\left(\left\|A_{j}\right\|_{\ell^{0}}+\left\|b_{j}\right\|_{\ell^{0}}\right)$ denotes the total
number of nonzero entries of all $A_{\ell}, b_{\ell},$ which we call the number of weights of $\Phi$. Finally, we refer to $N_{L}$ as the dimension of the output layer of $\Phi$, or simply as the output dimension of $\Phi$. We shall also sometimes refer to $\mathcal{A}(\Phi):= (N_0,\dots , N_L)\in \mathbb{N}^{L+1}$ as the architecture of $\Phi$.

\par

When dealing with neural networks, usually one has to fix
a specific architecture (see Definition \ref{def:architecture})
e.g., fully-connected feedforward neural networks
where  information in such architecture flows in one direction from input to output layer (via hidden nodes if any), that is
they do not form any circles or loopbacks.
More details about different architecture can be found in
e.g., \cite{KhaSohZahQue2020, ShrMah2019}.

\par

\begin{definition}\label{def:architecture}
Let $d, L \in \mathbb{N}$, a neural network architecture $\mathcal{A}$
with input dimension $d$ and $L$ layers is a sequence of matrix-vector tuples
$$
\mathcal{A}=\left(\left(A_{1}, b_{1}\right),\left(A_{2}, b_{2}\right), \ldots,\left(A_{L}, b_{L}\right)\right)
$$
such that $N_{0}=d$ and $N_{1}, \ldots, N_{L} \in \mathbb{N}$,
where each $A_{l}$ is an $N_{l} \times \sum_{k=0}^{l-1} N_{k}$ matrix,
and $b_{l}$ a vector of length $N_{l}$ with elements in $\{0,1\}$.
% So a neural network architecture is simply a neural network with binary weights.
We call $N(\mathcal{A}):=d+\sum_{j=1}^{L} N_{j}$ the number of neurons
of the architecture $\mathcal{A}, L(\mathcal{A}) = L$ the number of layers and
$M(\mathcal{A}):=
\sum_{j=1}^{L}\left(\|A_{j}\|_{\ell^{0}}+\|b_{j}\|_{\ell^{0}}\right)$
Moreover, $N_{L}$  denotes the dimension of the output layer of $\mathcal{A}$.
We say that a neural network
$\Phi=\left((A_{1}^{\prime}, b_{1}^{\prime}),(A_{2}^{\prime}, b_{2}^{\prime}), \ldots,
(A_{L}^{\prime}, b_{L}^{\prime})\right)$
with input dimension $d$ and $L$ layers has architecture $\mathcal{A}$ if
the followings are satisfied
\begin{enumerate}[(i)]
\item $N_{l}(\Phi)=N_{l}$ for all $l=1, \ldots, L$,
\item $\left[A_{l}^{\prime}\right]_{i, j} \neq 0$  implies
    $\left[A_{l}\right]_{i, j} \neq 0$ such that $l=1, \ldots, L$
    where $ i=1, \ldots, N_{l}$ and $j=1, \ldots, \sum_{k=0}^{l-1} N_{k}$.
\end{enumerate}
\end{definition}

\par

Throughout the paper, we consider the Rectified Cubic Unit (ReCU)
activation function, which is defined as follows:
\begin{equation}\label{eq:recu}
\rho_3: \mathbb{R} \rightarrow \mathbb{R}, \quad x \mapsto \max (0, x^3).
\end{equation}

To construct new neural networks from existing ones, we will frequently need to concatenate networks or put them in parallel.
Most of the following results are well-known, see for example \cite{petersen2017optimal}.
We first define the concatenation of networks.
\begin{definition}\label{def:concat}
Let $L_{1}, L_{2} \in \mathbb{N}$, and let
$$\Phi^{1}=\left(\left(A_{1}^{1}, b_{1}^{1}\right), \ldots,\left(A_{l_{1}}^{1}, b_{l_{1}}^{1}\right)\right),
\quad \Phi^{2}=\left(\left(A_{1}^{2}, b_{1}^{2}\right), \ldots,\left(A_{l_{2}}^{2}, b_{l_{2}}^{2}\right)\right)
$$
be two neural networks such
that the input layer of $\Phi^{1}$ has the same dimension as the output layer of $\Phi^{2}$. Then, $\Phi^{1} \bullet \Phi^{2}$ denotes the following $L_{1}+L_{2}-1$ layer network:
\[
\begin{aligned}
\Phi^{1} \bullet \Phi^{2}:=&\left(\left(A_{1}^{2}, b_{1}^{2}\right), \ldots,\left(A_{L_{2}-1}^{2}, b_{L_{2}-1}^{2}\right),\left(A_{1}^{1} A_{L_{2}}^{2}, A_{1}^{1} b_{L_{2}}^{2}+b_{1}^{1}\right),
\left(A_{2}^{1}, b_{2}^{1}\right), \ldots,\left(A_{L_{1}}^{1}, b_{L_{1}}^{1}\right)\right).
\end{aligned}
\]
We call $\Phi^{1} \bullet \Phi^{2}$ the concatenation of $\Phi^{1}$ and $\Phi^{2}$.
\end{definition}

\par

\begin{lemma}
Let $\Phi^{1}$ and $\Phi^{2}$ be two neural networks
where the input layer of $\Phi^{1}$ has the same dimension as the output layer of  $\Phi^{2}$,
then
\begin{equation}\label{eq:concatination_id}
{R}_{\rho_3}\left(\Phi^{1} \bullet \Phi^{2}\right)={R}_{\rho_3}\left(\Phi^{1}\right) \circ {R}_{\rho_3}\left(\Phi^{2}\right).
\end{equation}
\end{lemma}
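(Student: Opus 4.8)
The plan is to unwind the definition of the concatenation $\Phi^1\bullet\Phi^2$ given in Definition~\ref{def:concat} and track the intermediate vectors produced by the realization map. Write $\Phi^2=((A_1^2,b_1^2),\dots,(A_{L_2}^2,b_{L_2}^2))$ and $\Phi^1=((A_1^1,b_1^1),\dots,(A_{L_1}^1,b_{L_1}^1))$, and fix an input $x$ in the input space of $\Phi^2$. Running the realization scheme for $\mathrm{R}_{\rho_3}(\Phi^2)$ produces intermediate vectors $x_0=x$, $x_\ell=\rho_3(A_\ell^2 x_{\ell-1}+b_\ell^2)$ for $\ell=1,\dots,L_2-1$, and output $y:=\mathrm{R}_{\rho_3}(\Phi^2)(x)=A_{L_2}^2 x_{L_2-1}+b_{L_2}^2$. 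Similarly, feeding $y$ into $\mathrm{R}_{\rho_3}(\Phi^1)$ gives $y_0=y$, $y_\ell=\rho_3(A_\ell^1 y_{\ell-1}+b_\ell^1)$ for $\ell=1,\dots,L_1-1$, and output $\mathrm{R}_{\rho_3}(\Phi^1)(y)=A_{L_1}^1 y_{L_1-1}+b_{L_1}^1$.

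First I would apply the realization scheme directly to $\Phi^1\bullet\Phi^2$, whose layer list is
\[
\left((A_1^2,b_1^2),\dots,(A_{L_2-1}^2,b_{L_2-1}^2),(A_1^1 A_{L_2}^2,\;A_1^1 b_{L_2}^2+b_1^1),(A_2^1,b_2^1),\dots,(A_{L_1}^1,b_{L_1}^1)\right).
\]
The key observation is that the first $L_2-1$ hidden layers of $\Phi^1\bullet\Phi^2$ are identical to those of $\Phi^2$, so the realization produces exactly the same vectors $x_1,\dots,x_{L_2-1}$ as above. At the $L_2$-th layer — which is a \emph{hidden} layer of $\Phi^1\bullet\Phi^2$ since $L_2-1<L_1+L_2-1$ — the scheme applies $\rho_3$ to $(A_1^1 A_{L_2}^2)x_{L_2-1}+(A_1^1 b_{L_2}^2+b_1^1)=A_1^1(A_{L_2}^2 x_{L_2-1}+b_{L_2}^2)+b_1^1=A_1^1 y+b_1^1$, which is precisely $y_1$. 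From here the remaining layers $(A_2^1,b_2^1),\dots,(A_{L_1}^1,b_{L_1}^1)$ coincide with layers $2,\dots,L_1$ of $\Phi^1$, so by a straightforward induction the scheme reproduces $y_2,\dots,y_{L_1-1}$ and finally outputs $A_{L_1}^1 y_{L_1-1}+b_{L_1}^1=\mathrm{R}_{\rho_3}(\Phi^1)(y)=\mathrm{R}_{\rho_3}(\Phi^1)\circ\mathrm{R}_{\rho_3}(\Phi^2)(x)$, as claimed.

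The only point requiring a little care — and the closest thing to an obstacle — is the bookkeeping at the splice point: one must check that the affine layer $(A_1^1 A_{L_2}^2,\,A_1^1 b_{L_2}^2+b_1^1)$ is indeed a hidden layer (so that $\rho_3$ is applied there, matching the $\rho_3$ that would have been applied at the first hidden layer of $\Phi^1$), and that the final layer of $\Phi^1\bullet\Phi^2$ is the affine layer $(A_{L_1}^1,b_{L_1}^1)$ of $\Phi^1$ (so no spurious nonlinearity is applied at the output). Both follow from $L_1,L_2\ge 1$ and the layer count $L_1+L_2-1$. Nothing in the argument uses any special property of $\rho_3$ beyond its being applied componentwise, so the proof is purely a matter of matching indices; I would present it as the induction sketched above, perhaps with the base case ($L_1=1$ or $L_2=1$) checked separately to confirm the formula degenerates correctly.
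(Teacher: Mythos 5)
Your proof is correct and takes essentially the same route as the paper, which simply declares the identity immediate from Definition~\ref{def:concat}; you supply the explicit layer-by-layer verification that the paper omits. The only nitpick is that the inequality $L_2-1<L_1+L_2-1$ does not by itself show the spliced layer is hidden (that requires $L_1\ge 2$), but you already flag the degenerate case $L_1=1$ for separate checking, and there the spliced layer is the output layer while ${R}_{\rho_3}(\Phi^1)$ is purely affine, so the identity still holds.
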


\begin{proof}
Equality in \eqref{eq:concatination_id} is immediate and follows from the previous
Definition \ref{def:concat}.
%,  and thus it shows that the definition of concatenation is reasonable.
\end{proof}

Next we show that small neural networks are capable of emulating the identity.
\par

\begin{lemma}\label{lem:ReCU_id_representation}
Let $\rho_3$ be the ReCU,
$\Omega_r = \prod_{j=1}^{d}[-r_j, r_j]$, where $r_j>0$,
let $d \in \mathbb{N},$ and define two layers neural network
$\Phi_{d, r}^{\mathrm{ld}}:=\left(\left(A_{1}, b_{1}\right),\left(A_{2}, b_{2}\right)\right)$
with
$$
A_{1}:=\left(\begin{array}{c}
\mathrm{Id}_{\mathrm{R}^{d}} \\
-\mathrm{Id}_{\mathrm{R}^{d}} \\
\mathrm{Id}_{\mathrm{R}^{d}}\\
-\mathrm{Id}_{\mathrm{R}^{d}}
\end{array}\right) \quad
b_{1}:=\left(\begin{array}{c}
r_1+2\\
\vdots\\
r_d+2 \\
r_1,\\
\vdots\\
r_d \\
r_1\\
\vdots\\
r_d\\
r_1+2\\
\dots\\
r_d+2
\end{array}\right),$$
$$
A_{2}:=1/24\left(\begin{array}{c}
diag(1/(r_1+1), \dots, 1/(r_d+1)) \\
diag(1/(r_1+1), \dots, 1/(r_d+1)) \\
-diag(1/(r_1+1), \dots, 1/(r_d+1))\\
-diag(1/(r_1+1), \dots, 1/(r_d+1))
\end{array}\right)^t, \quad b_{2}:=0.
$$
Then, the realization  ${R}_{\rho_3}\left(\Phi_{d, r}^{\mathrm{ld}}\right)
= {Id}_{\Omega_r}$.
\end{lemma}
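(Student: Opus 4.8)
The plan is to verify directly that the two-layer network $\Phi_{d,r}^{\mathrm{ld}}$, when evaluated with the ReCU activation $\rho_3$, reproduces $x$ on $\Omega_r$. The key observation is that for a nonnegative argument $\rho_3(y)=y^3$, and that the affine pre-activations in the first layer have been chosen so that every one of the $4d$ entries of $A_1 x + b_1$ is strictly positive whenever $x\in\Omega_r$. Concretely, the four blocks produce, in coordinate $j$, the scalars $x_j+(r_j+2)$, $-x_j+r_j$, $x_j+r_j$ and $-x_j+(r_j+2)$; since $|x_j|\le r_j$ each of these lies in $[2,2r_j+2]$, so the componentwise application of $\rho_3$ gives exactly the cubes of these quantities with no clipping. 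This is the step that makes the construction work, and it is the only place where the hypothesis $x\in\Omega_r$ (equivalently $|x_j|\le r_j$) is used.

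Next I would assemble the second layer. Writing $c_j := r_j+1$, the first layer output in block-$j$ form is $\big((x_j+c_j+1)^3,\,(c_j-1-x_j)^3 \text{ shifted},\dots\big)$ — more precisely the four cubes $(x_j+c_j+1)^3$, $(c_j-x_j-1)^3$… let me instead phrase it via the pairing that matters: the relevant combination is
\[
\tfrac{1}{24\,c_j}\Big[(x_j+c_j+1)^3 + (c_j-1-x_j)^3 - (x_j+c_j-1)^3 - (c_j+1-x_j)^3\Big].
\]
The point is that this is a fixed cubic polynomial identity in $x_j$: expanding each cube, all terms of degree $3,2,0$ cancel by the symmetry of the $\pm$ signs and the $\pm 1$ shifts, and the surviving degree-one term is a constant multiple of $c_j\, x_j$, with the constant arranged so that division by $24 c_j$ returns exactly $x_j$. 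I would carry out this expansion once (it is the routine calculation referred to in the statement): one checks that $(a+1)^3-(a-1)^3 = 6a^2+2$ applied with $a=x_j+c_j$ and with $a=c_j-x_j$, then subtracts, giving $6(x_j+c_j)^2 - 6(c_j-x_j)^2 = 24 c_j x_j$; dividing by $24 c_j$ yields $x_j$, and $b_2=0$ contributes nothing. Matching this against the matrix $A_2$ (whose transpose has the four $\mathrm{diag}(1/(r_j+1))$ blocks with signs $+,+,-,-$, scaled by $1/24$) and the ordering of the first-layer blocks confirms $R_{\rho_3}(\Phi_{d,r}^{\mathrm{ld}})(x) = x$ for all $x\in\Omega_r$.

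The only genuine obstacle is bookkeeping: one must check that the ordering of the $4d$ rows produced by $A_1,b_1$ is consistent with the column ordering of $A_2^t$, so that the $+,+,-,-$ sign pattern in $A_2$ is applied to the cubes in the order $(x_j+\cdot)^3,\ (-x_j+\cdot)^3,\ (x_j+\cdot)^3,\ (-x_j+\cdot)^3$ with the constants $+2,+0,+0,+2$ in $b_1$ — which is exactly the pattern needed for the degree-$3$, degree-$2$ and degree-$0$ terms to cancel in pairs while the degree-$1$ terms reinforce. Once the indexing is pinned down, the identity is a one-line polynomial computation per coordinate and the conclusion $R_{\rho_3}\big(\Phi_{d,r}^{\mathrm{ld}}\big)=\mathrm{Id}_{\Omega_r}$ follows. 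I would also remark that $M(\Phi_{d,r}^{\mathrm{ld}})$ and $N(\Phi_{d,r}^{\mathrm{ld}})$ are $O(d)$, since this is what is used in later concatenations, although it is not part of the stated claim.
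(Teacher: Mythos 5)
Your proof is correct and follows essentially the same route as the paper, which rests on the coordinate-wise identity $\rho_3(x+r+2)+\rho_3(-x+r)-\rho_3(x+r)-\rho_3(-x+r+2)=24(r+1)x$ for $x\in[-r,r]$, verified exactly as you do via $(a+1)^3-(a-1)^3=6a^2+2$ with $a=x_j+r_j+1$ and $a=r_j+1-x_j$. One small slip: the pre-activations $-x_j+r_j$ and $x_j+r_j$ lie in $[0,2r_j]$ (not $[2,2r_j+2]$) and can vanish at the endpoints $x_j=\pm r_j$, but since they remain nonnegative, $\rho_3$ still acts as cubing there and the argument is unaffected.
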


\par

\begin{proof}
The proof of the lemma follows from the following
identity 
$$
x = \frac{1}{24(r+1)}\Big(\rho_3(x+r+2) + \rho_3(-x +r)-\rho_3(x+r) - \rho_3(-x +r+2)\Big),
$$
for any $x \in [-r, r]$ where $r>0$. The extension to general domain is straightforward,
thus the details are left for the reader.
\end{proof}
\par

\begin{remark}\label{remark:neural_sparse_concat}
%Even though we can represent the identity in a bounded domain throughout a ReCU
%neural networks as Lemma \ref{lem:ReCU_id_representation}
%shows, it is not possible to use the sparse concatenation
%in our context. Thus, 
In view of Definition \ref{def:concat}, we can bound the number of layers, neurons and weights as follows
$$L(\Phi^{1} \bullet \Phi^{2})=L^1+L^2-1\leq L^1 + L^2,$$
$$N(\Phi^{1} \bullet \Phi^{2})=N^1+N^2 - N_0^2 - N_{L_1}^1 \leq N_1 + N_2,$$
$$M(\Phi^{1} \bullet \Phi^{2})\leq M^1 +M^2 + M^1M^2.$$
\end{remark}
\par

In the current paper we need another operation between networks, which is the parallelization.
That is, one can put two networks of same length in parallel as next definition shows.

\par

\begin{definition}\label{def:parallel_net}
Let $L \in \mathbb{N}$ and let $\Phi^{1}=\left(\left(A_{1}^{1}, b_{1}^{1}\right), \ldots,\left(A_{L}^{1}, b_{L}^{1}\right)\right)$ and $\Phi^{2}=\left(\left(A_{1}^{2}, b_{1}^{2}\right), \ldots,\left(A_{L}^{2}, b_{L}^{2}\right)\right)$ be
two neural networks with $L$ layers and with $d$-dimensional input. We define
$$
{P}\left({\Phi}^{1}, \Phi^{2}\right):=\left(\left(\widetilde{A}_{1}, \widetilde{b}_{1}\right), \ldots,\left(\tilde{A}_{L}, \widetilde{b}_{L}\right)\right)
$$
where
$$
\tilde{A}_{1}:=\left(\begin{array}{c}
A_{1}^{1} \\
A_{1}^{2}
\end{array}\right), \quad \tilde{b}_{1}:=\left(\begin{array}{c}
b_{1}^{1} \\
b_{1}^{2}
\end{array}\right) \quad \text { and } \quad \tilde{A}_{\ell}:=\left(\begin{array}{cc}
A_{\ell}^{1} & 0 \\
0 & A_{\ell}^{2}
\end{array}\right), \quad \tilde{b}_{\ell}:=\left(\begin{array}{c}
b_{\ell}^{1} \\
b_{\ell}^{2}
\end{array}\right) \quad \text { for } 1<\ell \leq L.
$$
Then, ${P}\left(\Phi^{1}, \Phi^{2}\right)$ is a neural network with $d$-dimensional input and $L$ layers, called the parallelization of $\Phi^{1}$ and $\Phi^{2}$.

\end{definition}

\par

\begin{lemma}
Let $L, d\in \mathbb{N}$, $\Phi^{1}$ and  $\Phi^{2}$ be two neural networks with $L$ layers and with $d$-dimensional input.  Then,
$M\left(P\left(\Phi^{1}, \Phi^{2}\right)\right)=M\left(\Phi^{1}\right)+M\left(\Phi^{2}\right)$ ,
and

$$
{R}_{\rho_3}\left(\mathrm{P}\left(\Phi^{1}, \Phi^{2}\right)\right)(x)=\left({R}_{\rho_3}\left(\Phi^{1}\right)(x), {R}_{\rho_3}\left(\Phi^{2}\right)(x)\right), \quad \text {for any } x \in \mathbb{R}^{d}.
$$
\end{lemma}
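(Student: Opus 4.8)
The plan is to verify the two claims separately, both by direct inspection of Definition \ref{def:parallel_net}. For the weight count, I would observe that the first weight matrix $\widetilde{A}_1$ is the vertical concatenation of $A_1^1$ and $A_1^2$, so its number of nonzero entries is exactly $\|A_1^1\|_{\ell^0}+\|A_1^2\|_{\ell^0}$, and likewise $\|\widetilde{b}_1\|_{\ell^0}=\|b_1^1\|_{\ell^0}+\|b_1^2\|_{\ell^0}$. For $1<\ell\leq L$, the matrix $\widetilde{A}_\ell$ is block-diagonal with diagonal blocks $A_\ell^1$ and $A_\ell^2$ and off-diagonal blocks identically zero, hence $\|\widetilde{A}_\ell\|_{\ell^0}=\|A_\ell^1\|_{\ell^0}+\|A_\ell^2\|_{\ell^0}$, and again $\|\widetilde{b}_\ell\|_{\ell^0}=\|b_\ell^1\|_{\ell^0}+\|b_\ell^2\|_{\ell^0}$. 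Summing these identities over $\ell=1,\dots,L$ and regrouping the sum into the $\Phi^1$ part and the $\Phi^2$ part yields $M(P(\Phi^1,\Phi^2))=M(\Phi^1)+M(\Phi^2)$.

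For the realization identity, I would argue by induction over the layers that the intermediate states of $P(\Phi^1,\Phi^2)$ are precisely the concatenations of the intermediate states of $\Phi^1$ and $\Phi^2$. Denote by $x_\ell^i$ the state produced in layer $\ell$ by $\Phi^i$ on input $x\in\mathbb{R}^d$, and by $\widetilde{x}_\ell$ the corresponding state of the parallelization; note that all three networks receive the same input, $\widetilde{x}_0=x_0^1=x_0^2=x$. In the first layer, $\widetilde{A}_1 x+\widetilde{b}_1=(A_1^1 x+b_1^1,\,A_1^2 x+b_1^2)$, and since $\rho_3$ acts componentwise it follows that $\widetilde{x}_1=(\rho_3(A_1^1 x+b_1^1),\,\rho_3(A_1^2 x+b_1^2))=(x_1^1,x_1^2)$. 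Assuming $\widetilde{x}_{\ell-1}=(x_{\ell-1}^1,x_{\ell-1}^2)$ for some $1<\ell<L$, the block-diagonal form of $\widetilde{A}_\ell$ gives $\widetilde{A}_\ell\widetilde{x}_{\ell-1}+\widetilde{b}_\ell=(A_\ell^1 x_{\ell-1}^1+b_\ell^1,\,A_\ell^2 x_{\ell-1}^2+b_\ell^2)$, and applying $\rho_3$ componentwise yields $\widetilde{x}_\ell=(x_\ell^1,x_\ell^2)$. Finally, in layer $L$ no activation is applied, so $\widetilde{x}_L=\widetilde{A}_L\widetilde{x}_{L-1}+\widetilde{b}_L=(A_L^1 x_{L-1}^1+b_L^1,\,A_L^2 x_{L-1}^2+b_L^2)=(x_L^1,x_L^2)$, which is exactly $(R_{\rho_3}(\Phi^1)(x),R_{\rho_3}(\Phi^2)(x))$.

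This argument is essentially bookkeeping and I do not anticipate a genuine obstacle; the only points requiring a little care are that the two subnetworks share the common input $x$ (so the input coordinates need not be duplicated, unlike in the concatenation construction of Definition \ref{def:concat}), that the componentwise action of $\rho_3$ is precisely what makes it commute with the stacking of vectors, and that the off-diagonal zero blocks of $\widetilde{A}_\ell$ for $\ell>1$ contribute nothing either to the matrix–vector products or to the $\ell^0$ count. One should also note in passing that $P(\Phi^1,\Phi^2)$ is indeed a well-defined neural network, i.e. the dimensions of the $\widetilde{A}_\ell$ and $\widetilde{b}_\ell$ are compatible layer by layer, which is immediate from the block structure.
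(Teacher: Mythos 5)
Your proof is correct and is precisely the straightforward layer-by-layer verification that the paper itself omits (its proof is ``left for the reader''), so there is nothing to contrast. The only cosmetic remark is that your induction implicitly assumes $L\geq 2$ when it separates the ``first layer'' (with activation) from the ``last layer'' (without); for $L=1$ the first layer \emph{is} the output layer, but the same affine computation $\widetilde{A}_1x+\widetilde{b}_1=(A_1^1x+b_1^1,\,A_1^2x+b_1^2)$ still gives the claim directly.
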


\par

\begin{proof}
The proof is straightforward and therefore is left for the reader.
\end{proof}
%%%%%%%%%%%%%%%%%%%%%%%%%%%%%%%%%%%%%
%%%%%%%%%%%%%%%%%%%%%%%%%%%%%%%%%%%%%

\section{Approximations with deep ReCU neural networks in mixed Sobolev space}\label{sec:main}

%Let  $\rho:\mathbb{R}\to\mathbb{R}$ be the ReLU activation
%function.
We are interested
in approximating functions in subsets of the Sobolev space
$W^{n, p}_{k, q}((0,1), (0,1)^d)$ with realizations of neural networks. For
this we define the set:
\begin{equation}\label{eq:U}
\mathcal{U}_{k,q,n,p,d, B} :=
	\left\{
	u\in	W^{n, p}_{k, q}((0,1), (0,1)^d):
	\Vert u\Vert_{W^{n, p}_{k, q}((0,1), (0,1)^d)}\leq B
	\right\}.
\end{equation}

Next, we construct a partition of unity that can be defined as a product
of piecewise linear functions, such that each factor of the product can
be realized by a neural network.

\begin{lemma}\label{lemma:partition_of_unity}
For any $d,N\in \mathbb{N}$ there exists a collection of functions 
\[
\Psi=\left\{\phi_\mu:\mu\in\{0,\dots,N\}^{d+1}\right\}
\]
with $\phi_\mu:\mathbb{R}\times\mathbb{R}^d\to\mathbb{R}$ for all $\mu\in\{0,\dots,N\}^{d+1}$
	with the following properties:
\begin{enumerate}[(i)]
\item \label{item:pou_01} $0\leq \phi_\mu(t, x)\leq 1$ for every $\phi_\mu \in\Psi$ and
		every $(t,x)\in \mathbb{R}\times\mathbb{R}^d$; 
\item  \label{item:pou_sum} $\sum_{\phi_\mu\in\Psi}\phi_\mu(t, x)=1$ for every
	$(t, x) \in[0,1]\times[0,1]^d$;
\item \label{item:pou_supp} $supp\, \phi_\mu\subset B_{\frac{1}{N},
	\Vert{\cdot}\Vert_{\ell^\infty}}(\frac{\mu}{N})$ for every $\phi_\mu \in \Psi$;
\item \label{item:pou_derivative}  there exists a constant $c\geq 1$ such that
	$\Vert{\phi_\mu}\Vert_{W_{k, \infty}^{n, \infty}(\mathbb{R},\mathbb{R}^d)}
		\leq (c\cdot N)^{n+k}$ for $k, n\in\{0,1\}$;
\item \label{item:pou_network} there exist absolute constants $C,c\geq 1$
	such that for each $\phi_\mu\in\Psi$ there is a neural network $\Phi_\mu$ with
	$d+1$-dimensional input and $d+1$-dimensional output, with at most three layers,
	$C (d+1)$ nonzero weights and neurons, that satisfies
\[
\prod_{l=0}^d R_{\rho_3}(\Phi_\mu)(x_l) =
\prod_{l=0}^d [R_{\rho_3}(\Phi_\mu)]_l(t,x)
=\phi_\mu(t, x), \quad  \text{ where } x_0  = t
\]
and $\Vert[R_{\rho_3}(\Phi_\mu)]_l\Vert_
{W_{k, \infty}^{n, \infty}((0,1),(0,1)^d)}\leq (cN)^{n +k}$
for all $l=0,\ldots,d$ such that $k, n\in\{0,1\}$.
\end{enumerate}
\end{lemma}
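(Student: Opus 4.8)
The plan is to reduce the whole statement to a single fixed one-dimensional construction. First I would fix a generator $g\colon\mathbb{R}\to[0,1]$ with the following properties: $g\in C^{2}(\mathbb{R})$, $g$ is piecewise a polynomial of degree $\le 3$ with finitely many breakpoints $\xi_{1}<\dots<\xi_{m}$, $\operatorname{supp}g=[-1,1]$, and $\sum_{j\in\mathbb{Z}}g(y-j)=1$ for all $y\in\mathbb{R}$. Such a $g$ exists: taking $g$ even and supported in $[-1,1]$, the partition‑of‑unity identity is equivalent to $g(y)+g(1-y)=1$ on $[0,1]$, and one can glue $g$ on $[0,1]$ from cubic pieces on $[0,\tfrac14],[\tfrac14,\tfrac12],[\tfrac12,\tfrac34],[\tfrac34,1]$ with $C^{2}$ junctions, with $g$ vanishing to second order at $1$, solving this identity (a short explicit linear computation). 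A single cubic ``smoothed tent'' such as $1-3y^{2}+2y^{3}$ is only $C^{1}$ at the end of its support and the cubic $B$‑spline has support of length $4$, so allowing a few pieces is what makes all requirements compatible. Having fixed $g$, I would set, for $\mu\in\{0,\dots,N\}^{d+1}$,
\[
\phi_\mu(t,x):=\prod_{l=0}^{d}g\bigl(Nx_{l}-\mu_{l}\bigr),\qquad x_{0}:=t,
\]
and $\Psi:=\{\phi_\mu\}$.

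Properties (\ref{item:pou_01})--(\ref{item:pou_derivative}) then follow by direct computation. (\ref{item:pou_01}) holds because each factor lies in $[0,1]$. For (\ref{item:pou_sum}), $\sum_{\mu}\phi_\mu(t,x)=\prod_{l=0}^{d}\bigl(\sum_{\mu_{l}=0}^{N}g(Nx_{l}-\mu_{l})\bigr)$, and for $x_{l}\in[0,1]$ the terms with $\mu_{l}\notin\{0,\dots,N\}$ vanish (their argument has modulus $\ge1$), so each inner sum equals $\sum_{j\in\mathbb{Z}}g(Nx_{l}-j)=1$. For (\ref{item:pou_supp}), $\operatorname{supp}\phi_\mu=\prod_{l}[\tfrac{\mu_{l}-1}{N},\tfrac{\mu_{l}+1}{N}]$, the closed $\Vert\cdot\Vert_{\ell^{\infty}}$‑ball of radius $1/N$ about $\mu/N$, which is the content of (\ref{item:pou_supp}). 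For (\ref{item:pou_derivative}), applying $D_{x}^{\alpha}D_{t}^{\ell}$ with $|\alpha|+\ell=n+k\le2$ to the product hits at most two of the $d+1$ factors, each hit factor contributing a multiplicative $N$ and a derivative of $g$ of modulus $\le\Vert g'\Vert_{\infty}+\Vert g''\Vert_{\infty}$ while the other factors are bounded by $1$; hence $\Vert\phi_\mu\Vert_{W^{n,\infty}_{k,\infty}}\le(C_{g}N)^{n+k}$ with $C_{g}:=1+\Vert g'\Vert_{\infty}+\Vert g''\Vert_{\infty}$ an absolute constant, i.e.\ (\ref{item:pou_derivative}) with $c:=\max(1,C_{g})$.

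For (\ref{item:pou_network}) the key remark is that a $C^{2}$ cubic spline with compact support is automatically a finite linear combination of shifted ReCUs: if $J_{i}$ denotes the jump of $g'''$ at $\xi_{i}$, then $g-\sum_{i=1}^{m}\tfrac{J_{i}}{6}\rho_{3}(\,\cdot-\xi_{i})$ is $C^{3}$ and piecewise cubic, hence a single cubic polynomial, and since it vanishes on $(-\infty,\xi_{1})$ it vanishes identically; thus $g=\sum_{i=1}^{m}\tfrac{J_{i}}{6}\rho_{3}(\,\cdot-\xi_{i})$. Therefore $y\mapsto g(Ny-\mu_{l})$ is realised exactly by a two‑layer ReCU network of size $O(1)$ (hidden layer: the units $\rho_{3}(Ny-\mu_{l}-\xi_{i})$, $i=1,\dots,m$; output layer: the linear combination). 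Composing the $l$‑th such network with the coordinate map $(t,x)\mapsto x_{l}$ (absorbed into its first layer) and putting the resulting $d+1$ networks in parallel as in Definition \ref{def:parallel_net} produces a network $\Phi_\mu$ with $(d+1)$‑dimensional input and output, two layers (in particular at most three), $O(d+1)$ weights and neurons, with $[R_{\rho_{3}}(\Phi_\mu)]_{l}(t,x)=g(Nx_{l}-\mu_{l})$, so that $\prod_{l=0}^{d}[R_{\rho_{3}}(\Phi_\mu)]_{l}=\phi_\mu$; the bound $\Vert[R_{\rho_{3}}(\Phi_\mu)]_{l}\Vert_{W^{n,\infty}_{k,\infty}}\le(cN)^{n+k}$ is the one‑factor instance of the estimate used for (\ref{item:pou_derivative}).

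The only step requiring genuine thought is the existence of $g$: it must simultaneously be $C^{2}$ (forced, since every ReCU realisation is $C^{2}$), have support inside an interval of length $2$ (so that the $(d+1)$‑fold product lands in the correct $\ell^{\infty}$‑ball), and generate a partition of unity under integer shifts — and the standard tent or $B$‑spline candidates each violate one of these, so one has to assemble $g$ from several cubic pieces. Once $g$ is in hand, everything else is the bookkeeping above: the product rule for (\ref{item:pou_derivative}), and the spline‑to‑ReCU identity together with parallelization for (\ref{item:pou_network}).
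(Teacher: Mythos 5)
Your proposal is correct and follows essentially the same route as the paper: the paper also builds a $C^{2}$, compactly supported, piecewise-cubic bump $\psi$ that is exactly a finite linear combination of shifted ReCUs, forms a partition of unity under shifts, tensorizes it over the $d+1$ coordinates, and realizes each factor by a two-layer ReCU network put in parallel. The one small slip is in (\ref{item:pou_supp}): with $\operatorname{supp}g=[-1,1]$ exactly, $\operatorname{supp}\phi_\mu$ is the \emph{closed} $\ell^{\infty}$-ball of radius $1/N$, which is not contained in the open ball $B_{1/N,\Vert\cdot\Vert_{\ell^\infty}}(\mu/N)$ as the lemma (and the paper's definition of $B_{r,\Vert\cdot\Vert}$) requires; the paper avoids this by taking the bump supported on $[-2,2]$ and scaling the argument by $3N$, so each factor's support has radius $2/(3N)<1/N$, and your construction is repaired the same way by taking $\operatorname{supp}g=[-\beta,\beta]$ for some $\tfrac12<\beta<1$.
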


\par

\begin{proof}
As in \cite{yarotsky2017error}, we define the functions
\[
\psi:\mathbb{R}\to\mathbb{R},\qquad\psi(x):=
\begin{cases}
54 + 81 x + \frac{81}2 x^2 + \frac {27}4 x^3 , &x \in [-2, -\frac {5}{3}],
\\
-\frac {17}2 - \frac {63}2 x - 27 x^2 - \frac{27}{4}x^3, & x \in [-\frac {5}{3}, -1],
\\
\frac 14 (20 + 36 x + 54 x^2 + 27 x^3), &x\in [-1, -\frac 23],
\\
1, & x \in (-\frac{2}3, \frac 23),
\\
\frac 14(20 - 36 x + 54 x^2 - 27 x^3),& x \in  [\frac 23, 1],
\\
-\frac {17}2 + \frac {63}2 x - 27 x^2 + \frac{27}4 x^3, & x \in [1, \frac 53],
\\
54 - 81 x + \frac {80}2 x^2 - \frac{27}4 x^3, &x \in [\frac 53, 2]
\\
0, &x \in \mathbb{R}\setminus (-2, 2).
\end{cases}
\]

\begin{figure}[!htb]
	%\center{\includegraphics[scale=0.75]{figures/bump.png}}
	    \begin{tikzpicture}
        \begin{axis}%[xlabel=$x$, ylabel=$y$]
    
        % function that should use the max(a,b) function
        \addplot [domain=-4:4, samples=100] {2/3*(max(0, 3/2*x +3)^3 -2*max(0, 3/2*x +5/2)^3 +2*max(0, 3/2*x +3/2)^3 -max(0, 3/2*x +1)^3-max(0, 3/2*x -1)^3 +2*max(0, 3/2*x -3/2)^3 - 2*max(0, 3/2*x -5/2)^3 +max(0, 3/2*x -3)^3)};
        \addlegendentry{$\psi$}
    
        \end{axis}
    \end{tikzpicture}
	\caption{\label{fig:bump} The bump function $\psi(x)$}
\end{figure}
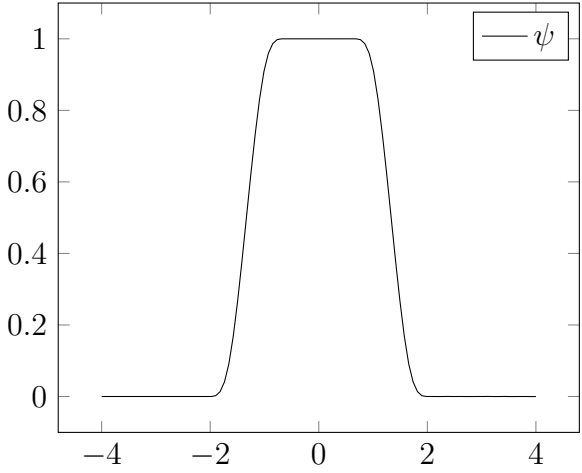

The function $\phi_\mu:\mathbb{R}\times\mathbb{R}^d\to\mathbb{R}$
is a product of scaled and shifted versions of $\psi$.
Concretely, we set
\begin{equation}\label{eq:partition_unity}
\phi_\mu(t, x):=\psi\left(3N\left(t-\frac{\mu_0}{N}\right)\right)
\times\prod_{l=1}^d \psi\left(3N\left(x_l-\frac{\mu_l}{N}\right)\right),
\end{equation}
for $\mu=(\mu_0,\mu_1,\ldots,\mu_d)\in\{0,\dots,N\}^{d+1}$.
Then, (\ref{item:pou_01}),(\ref{item:pou_sum}) and (\ref{item:pou_supp})
follow easily from the definition.

To show (\ref{item:pou_derivative}), note that
$\Vert{\phi_\mu}\Vert_{L_t^\infty L_x^\infty}\leq 1$
follows already from (\ref{item:pou_01}). 
We need to show that the claim holds for $\frac\partial{\partial x_l}\phi_\mu(t,x)$,
$\frac\partial{\partial t}\phi_\mu(t, x)$ and 
$\frac\partial{\partial x_l}\frac\partial{\partial t}\phi_\mu(t,x)$.
For this, let $l\in\{1,\ldots,d\}$ $t\in \mathbb{R}$, and $x\in\mathbb{R}^d$,
then, using the fact that $|\psi(x)'|\leq \frac 92$ for any $x \in \mathbb{R}$, we get
\begin{align*}
\left\vert{\frac\partial{\partial x_l}\phi_\mu(x)}\right\vert&=
\left\vert{\psi\left(3N\left(t-\frac{\mu_0}{N}\right)\right)\prod_{i=1,i\neq l}^d\psi
\left(3N\left(x_i-\frac{\mu_l}{N}\right)\right)}\right\vert
\left\vert{\psi'\left(3N\left(x_l-\frac{\mu_l}{N}\right)\right)3N}\right\vert
\\
		&\leq c N,
\\
\left\vert{\frac\partial{\partial t}\phi_\mu(x)}\right\vert&=
\left\vert{\psi\left(3N\left(t-\frac{\mu_0}{N}\right)\right)\prod_{i=1}^d\psi
\left(3N\left(x_i-\frac{\mu_l}{N}\right)\right)}\right\vert
\left\vert{\psi'\left(3N\left(t-\frac{\mu _0}{N}\right)\right)3N}\right\vert
\\
		&\leq cN,
\\
\left\vert{\frac\partial{\partial x_l}\frac\partial{\partial t}\phi_\mu(x)}\right\vert&=
\left\vert{\psi\left(3N\left(t-\frac{\mu_0}{N}\right)\right)\prod_{i=1,i\neq l}^d\psi
\left(3N\left(x_i-\frac{\mu_l}{N}\right)\right)}\right\vert
\\
&\qquad\qquad\times\left\vert{\psi'\left(3N\left(t-\frac{\mu _0}{N}\right)\right)3N}\right\vert
\left\vert{\psi'\left(3N\left(x_l-\frac{\mu_l}{N}\right)\right)3N}\right\vert
\\
		&\leq (cN)^2,
\end{align*}
where $c> \frac {27}2$ is a suitable constant.
It follows that $\Vert{\phi_\mu}\Vert_{W_{k, \infty}^{n,\infty}}\leq (cN)^{n+k}$.

The proof of (\ref{item:pou_network}), is given by constructing a network $\Phi_\psi$
that realizes the function $\psi$. Thus, let 
\[
	A_1:=\frac 32\left[ \begin{array}{c}
	1\\[1ex]
	1\\[1ex]
	1\\[1ex]
	1\\[1ex]
	1\\[1ex]
	1\\[1ex]
	1\\[1ex]
	1 \end{array}\right],
	b_1:=\frac 12\left[ \begin{array}{c}
	6\\[1ex]
	5\\[1ex]
	3\\[1ex]
	2\\[1ex]
	-2\\[1ex]
	-3\\[1ex]
	-5\\[1ex]
	-6
\end{array} \right] \;\text{and } 
	A_2:=\frac 23 \left[ \begin{array}{c|c c c c c c c c}
	0&1&-2&2&-1&-1&2&-2&1
	\end{array} \right],\;
	b_2:=0,
\]
and $\Phi_{\psi}:=((A_1,b_1),(A_2,b_2))$. Then $\Phi_{\psi}$ is a two-layer
network with one-dimensional input and one-dimensional output, with $24$
nonzero weights and $10$ neurons such that
\[
R_{\rho_3}(\Phi_{\psi})(x)=\psi(x)\quad\text{for all}\quad x\in\mathbb{R}.
\]
The remaining part of the proof is similar to \cite[Lemma C.3$(\mathrm{v})$]{GuhKutPet}.
The details are left to the reader.
\end{proof}
%and $\Phi_{\psi}:=((A_1,b_1),(A_2,b_2))$. Then $\Phi_{\psi}$ is a two-layer
%network with one-dimensional input and one-dimensional output, with $12$
%nonzero weights and $6$ neurons such that
%\[
%R_\rho{\Phi_{\psi}}(x)=\psi(x)\quad\text{for all}\quad x\in\mathbb{R}.
%\]
%We denote by $\Phi_{m,l}$ the one-layer network with $d+1$-dimensional
%input and one-dimensional output, with $2$ nonzero weights and $d+2$
%neurons such that $R _\rho{\Phi_{m,l}}=3N(x_l-m_l/N)$ for all
%$x\in\mathbb{R}^d$ and $m\in\{0,\dots,N\}^d$, $l=1,\ldots,d$.
%Finally, we define
%\[
%\Phi_m:=P(\Phi_{\psi}\odot\Phi_{m,l}:l=1,\ldots,d),
%\]
%which is a three-layer network with $d$-dimensional input and $d$-dimensional output,
%with at most $d\cdot2\cdot(12+2)=d\cdot 28$ nonzero weights and at most $9d$ neurons,
%and
%\[
%[R _\rho{\Phi_m}]_l(x)=R _\rho{\Phi_{\psi}\odot\Phi_{m,l}}(x)
%=\psi\left(3N\left(x_l-\frac{m_l}{N}\right)\right)
%\]
%for $l=1,\ldots,d$ and $x\in\mathbb{R}^d$. Clearly, it follows that
%$\prod_{l=1}^d [R _\rho{\Phi_m}]_l(x)=\phi_m(x)$ for all $x\in\mathbb{R}^d$.
%The last part of (\ref{item:pou_network}) can be shown similarly as (\ref{item:pou_derivative}).

\par

Next we show that any function in the space $W_{m,p}^{m,p}((0,1), (0,1)^d)$, can be approximated
by a sum of localized polynomial of degree at most $m-1$. This makes  Lemma \ref{lemma:polynomial_approximation}
one of the main ingredient in our strategy to proof the main result in Theorem \ref{thm:main}.
\par

\begin{lemma}\label{lemma:polynomial_approximation}
Let $d,N\in \mathbb{N}$, $n, k\in \{0,1\}$, and $m\in\mathbb{N}$
such that $m \geq n+k+ 1$,
$1\leq p, q\leq \infty$ and
$\Psi=\Psi(d+1,N)=\left\{\phi_\mu:\mu\in \{0,\dots,N\}^{d+1}\right\}$
be the partition of unity
from Lemma \ref{lemma:partition_of_unity}.
Then there is a constant $C=C(d+1,m,p)>0$ such that for any
$u \in W_{m, p}^{m,p}((0,1),(0,1)^d)$, there exist polynomials
$p_{u,\mu}(t, x)=\sum_{|\alpha|+\kappa\leq m-1}c_{\mu,\kappa, \alpha}
t^\kappa x^\alpha$ for $\mu\in \{0,\dots,d\}^{d+1}$
with the following properties:
	
Let $u_N:=\sum_{\mu\in \{0,\dots,N\}^{d+1}}\phi_\mu p_{u,\mu}$,
then the operator
$$T_N:W_{m,p}^{m,p}((0,1),(0,1)^d)\to W_{k,q}^{n,p}((0,1),(0,1)^d)$$
with $T_ N  u=u-u_N$ is linear and bounded with
\[
\Vert{T_ N  u}\Vert_{W_{k,q}^{n,p}((0,1),(0,1)^d)}
\leq C\left(\frac{1}{N}\right)^{m- n-k}\Vert{u}\Vert_{W_{m,p}^{m,p}((0,1),(0,1)^d)}.
\]
Furthermore, there is a constant $c=c(m, d+1)>0$ such that for any
$u\in W_{m,p}^{m,p}((0,1),(0,1)^d)$ the coefficients of the polynomials $p_{u,\mu}$ satisfy 
\[
\left|{c_{\mu,\kappa, \alpha}}\right|
\leq cN^{(d+1)/p}\Vert{U}\Vert_{W_{m, p}^{m, p}(\Omega_{\mu,N})}
\]
for all $\kappa \in \mathbb{N}_0$, $\alpha\in\mathbb{N}^d_0$ with
$|{\alpha}| + \kappa \leq m-1$ and $\mu\in\{0,\ldots,N\}^{d+1}$, where
$\Omega_{\mu,N}:=B_{\frac{1}{N},\Vert{\cdot}\Vert_{\ell^\infty}}\left(\frac{m}{N}\right)$
and $U\in W_{m,q}^{m,p}(\mathbb{R}, \mathbb{R}^d)$ is an extension of $u$.
\end{lemma}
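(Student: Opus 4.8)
The plan is to build $u_N$ by applying the local averaged Taylor polynomial machinery on each ball $\Omega_{\mu,N}$ and then gluing the pieces together with the partition of unity $\Psi$. First I would use an extension operator: since $(0,1)\times(0,1)^d$ is a nice (Lipschitz) domain, there is a bounded linear extension $E\colon W_{m,p}^{m,p}((0,1),(0,1)^d)\to W_{m,p}^{m,p}(\mathbb R,\mathbb R^d)$ with $\|U\|_{W_{m,p}^{m,p}}\le C\|u\|_{W_{m,p}^{m,p}}$, where $U:=Eu$; working with $U$ lets me average Taylor polynomials over balls $\Omega_{\mu,N}=B_{1/N,\|\cdot\|_{\ell^\infty}}(\mu/N)$ that poke outside the unit cube near the boundary. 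For each $\mu\in\{0,\dots,N\}^{d+1}$ I set $p_{u,\mu}:=Q^m U$, the order-$m$ Taylor polynomial of $U$ averaged over a ball $\mathrm B_\mu$ (an $\ell^2$-ball of radius $\sim 1/N$ around $\mu/N$ inscribed in $\Omega_{\mu,N}$, rescaled so the hypotheses of Lemma~\ref{prop:taylor_is_polynom} and Lemma~\ref{lemma:bramble_hilbert} with $r>\tfrac12 r^\star_{\max}$ hold). By Lemma~\ref{prop:taylor_is_polynom} this is genuinely a polynomial $\sum_{|\alpha|+\kappa\le m-1}c_{\mu,\kappa,\alpha}t^\kappa x^\alpha$, and the same lemma (with $R\sim 1$, $r\sim 1/N$) gives the coefficient bound $|c_{\mu,\kappa,\alpha}|\le c\, r^{-(d+1)/p}\|U\|_{W_{m,p}^{m,p}(\Omega_{\mu,N})}\le c\,N^{(d+1)/p}\|U\|_{W_{m,p}^{m,p}(\Omega_{\mu,N})}$, which is exactly the last assertion.

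Next I would estimate $\|T_Nu\|_{W_{k,q}^{n,p}}=\|u-u_N\|$ where $u_N=\sum_\mu\phi_\mu p_{u,\mu}$. The key algebraic identity is that on $[0,1]\times[0,1]^d$ we have $\sum_\mu\phi_\mu\equiv 1$ (property~(ii) of Lemma~\ref{lemma:partition_of_unity}), so
\[
u-u_N=\sum_{\mu}\phi_\mu\,(u-p_{u,\mu})=\sum_{\mu}\phi_\mu\,(U-Q^mU)
\]
pointwise on the cube. Because $\operatorname{supp}\phi_\mu\subset B_{1/N,\|\cdot\|_{\ell^\infty}}(\mu/N)$, at any fixed $(t,x)$ only a bounded number (depending on $d$ only) of terms are nonzero, so the sum is locally finite and the $L^qL^p$-norm of the sum is controlled by the $\ell^q$/$\ell^p$ aggregate of the local norms. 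On each patch I bound $\|\phi_\mu(U-Q^mU)\|_{W_{k,q}^{n,p}(\Omega_{\mu,N})}$ using the product-rule estimate Lemma~\ref{lemma:product_rule_bound_p}: it splits into terms of the form $\|\phi_\mu\|_{W_{a,\infty}^{b,\infty}}\,|U-Q^mU|_{W_{k-a,q}^{n-b,p}(\Omega_{\mu,N})}$ with $a\le k$, $b\le n$. The factor $\|\phi_\mu\|_{W_{a,\infty}^{b,\infty}}\le (cN)^{a+b}$ comes from property~(iv), and the Bramble--Hilbert Lemma~\ref{lemma:bramble_hilbert} applied on $\mathrm B_\mu$ (diameter $h\sim 1/N$) gives $|U-Q^mU|_{W_{k-a,q}^{n-b,p}(\Omega_{\mu,N})}\le C h^{m-(k-a)-(n-b)}\|U\|_{W_{m,p}^{m,p}(\Omega_{\mu,N})}\sim C N^{-(m-k-n+a+b)}\|U\|_{W_{m,p}^{m,p}(\Omega_{\mu,N})}$. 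The powers of $N$ combine to $N^{a+b}\cdot N^{-(m-k-n+a+b)}=N^{-(m-n-k)}$, independent of $a,b$, so every term contributes at the desired rate.

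Summing over $\mu$, the bounded overlap of the balls $\Omega_{\mu,N}$ means $\sum_\mu\|U\|_{W_{m,p}^{m,p}(\Omega_{\mu,N})}^p\le C\|U\|_{W_{m,p}^{m,p}(\mathbb R,\mathbb R^d)}^p$ (and similarly for the $q$-aggregation in $t$, with the obvious modifications when $p$ or $q$ is $\infty$), and then the extension bound turns $\|U\|_{W_{m,p}^{m,p}}$ into $\|u\|_{W_{m,p}^{m,p}((0,1),(0,1)^d)}$. This yields $\|T_Nu\|_{W_{k,q}^{n,p}}\le C N^{-(m-n-k)}\|u\|_{W_{m,p}^{m,p}}$; linearity of $T_N$ is clear since $u\mapsto U\mapsto Q^mU$ and $u\mapsto u_N$ are each linear, and boundedness is the estimate just proved. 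I expect the main obstacle to be the careful bookkeeping in two places: first, verifying the geometric hypotheses (star-shapedness, $r>\tfrac12 r^\star_{\max}$, $\mathrm B\subset\subset\Omega$, and the scaling of $r$ with $1/N$) so that Lemmas~\ref{lemma:bramble_hilbert} and~\ref{prop:taylor_is_polynom} genuinely apply on the boundary patches, which is exactly why the extension $U$ is needed; and second, correctly tracking the anisotropic $L^q_tL^p_x$ structure through the product rule and the overlap-counting so that the $t$-integrability (exponent $q$) and $x$-integrability (exponent $p$) are aggregated with the right exponents — the mixed norm is what makes this more delicate than the classical isotropic Bramble--Hilbert argument.
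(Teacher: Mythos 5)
Your proposal is correct and follows essentially the same route as the paper's proof: extend $u$ to $U=Eu$, take $p_{u,\mu}$ to be the order-$m$ averaged Taylor polynomial of $U$ over a ball of radius $\sim 1/N$ inside $\Omega_{\mu,N}$ (Lemma \ref{prop:taylor_is_polynom} for the polynomial form and coefficient bound), write $u-u_N=\sum_\mu\phi_\mu(U-p_{u,\mu})$ via the partition of unity, combine Bramble--Hilbert with the product rule and the bound $\Vert\phi_\mu\Vert_{W_{a,\infty}^{b,\infty}}\leq (cN)^{a+b}$ so that the powers of $N$ cancel to $N^{-(m-n-k)}$ on each patch, and conclude by bounded overlap plus continuity of the extension operator. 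The only (immaterial) difference is in the final aggregation over $\mu$, where the paper simply sums the local norms with combinatorial factors $3^d$, $2^d$ rather than invoking an $\ell^p/\ell^q$ aggregation.
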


\par

\begin{proof}
We proceed in a similar way as the proof of
\cite[Theorem 1]{GuhKutPet}.
Thus, we need the notion of the extension operator on the anisotropic Sobolev
space cf. \cite{Shvartsman}, which is a  generalization of Stein theorem for the extension
operator to anisotropic Sobolev spaces.
That is, we can define the operator
$E: W_{k, q}^{n,p }((0,1), (0,1)^d)\to W_{k, q}^{n,p }(\mathbb{R}, \mathbb{R}^d)$
as the extension operator.
%Note that for arbitrary $I \subset \mathbb{R}$,
% $\Omega \subset\mathbb{R}^d$ and $1\leq k +n\leq m$ it holds
%%%
%\begin{equation}\label{eq:extension_bound}
%\left|{U}\right|_{ W_{k, q}^{n,p }(I, \Omega)}
%\leq \Vert{U}\Vert_{W_{k, q}^{n,p }(\mathbb{R}, \mathbb{R}^d)}
%\leq C_E \Vert{u}\Vert_{ W_{m, q}^{m,p }(I, \Omega)},
%\end{equation}
%%%
%	where $C_E=C_E(m,p,d)$ is the norm of the extension operator. 

Moreover, we use approximation properties of averaged Taylor polynomials
(see Bramble-Hilbert Lemma~\ref{lemma:bramble_hilbert}) to derive local estimates
and then combine them using a partition of unity to obtain a global estimate.
Following similar approach as in \cite{GuhKutPet}

\par

\textbf{Step 1 (Averaged Taylor polynomials)}:
Let $ U:=Eu$ and  $\mu\in \{0,\dots,N\}^{d+1}$, we set
\[
\Omega_{\mu,N}:=B_{\frac{1}{N},\Vert{\cdot}\Vert_{\ell^\infty}}
\Big(\frac{\mu}{N}\Big) \quad \text{and}\quad 
B_{\mu,N} := B_{\frac{3}{4N},|{\cdot}|}\Big(\frac{\mu}{N}\Big),
\]
and $p_\mu=p_{u,\mu}$ stands for the Taylor polynomial of order $m$ of $U$
averaged over $B_{\mu,N}$ (cf.\ Definition~\ref{def:taylor}).
It follows from Lemma~\ref{prop:taylor_is_polynom}
%(for $\Omega=\Omega_{m,N}$, $B=B_{\mu,N}$ and $R=2$)
that we can write
$p_\mu=\sum_{|\alpha|+ \kappa \leq m-1}c_{\mu,\kappa,\alpha}t^\kappa x^\alpha$,
such that for $c'=c'(m,d+1)>0$, we have
\[
\left|{c_{\mu,\kappa, \alpha}}\right|
\leq c'\left(\frac{3}{4N}\right)^{-(d+1)/p}
\Vert{U}\Vert_{W_{m, p}^{m, p}(\Omega_{\mu,N})}
\leq c''N^{(d+1)/p}\Vert{U}\Vert_{W_{m, p}^{m, p}(\Omega_{\mu,N})},
\]
where $c''$ is a nonnegative constant depends on $m$ and $d+1$.

\par

\textbf{Step 2 (Local estimates in $\Vert{\cdot}\Vert_{W_{k,q}^{n,p}}, k,n \in\{0,1\}$)}:
To check that the conditions of the Bramble-Hilbert Lemma \ref{lemma:bramble_hilbert}
are fulfilled, note that $B_{\mu,N}\subset\subset\Omega_{\mu,N}$.
Furthermore, $B_{\mu,N}$ is a ball in $\Omega_{\mu,N}$ such that $\Omega_{\mu,N}$
is star-shaped with respect to $B_{\mu,N}$. Moreover,
$diam_{|{\cdot}|}(\Omega_{\mu,N})
=(S\sqrt {d+1})$ where $S= 2/N$, $r^\star_{\text{max}}(\Omega_{\mu,N})=1/N$
and,
$r_{|{\cdot}|}\left(B_{\mu,N}\right)
>
\frac{1}{2}\cdot r^\star_{\text{max}}(\Omega_{\mu,N})$.
%\[
%r_{|{\cdot}|}\left(B_{\mu,N}\right)
%	=\frac{3}{4N}>\frac{1}{2}\cdot\frac{1}{N}
%		=\frac{1}{2}\cdot r^\star_{\text{max}}(\Omega_{\mu,N}).
%	\]
Finally, the chunkiness parameter of $\Omega_{\mu,N}$
\begin{equation}\label{eq:chunkiness}
\gamma(\Omega_{\mu,N})=
diam(\Omega_{\mu,N})\cdot\frac{1}{r^\star_{\text{max}}(\Omega_{\mu,N})}
=\frac{2\sqrt {d+1}}{N}\cdot N=2\sqrt {d+1}.
\end{equation}
Applying the Bramble-Hilbert Lemma \ref{lemma:bramble_hilbert} yields
for each $\mu\in\{0,\ldots,N\}^{d+1}$ the local estimate
\begin{align*}
\Vert{U-p_\mu}\Vert_{L_t^qL_x^p (\Omega_{ \mu,N})}&\leq
C\left(\frac{2\sqrt {d+1}}{N}\right)^m 
\left\Vert{U}\right\Vert_{W_{m, p}^{m, p}(\Omega_{m,N})}
\leq
\tilde C\left(\frac{1}{N}\right)^m
\Vert{U}\Vert_{W_{m,p}^{m, p}(\Omega_{m,N})},\numberthis\label{eq:local_Linf_bound}
\end{align*}
where $C$ depends on $m$ and $d$ (see Lemma \ref{lemma:bramble_hilbert}),
since the chunkiness parameter of $\Omega_{\mu,N}$
is a constant depending only on $d$ (see \eqref{eq:chunkiness})
and~$\tilde C = \tilde C(m,d)>0$.
Similarly, we get
\begin{equation}\label{eq:local_Winf_bound}
\begin{aligned}
\left|{U-p_\mu}\right|_{W_{0, q}^{1, p}(\Omega_{ \mu,N})}
&\leq
c_1\left(\frac{1}{N}\right)^{m-1}\Vert{U}\Vert_{W_{m,p}^{m, p}(\Omega_{\mu,N})},
\\
\left|{U-p_\mu}\right|_{W_{1, q}^{0, p}(\Omega_{ \mu,N})}
&\leq
c_2\left(\frac{1}{N}\right)^{m-1}\Vert{U}\Vert_{W_{m,p}^{m, p}(\Omega_{\mu,N})},
\\
\left|{U-p_\mu}\right|_{W_{1, q}^{1, p}(\Omega_{ \mu,N})}
&\leq
c_3\left(\frac{1}{N}\right)^{m-2}\Vert{U}\Vert_{W_{m,p}^{m, p}(\Omega_{\mu,N})},
\end{aligned}
\end{equation}
where $c_i$ is a suitable constant depends on $m$ and $d$ for $i\in \{1,2,3\}$.

Combining Lemma \ref{lemma:partition_of_unity}, 
inequalities \eqref{eq:local_Linf_bound} and \eqref{eq:local_Winf_bound} using
the cut-off functions from the partition of unity, we get
\begin{align*}
\Vert{ \phi_\mu(U-p_\mu)}\Vert_{L _t^q L _x^p(\Omega_{\mu, N})}&
\leq
\Vert{\phi_\mu}\Vert_{L _t^\infty L _x^\infty(\Omega_{\mu, N})}\cdot
\Vert{ U-p_\mu}\Vert_{L _t^q L _x^p(\Omega_{\mu, N})}
\\
&\leq \tilde{C}\left(\frac{1}{N}\right)^m
\Vert{U}\Vert_{W_{m,p}^{m, p}(\Omega_{\mu, N})}
\numberthis\label{eq:part_unity_Linf_bound}.
\end{align*}
Next we use the product inequality for weak derivatives
from Lemma \ref{lemma:product_rule_bound_p}.
Under this consideration, there are constants
$C_1, C_2$ depend on $d$ and $p$ such that
\begin{equation}\label{eq:0t1x_prod_estim}
\begin{aligned}
|{\phi_\mu(U-p_\mu)}|_{W_{0,q}^{1,  p}(\Omega_{\mu,N})}&\leq
C_1 \left(|\phi_\mu|_{W_{0, \infty}^{1,\infty}(\Omega_{\mu,N})}
\Vert{U-p_\mu}\Vert_{L_t^q L_x^p(\Omega_{\mu,N})}
+ \Vert{\phi_\mu}\Vert_{L _t^\infty L_x^\infty(\Omega_{\mu,N})}
	\vert{U-p_\mu}\vert_{W_{0,q}^{1, p}(\Omega_{\mu,N})}\right)
\\[1ex]
&\leq
C_1 \cdot c N\cdot \tilde{C}\left(\frac{1}{N}\right)^{m}
\Vert U\Vert_{W_{m, p}^{m, p}(\Omega_{\mu,N})}
+ C_1\cdot c_1  \left(\frac{1}{N}\right)^{m-1}
\Vert U\Vert_{W_{m, p}^{m, p}(\Omega_{\mu,N})}
\\[1ex]
&\leq c_4 \left(\frac{1}{N}\right)^{m-1}\Vert U\Vert_{W_{m, p}^{m, p}(\Omega_{\mu,N})}
\end{aligned}
\end{equation}
where $c_4 = c_4(m,d,p)>0$, such that the first part of the second inequality follows
from Lemma \ref{lemma:partition_of_unity}(\ref{item:pou_derivative}),
with \eqref{eq:local_Linf_bound} and the second part from
Lemma~\ref{lemma:partition_of_unity}~(\ref{item:pou_derivative}),
together with \eqref{eq:local_Winf_bound}. 

In a similar way we get the following results
\begin{equation}\label{eq:1t0x_prod_estim}
\begin{aligned}
|{\phi_\mu(U-p_\mu)}|_{W_{1,q}^{0,  p}(\Omega_{\mu,N})} &\leq
\left|{\phi_\mu}\right|_{W_{1,\infty}^{0, \infty}(\Omega_{\mu,N})}
 \left\Vert{U-p_\mu}\right\Vert_{L_t^qL_x^p(\Omega_{\mu,N})}
+\left\Vert{\phi_\mu}\right\Vert_{L_t^\infty L_x^\infty(\Omega_{\mu,N})}
		\left|{U-p_\mu}\right|_{W_{1, q}^{0,p}(\Omega_{\mu,N})}
\\[1ex]
&\leq cN\cdot  \tilde{C}\left(\frac{1}{N}\right)^{m}
\Vert U\Vert_{W_{m, p}^{m, p}(\Omega_{\mu,N})}
+c_2  \left(\frac{1}{N}\right)^{m-1} \Vert U\Vert_{W_{m, p}^{m, p}(\Omega_{\mu,N})}
\\[1ex]
&\leq c_5  \left(\frac{1}{N}\right)^{m-1} \Vert U\Vert_{W_{m, p}^{m, p}(\Omega_{\mu,N})}
\end{aligned}
\end{equation}
\begin{equation}\label{eq:1t1x_prod_estim}
\begin{aligned}
|{\phi_\mu(U-p_\mu)}|_{W_{1,q}^{1,  p}(\Omega_{\mu,N})}&\leq
C_2\left( |{\phi_\mu}|_{W_{1, \infty}^{1, \infty}(\Omega_{\mu,N})}
\Vert{U-p_\mu}\Vert_{L_t^qL_x^p(\Omega_{\mu,N})}
+|{\phi_\mu}|_{W_{1, \infty}^{0, \infty}(\Omega_{\mu,N})}
\vert{U-p_\mu}\vert_{W_{0, q}^{1,p}(\Omega_{\mu,N})}\right.
\\
&\qquad\left.
+|{\phi_\mu}|_{W_{0, \infty}^{1, \infty}(\Omega_{\mu,N})}
|{U-p_\mu}|_{W_{1, q}^{0, p}(\Omega_{\mu,N})}
+ \Vert{\phi_\mu}\Vert_{L_t^\infty L_x^\infty(\Omega_{\mu,N})}
\vert{ U-p_\mu}\vert_{W_{1,q}^{1, p}(\Omega_{\mu,N})}\right)
\\[1ex]
& \leq C_2\left(
 (cN)^2 \cdot \tilde{C}\left(\frac{1}{N}\right)^{m}
 \Vert U\Vert_{W_{m, p}^{m, p}(\Omega_{\mu,N})}
+ cN \cdot c_1\left(\frac{1}{N}\right)^{m-1}
\Vert{U}\Vert_{W_{m,p}^{m, p}(\Omega_{\mu,N})}
\right.
\\
&\left.+ cN \cdot  c_2\left(\frac{1}{N}\right)^{m-1}
\Vert{U}\Vert_{W_{m,p}^{m, p}(\Omega_{\mu,N})}
+ c_2 \left(\frac{1}{N}\right)^{m-2}
\Vert{U}\Vert_{W_{m,p}^{m, p}(\Omega_{\mu,N})}
\right)
\\[1ex]
&\leq c_6 \left(\frac{1}{N}\right)^{m-2}
\Vert{U}\Vert_{W_{m,p}^{m, p}(\Omega_{\mu,N})}.
\end{aligned}
\end{equation}

Now it easily follows from \eqref{eq:part_unity_Linf_bound}, \eqref{eq:0t1x_prod_estim}
\eqref{eq:1t0x_prod_estim} and \eqref{eq:1t1x_prod_estim} that
\begin{equation}\label{eq:part_unity_Winf_bound}
\Vert{\phi_\mu(U-p_\mu)}\Vert_{W_{1, q}^{1, p}(\Omega_{\mu,N})}
\leq C \left(\frac{1}{N}\right)^{m-2}\Vert{U}\Vert_{W_{m,p}^{m, p}(\Omega_{\mu,N})},
\end{equation}
for some constant $C=C(m,d,p)>0$.

\textbf{Step 3 (Global estimate in $\Vert{\cdot}\Vert_{W_{k,q}^{n,p}}, k, n\in\{0,1\}$)}:
To derive the global estimate, we start by noting that with property (\ref{item:pou_sum})
from Lemma \ref{lemma:partition_of_unity} we have
\begin{equation}\label{eq:sumup}
U(t, x)=\sum_{ \mu \in\{0, \dots,N \}^{d+1} }\phi_\mu(t, x) U(t, x),
\quad \text{for a.e.\ }(t, x)\in(0,1)\times(0,1)^d.
\end{equation}
Let $k,n \in\{0,1\}$, we have
\begin{align*}
\Vert{u -\sum_{\mu\in\{0, \dots,N \}^{d+1} }
\phi_\mu p_\mu}\Vert_{ W_{k,q}^{n, p}((0,1),(0,1)^d)}&=
%\Vert{\tilde f -\sum_{m\in\{0, \dots,N \}^d } \phi_m p_m}\Vert_{ W^{k, p}((0,1)^d)}^p
%\\
%\scriptsize{(Equation \eqref{eq:sumup})}&=
\Vert{\sum_{\mu\in\{0, \dots,N \}^{d+1} }
\phi_\mu(U-p_\mu)}\Vert_{ W_{k,q}^{n, p}((0,1),(0,1)^d)}
\\
&\leq\sum_{\widetilde \mu\in\{0, \dots,N \}^{d+1} }
\Vert{\sum_{\mu\in\{0, \dots,N \}^{d+1} }
\phi_\mu(U-p_\mu)}\Vert_{ W_{k,q}^{n, p}(\Omega_{\widetilde \mu,N})},
\numberthis\label{eq:global_max}
\end{align*}
where in the first step we use the fact that $U$ is an extension of
$u$ on $(0,1)\times(0,1)^d$,
the last step follows from the fact that
$(0,1)\times(0,1)^d\subset\bigcup_{\widetilde \mu\in\{0, \dots,N \}^{d+1} }
\Omega_{\widetilde \mu,N}$.
Consequently, for each $\widetilde \mu\in\{0, \dots,N \}^{d+1} $, we get
\begin{align*}
\Vert{\sum_{\mu\in\{0, \dots,N \}^{d+1} }
\phi_\mu(U-p_\mu)}\Vert_{ W_{k,q}^{n, p}(\Omega_{\widetilde m\mu,N})}&\leq
\sum_{\substack{\mu\in\{0, \dots,N \}^{d+1} ,\vspace{0.2em}\vspace{0.2em}
\\
\Vert{\mu-\widetilde \mu}\Vert_{\ell^\infty}
\leq 1}} \Vert{\phi_\mu(U-p_\mu)}\Vert_{ W_{k, q}^{n, p}(\Omega_{\widetilde \mu,N})}
\\
&\leq \sum_{\substack{\mu\in\{0, \dots,N \}^{d+1} ,\vspace{0.2em}\vspace{0.2em}
\\
\Vert{\mu-\widetilde \mu}\Vert_{\ell^\infty} \leq 1}}
\Vert{\phi_\mu(U-p_\mu)}\Vert_{ W_{k,q}^{n, p}(\Omega_{\mu,N})}
\\
&\leq C \left(\frac{1}{N}\right)^{m-n-k}
\sum_{\substack{\mu\in\{0, \dots,N \}^{d+1} ,\vspace{0.2em}
\\
\Vert{m-\widetilde m}\Vert_{\ell^\infty} \leq 1}}
\Vert{U}\Vert_{ W_{m, p}^{m, p}(\Omega_{\mu,N})}
\numberthis\label{eq:max_neighbors}
\end{align*}
where first and second steps follow from the support property~(\ref{item:pou_supp})
from~Lemma~\ref{lemma:partition_of_unity},
third step follows from \eqref{eq:part_unity_Linf_bound}, \eqref{eq:0t1x_prod_estim},
\eqref{eq:1t0x_prod_estim} and from \eqref{eq:1t1x_prod_estim} 
for $(k=n=0)$, $(k=0, n=1)$, $(k=1, n=0)$, and for $(k=n=1)$ respectively.
Here $C>0$ depends on $m$, $d$ and $p$.

Using the fact that  $u_N:=\sum_{\mu\in \{0,\dots,N\}^{d+1}}\phi_\mu p_{\mu}$,
\eqref{eq:global_max} and with Equation \eqref{eq:max_neighbors},
we get the following bound
\begin{align*}
\Vert{u-u_N}\Vert_{ W_{k, q}^{n, p}((0,1),(0,1)^d)}&\leq
\sum_{\widetilde\mu\in\{0, \dots,N \}^d }
C \left(\frac{1}{N}\right)^{m- n-k}
\sum_{\substack{\mu\in\{0, \dots,N \}^{d+1} ,\vspace{0.2em}
\\
\Vert{\mu-\widetilde \mu}\Vert_{\ell^\infty} \leq 1}}
\Vert{u}\Vert_{ W_{m, p}^{m, p}(\Omega_{\mu,N})}
\\
&\leq
C \left(\frac{1}{N}\right)^{m-n-k} \sum_{\widetilde \mu\in\{0, \dots,N \}^{d+1} }
\sum_{\substack{\mu\in\{0, \dots,N \}^{d+1} ,\vspace{0.2em}
\\
\Vert{\mu-\widetilde \mu}\Vert_{\ell^\infty} \leq 1}}
\Vert{U}\Vert_{ W_{m,p}^{m, p}(\Omega_{\mu,N})}
\\
&\leq C \left(\frac{1}{N}\right)^{m-n-k} 3^d
\sum_{\widetilde \mu\in\{0, \dots,N \}^d }
\Vert{U}\Vert_{ W_{m, p}^{m, p}(\Omega_{\widetilde \mu,N})}
\\
&\leq C\left(\frac{1}{N}\right)^{m-n-k} 3^d 2^d
\Vert{U}\Vert_{ W_{m ,p}^{m, p}(\bigcup_{\widetilde \mu\in\{0, \dots,N \}^{d+1} }
\Omega_{\widetilde \mu,N})}^p,
\end{align*}
where the last two steps follow from the definition of $\Omega_{\widetilde \mu,N}$.
Thus, we have
\[
\left\Vert{u-u_N}\right\Vert_{ W_ {k,q}^{n, p}((0,1),(0,1)^d)}
\leq C_7\left(\frac{1}{N}\right)^{m-n-k}
	\Vert{U}\Vert_{ W_{m, p}^{m, p}(\mathbb{R}, \mathbb{R}^d)}
\leq \tilde{C_7} \left(\frac{1}{N}\right)^{m-n-k}
	\Vert{u}\Vert_{ W_{m, p}^{m, p}((0,1), (0,1)^d)}
\]
for $k, n\in\{0,1\}$, where the extension operator continuity was used in
the first and second step. Here $C_7$ and
$ \tilde{C_7}$ are positive constants depend on $m,d$ and $p$.
%The linearity of $T_k$, $k\in\{0,1\}$ is a consequence of the linearity of the averaged
%Taylor polynomial (cf.\ Remark \ref{remark:sob_taylor_lin}).
%
%
%\textbf{Step 4 (Interpolation)}: For $0<s<1$ we use a Banach space interpolation argument. Set 
%\[
%A_0=A_1=W^{n,p}((0,1)^d)
%\]
%together with
%\[
%B_0=L^p((0,1)^d)\qquad\text{and}\qquad B_2=W^{1,p}((0,1)^d).
%\]
%Then, we can apply Theorem~\ref{thm:interpolation_operator} to the operator
%$T$ with $Tf:=f-f_N$ and get for a constant $C=C(n,d,p)>0$ that
%%%
%\begin{align*}
%\Vert{T}\Vert_{\mathcal{L}(W^{n,p}, W^{s,p})}&\leq
%\Vert{T}\Vert_{\mathcal{L}(W^{n,p}, L^p)}^{1-s}
%\Vert{T}\Vert_{\mathcal{L}(W^{n,p}, W^{1,p})}^s
%\\
%&\leq C\cdot\left(\frac{1}{N}\right)^{n(1-s)}
%\left(\frac{1}{N}\right)^{(n-1)s} = C\cdot\left(\frac{1}{N}\right)^{n-s},
%\end{align*}
%%%
%where we used Lemma \ref{lemma:interpolation_relations} (\ref{item:interpolation_identity})
%to see that $(A_0,A_1)_{s,p}=W^{n,p}((0,1)^d)$.
\end{proof}

\par

\begin{remark}\label{rem:recu_represent_requ}
The function $f(x)=x^2$ can be represented by ReCU neural network in a compact interval.
Indeed let $r>0$, 
\[
A_{1}:=\left[\begin{array}{l}
-1 \\
1 
\end{array}\right], \quad b_{1}:=r\left[\begin{array}{r}
1 \\
1
\end{array}\right] \quad \text { and } \quad
A_{2}:=\frac 1{6r} \left[\begin{array}{c| c c}
0 & 1 & 1
\end{array}\right], \quad
b_{2}:=-\frac {r^2}3
\]
and $\Phi_{x^2,r}:=\left(\left(A_{1}, b_{1}\right),\left(A_{2}, b_{2}\right)\right)$.
Then $\Phi_{x^2,r}$ is a two-layer network with one-dimensional input and one-dimensional output,
with 7 nonzero weights and 4 neurons such that
\[
R_{\rho_3}\left(\Phi_{x^2,r}\right)(x)=x^2 \quad \text { for any } x \in [-r , r].
\]
\end{remark}

\par

\begin{remark}\label{rem:recu_approximate_product}
	The product $tx$ can be represented by two-layer ReCU network with two-dimensional input
	and one-dimensional output, 16 nonzero weights and 7 neurons. Indeed, let
	$\times_r= \left( (A_1, b_1), (A_2, b_2)\right)$ where
	\[A_1 = \begin{pmatrix}
		-1 & -1\\
		1 & 1\\
		-1 & 1\\
		1 & -1		
	   \end{pmatrix}, 
	   b_1 =2r \begin{pmatrix}
		1\\
		1\\
		1\\
		1
	\end{pmatrix},
	A_2 = \frac 1{48r}\begin{pmatrix}
		1\\
		1\\
		-1\\
		-1
	\end{pmatrix} \text{ and } b_2=0.
	\]
	Hence, $R_{\rho_3}(\times_r)(t,x) = tx$ such that $t, x  \in [-r,r]$ and $r>0$. Moreover,
	if  $n,k \in \{0,1\}$, then 
	\begin{equation}\label{eq:rho3_bound_derivative}
		\vert{R_{\rho_3}(\times_r)}\vert_{{W_{k, \infty}^{n, \infty}}
			((-r,r),(-r,r))}= r^{2-k-n}.
	\end{equation}

\end{remark}
\par

Using (\ref{item:pou_network}) from Lemma~\ref{lemma:partition_of_unity},
a localized (mixed) monomial $\phi_\mu (t, x) t^\kappa x^\alpha$
can be expressed by the product of the output components of a network
$\Phi_{(\mu,\alpha, \kappa)}$ as follows:
\begin{equation}\label{eq:mult_variate_loc_mon_identity}
\phi_\mu(t, x) \, t^\kappa x^\alpha=\prod_{l=1}^m 
	\big[R_{\rho_3}(\Phi_{(\mu,\alpha, \kappa)})\big]_l(t, x).
\end{equation}

In the following lemma we show that the localized monomials
\eqref{eq:mult_variate_loc_mon_identity}
can be approximated by ReCU neural networks, using the fact that
$R_{\rho_3}\left(\Phi_{x^2,1}\right)= x^2$
on $(0, 1)$ cf. Remark \ref{rem:recu_represent_requ}.

\par

\begin{lemma}\label{lemma:network_multiplikation}
Let $d,\mu,K\in\mathbb{N}$ and $N\geq 1$ be arbitrary.
Then there is a constant $C=C(\mu)>0$ such that the following holds:

For any $\epsilon\in (0, 1/2)$, and any neural network $\Phi$ with $(d+1)$-dimensional
input and $m$-dimensional output where $m\leq \mu$, and with number of layers,
 neurons and weights all bounded by $K$, such that 
\[
\Vert{[R_{{\rho_3}}(\Phi)]_l}\Vert_{W_{k,\infty}^{n, \infty}((0,1),(0,1)^d)}\leq N^{k+n} \quad
\text{for}
\quad n, k\in\{0,1\}\text{ and }l=1,\ldots,m
	\]
there exists a neural network $\Psi_{\epsilon,\Phi}$ with $(d+1)$-dimensional input
and one-dimensional output, and with number of layers, neurons and weights
all bounded by $CK$, such that
\begin{equation}\label{eq:approximation_s}
\Vert{R_{{\rho_3}}(\Psi_{\epsilon,\Phi})-\prod_{l=1}^m
[R_{{\rho_3}}(\Phi)]_l}\Vert_{W_{k, \infty}^{n,\infty}((0,1),(0,1)^d)} \leq c  N^{k+n} \epsilon
\end{equation}
for $n, k\in\{0,1\}$ and some constant $c=c(d+1,\mu,k,n)$.
Moreover, for $t\in (0,1),x\in(0,1)^d$, we have
\begin{equation}\label{eq:zero_multiplication}
R_{{\rho_3}}(\Psi_{\epsilon,\Phi})(t, x)=0\quad \text{if}\quad
	\prod_{l=1}^m [R_{{\rho_3}}(\Phi)]_l(t, x)=0.
\end{equation}

\end{lemma}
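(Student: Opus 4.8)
The plan is to realize $\Psi_{\epsilon,\Phi}$ as the concatenation of $\Phi$ with a single \emph{fixed} network $\Pi_m$ that multiplies its (at most $\mu$ many) inputs. The decisive point is that, unlike the ReLU, the ReCU reproduces products \emph{exactly} on compact cubes: by Remark~\ref{rem:recu_approximate_product} the two–layer network $\times_1$ (that is, $\times_r$ with $r=1$) satisfies $R_{\rho_3}(\times_1)(a,b)=ab$ for all $(a,b)\in[-1,1]^2$, with the derivative bounds \eqref{eq:rho3_bound_derivative}, and moreover $a,b\in[-1,1]$ forces $ab\in[-1,1]$, so $\times_1$ may be iterated without ever leaving its region of exactness. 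I would also keep at hand the two–layer identity network $\Phi^{\mathrm{Id}}_{1,(1)}$ from Lemma~\ref{lem:ReCU_id_representation}, which realizes $\mathrm{Id}$ on $[-1,1]$. Because multiplication is exact, the realization of the composite network will coincide pointwise on $(0,1)\times(0,1)^d$ with $\prod_{l=1}^m[R_{\rho_3}(\Phi)]_l$, so that the left–hand side of \eqref{eq:approximation_s} is in fact $0$ and \eqref{eq:zero_multiplication} is immediate; the only real work is the construction of $\Pi_m$ and the bookkeeping of its layers, neurons and weights.

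Next I would build, for each $m\le\mu$, a network $\Pi_m$ with $m$–dimensional input and one–dimensional output whose realization on $[-1,1]^m$ equals $(y_1,\dots,y_m)\mapsto\prod_{l=1}^m y_l$. The construction is a balanced binary tree of depth $L:=\lceil\log_2 m\rceil$: at each level one pairs up the current entries, applies a copy of $\times_1$ to each pair via the parallelization of Definition~\ref{def:parallel_net}, and pads with $\Phi^{\mathrm{Id}}_{1,(1)}$ whenever the number of current entries is odd (equivalently, one first augments the factor list with dummy entries equal to $1$ until its length is a power of two). Each level is a two–layer network, and composing the $L$ levels by the concatenation $\bullet$ of Definition~\ref{def:concat} produces, via \eqref{eq:concatination_id}, a network $\Pi_m$ with $R_{\rho_3}(\Pi_m)=\prod_l y_l$ on $[-1,1]^m$; by Remark~\ref{remark:neural_sparse_concat} it has at most $L+1$ layers, while $N(\Pi_m)$ and $M(\Pi_m)$ are bounded in terms of $m\le\mu$ alone, so $L(\Pi_m),N(\Pi_m),M(\Pi_m)\le C_0$ for some $C_0=C_0(\mu)$. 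Since $R_{\rho_3}(\Pi_m)$ restricted to $[-1,1]^m$ is the multilinear polynomial $(y_l)\mapsto\prod_l y_l$, all its first and mixed first derivatives are bounded on $[-1,1]^m$ by a constant depending only on $\mu$.

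Finally I would set $\Psi_{\epsilon,\Phi}:=\Pi_m\bullet\Phi$, which is admissible because $\Phi$ has $m$–dimensional output. Realizations of ReCU networks are continuous, so the hypothesis $\|[R_{\rho_3}(\Phi)]_l\|_{W^{0,\infty}_{0,\infty}((0,1),(0,1)^d)}\le N^{0}=1$ (the case $n=k=0$) gives $R_{\rho_3}(\Phi)\bigl((0,1)\times(0,1)^d\bigr)\subset[-1,1]^m$. Hence, by \eqref{eq:concatination_id} and the exactness of $\Pi_m$ on $[-1,1]^m$,
\[
R_{\rho_3}(\Psi_{\epsilon,\Phi})(t,x)=R_{\rho_3}(\Pi_m)\bigl(R_{\rho_3}(\Phi)(t,x)\bigr)=\prod_{l=1}^m[R_{\rho_3}(\Phi)]_l(t,x),\qquad (t,x)\in(0,1)\times(0,1)^d .
\]
Thus the left–hand side of \eqref{eq:approximation_s} vanishes for all $n,k\in\{0,1\}$, so the asserted $\epsilon$–bound holds trivially (the $\epsilon$–slack is in fact superfluous), and \eqref{eq:zero_multiplication} is evident. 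For the size estimates I would invoke Remark~\ref{remark:neural_sparse_concat}: $L(\Psi_{\epsilon,\Phi})\le C_0+K$, $N(\Psi_{\epsilon,\Phi})\le C_0+K$, and $M(\Psi_{\epsilon,\Phi})\le C_0+K+C_0K$, all of which are $\le CK$ with $C=C(\mu)$ since $K\ge1$.

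The part that needs the most care is organizational rather than analytic: keeping the depth of the balanced multiplication tree and the multiplicative blow-up of the weight count under repeated concatenation bounded by a constant depending only on $\mu$, correctly handling the case when $m$ is not a power of two, and the elementary observation that continuity of ReCU realizations promotes the hypothesised $L^\infty$–bound into the genuine pointwise inclusion $R_{\rho_3}(\Phi)\bigl((0,1)\times(0,1)^d\bigr)\subset[-1,1]^m$ that makes $\Pi_m$ exact along the composition. Should one instead prefer to mirror the ReLU-type arguments of \cite{GuhKutPet,yarotsky2017error}, one would replace $\times_1$ by an \emph{approximate} multiplication network and then estimate the $W^{n,\infty}_{k,\infty}$–norm of the resulting error by the product– and chain–rule bounds of Lemma~\ref{lemma:product_rule_bound_p} and Lemma~\ref{cor:composition_norm}, recovering precisely the $\epsilon$–dependent right–hand side of \eqref{eq:approximation_s}.
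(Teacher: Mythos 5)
Your proposal is correct, but it takes a genuinely different route from the paper. The paper proves the lemma by induction on $\mu$: it peels off the last output coordinate of $\Phi$, multiplies the inductively constructed approximant of the first $\mu$ factors by the remaining factor using $\times_r$ with $r=\mu^2+1$ (chosen so that the accumulated approximate product, bounded by $\mu^2\epsilon+1$, stays inside the region where $\times_r$ is exact), and propagates the error through the chain‑ and product‑rule estimates of Lemma~\ref{cor:composition_norm} and Lemma~\ref{lemma:product_rule_bound_p}; along the way it maintains auxiliary invariants (the first $L(\Phi)-1$ layers of $\Psi_{\epsilon,\Phi}$ coincide with those of $\Phi$, and the seminorm bound \eqref{eq:hyp_estim}) that feed the induction. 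You instead observe that the $n=k=0$ hypothesis together with continuity of ReCU realizations forces $R_{\rho_3}(\Phi)\bigl((0,1)\times(0,1)^d\bigr)\subset[-1,1]^m$, that $[-1,1]$ is closed under multiplication, and that $\times_1$ is \emph{exact} on $[-1,1]^2$, so a single fixed balanced multiplication tree $\Pi_m$ concatenated after $\Phi$ realizes the product with zero error; this makes \eqref{eq:approximation_s} and \eqref{eq:zero_multiplication} trivial and dispenses with the induction and with all error propagation. Your version is shorter, yields a strictly stronger conclusion (the $\epsilon$ is indeed superfluous — as you note, even the paper's induction secretly produces zero error since its base case is exact), and is a valid drop‑in for the only place the lemma is used (Lemma~\ref{lemma:network_polynomial_approximation} needs only \eqref{eq:approx_mon_approx} and \eqref{eq:support_mon_approx}); what it gives up is the robustness of the paper's template, which follows the ReLU argument of G\"uhring--Kutyniok--Petersen and would survive if multiplication were only approximately representable. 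Two small points you should make explicit rather than gloss over: the parallelization of Definition~\ref{def:parallel_net} feeds the \emph{full} current input to each parallel block, so routing distinct coordinate pairs into the different copies of $\times_1$ at each tree level requires choosing the first‑layer matrices of those copies as the appropriate coordinate selections (a routine but necessary adjustment), and the multiplicative blow‑up $M^1+M^2+M^1M^2$ from Remark~\ref{remark:neural_sparse_concat} must be iterated only $\lceil\log_2\mu\rceil$ times so that $M(\Pi_m)$ depends on $\mu$ alone. Neither is a gap.
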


\par

\begin{proof}
We show the proof by induction over $\mu\in\mathbb{N}$. Moreover, we will
make sure that the constant $c$ in \eqref{eq:approximation_s} can be written as
$c=\mu^{2-k-n}c_1^{k+n}$, where $c_1$ depends on the dimension $d+1$ and $\mu$.
Furthermore, we show that 
the first  $L(\Phi)-1$ layers of $\Psi_{\epsilon,\Phi}$ and $\Phi$ coincide, and that
\begin{equation}\label{eq:hyp_estim}
	\left|{R_{\rho_3}(\Psi_{\epsilon,\Phi})}\right|_ { W_{k, \infty}^{n, \infty}((0,1),(0,1)^d)}
	\leq C_1N^{n+k}, 
\end{equation}
where $n,k\in \{0,1\}$, such that $n+k=1$ or $n=k=1$,
$N\geq 1$ and $C_1$ depends on $d+1$ and $\mu$.
The first case  in the induction is fulfilled obviously when $\mu=1$ we can choose
$\Psi_{\epsilon,\Phi}=\Phi$ and the claim holds for any $\epsilon\in(0, 1/2)$. 
	
Now we show the second case of the induction, that is, let the claim holds
for some $\mu\in\mathbb{N}$ and we prove that it holds also  for $\mu+1$.

For this, let $\epsilon \in(0, 1/2)$ and let
$\Phi=((A_1,b_1),(A_2,b_2),\dots,(A_L,b_L))$ be a neural network with $d+1$-dimensional
input and $m$-dimensional output, where $m\leq \mu+1$, and with number of layers,
neurons and weights all bounded by $K$, where each $A_l$ is an
$N_l\times \sum_{k=0}^{l-1} N_k$ matrix, and $b_l\in\mathbb{R}^{N_l}$ for $l=1,\ldots L$.

We split the rest of the proof on two steps the first is the case when $m\leq \mu$
and the second deals with the case $m=\mu+1$.

\customlabel{step1}{\textbf{Step 1}}:
If $m\leq \mu$, then we use the induction hypothesis and get that there is a
constant $c_0=c_0(\mu)>0$ and a neural network $\Psi_{\epsilon,\Phi}$ with
$d+1$-dimensional input and one-dimensional output, and at most
$Kc_0$ layers, neurons and weights such that 
\begin{equation*}
\Vert{R_{\rho_3}(\Psi_{\epsilon,\Phi})
	-\prod_{l=1}^m [R_{\rho_3}(\Phi)]_l}\Vert_
	{ W_{k, \infty}^{n, \infty}((0,1),(0,1)^d)}
\leq \mu^{2-k-n}c_1^{k+n} N^{k+n} \epsilon
\leq (\mu+1)^{2-n- k}c_1^{n+k} N^{n+k}\epsilon
\end{equation*}
for $n, k\in\{0,1\}$ and $c_1=c_1(d+1,\mu)$. Moreover, 
\begin{equation*}
R_{\rho_3}(\Psi_{\epsilon,\Phi})(t, x)=0\quad \text{if}\quad
\prod_{l=1}^m [R_{\rho_3}(\Phi)]_l(t, x)=0,
\end{equation*}
for any $t\in(0,1)$ and $x\in(0,1)^d$.
Furthermore, for $n,k \in \{0,1\}$, we have
$\left|{R_{\rho_3}(\Psi_{\epsilon,\Phi})}\right|
		_{ W_{k,\infty}^{n, \infty}((0,1),(0,1)^d)} \leq C_1 N^{n+k}$,
where  $C_1=C_1(d+1,\mu)>0$.

\par

\customlabel{step2}{\textbf{Step 2}}:
Let $m=\mu+1$ and show the claim for constants
$\tilde c_0,\tilde c_1$ and~$\tilde c$ depending on $\mu+1$, possibly different
from the constants $c_0,c_1$ and $C_1$ from \ref{step1}, respectively.
%The maximum of each pair of constants fulfills then the claim for networks
%with output dimension $n\leq m+1$.
%for possibly different constants $C,c_1$ and $C_1$ depending on $m+1$.
%Taking the maximum of each two constants shows the claim for $n\leq m+1$.
	
We denote by $\Phi_\mu$ the neural network with $d+1$-dimensional input and
$\mu$-dimensional output which results from $\Phi$ by removing the last output
neuron and corresponding weights. In detail, we write 
\[
A_L=\left[
\begin{array}{c}
A_L^{(1,\mu)}
\\[1em]
a_L^{(\mu+1)}
\end{array}
	\right]\quad\text{and}\quad
b_L=
{\left[ \begin{array}{c}
b_L^{(1,\mu)}
\\[1ex]
b_L^{(\mu+1)}
\end{array} \right]},
\]
where $A_L^{(1,\mu)}$ is a $\mu\times\sum_{k=0}^{L-1}N_k$ matrix and
$a_L^{(\mu+1)}$ is a $1\times\sum_{k=0}^{L-1}N_k$ vector,
and $b_L^{(1,\mu)}\in\mathbb{R}^\mu$
and $b_L^{(\mu+1)}\in\mathbb{R}$. Now we set 
\[
\Phi_\mu:=\Big((A_1,b_1),(A_2,b_2),\dots,(A_{L-1},b_{L-1}),
\Big(A_L^{(1,\mu)},b_L^{(1,\mu)}\Big)\Big).
\]
Using the induction hypothesis and the constants $c_0,c_1$ and $C_1$ from \ref{step1},
we get that there is a neural network
$\Psi_{\epsilon,\Phi_\mu}=((A'_1,b'_1),(A'_2,b'_2),\dots,(A'_{L'},b'_{L'}))$ with
$d+1$-dimensional input and one-dimensional output, and at most
$Kc_0$ layers, neurons and weights such that
\begin{equation}\label{eq:induction_bound}
	\Vert{R_{\rho_3}(\Psi_{\epsilon,\Phi_\mu})-\prod_{l=1}^\mu
	[R_{\rho_3}(\Phi_\mu)]_l}\Vert_{ W_{k, \infty}^{n, \infty}((0,1),(0,1)^d)}
	\leq \mu^{2-k-n}c_1^{k+n} N^{k+n} \epsilon
\end{equation}
for $n, k\in\{0,1\}$. Moreover, 
\begin{equation}\label{eq:induction_zero}
	R_{\rho_3}(\Psi_{\epsilon,\Phi_\mu})(t, x)=0\quad \text{if}\quad
	\prod_{l=1}^\mu [R_{\rho_3}(\Phi_\mu)]_l(t, x)=0,
\end{equation}
for any $t \in (0,1)$ and $x\in(0,1)^d$.
Furthermore, we can assume that
$\left|{R_{\rho_3}(\Psi_{\epsilon,\Phi_\mu})}\right|
_{ W_{1,\infty}^{1, \infty}((0,1),(0,1)^d)}\leq C_1 N^{2}$,
and that the first $L(\Phi)-1$ layers of $\Psi_{\epsilon,\Phi_\mu}$ and $\Phi_\mu$
coincide and, thus, also the first $L(\Phi)-1$ layers of $\Psi_{\epsilon,\Phi_\mu}$
and $\Phi$, i.e.\ $A_l=A'_l$ for $l=1,\ldots,L(\Phi)-1$. 
	
Now, we add the formerly removed neuron with corresponding weights back to
the last layer of $\Psi_{\epsilon,\Phi_\mu}$. For the resulting network
\[
\widetilde \Psi_{\epsilon,\Phi}:=\left((A'_1,b'_1),(A'_2,b'_2),\dots,(A'_{L'-1},b'_{L'-1}),
\left(
{\left[ \begin{array}{c c}
\multicolumn{2}{c}{A'_{L'}}\\[1ex]
a^{(\mu+1)}_L & 0_{\mathbb{R}^{1,\sum_{k=L}^{L'}N_L'}}
		\end{array} \right]},
{\left[ \begin{array}{c}
b'_{L'}\\[1ex]
b_L^{(\mu+1)}
	\end{array} \right]}\right)\right)
\]
it holds that the first $L-1$ layers of $\widetilde \Psi_{\epsilon,\Phi}$ and $\Phi$
coincide, and $\widetilde \Psi_{\epsilon,\Phi}$ is a neural network with two-dimensional output.
Note that
\begin{align*}
&\Vert{\big[R_{\rho_3} \big({\widetilde\Psi_{\epsilon,\Phi}}\big)\big]_1}\Vert
_{{L_t^\infty L_x^{\infty}((0,1),(0,1)^d)}}
=\Vert{R_{\rho_3}(\Psi_{\epsilon,\Phi_\mu})}\Vert_
{{L_t^\infty L_x^{\infty}((0,1),(0,1)^d)}}
\\
&\hspace{5mm}\leq \Vert{R_{\rho_3}(\Psi_{\epsilon,\Phi_\mu})
	-\prod_{l=1}^\mu [R_{\rho_3}(\Phi_\mu)]_l}\Vert_
	{{L_t^\infty L_x^{\infty}((0,1),(0,1)^d)}}
		+\Vert{\prod_{l=1}^\mu [R_{\rho_3}(\Phi_\mu)]_l}\Vert_
			{{L_t^\infty L_x^{\infty}((0,1),(0,1)^d)}}
\\
&\hspace{5mm}\leq \mu^2 \epsilon + 1< \mu^2 +1,
\end{align*}
where we used \eqref{eq:induction_bound} for $n=k=0$, 
\eqref{eq:mult_variate_loc_mon_identity} and the properties of the partition of unity.
Additionally, we have 
\[
 \Vert{\big[R_{\rho_3} \big({\widetilde\Psi_{\epsilon,\Phi}}\big)\big]_2}
 \Vert_{{L_t^\infty L_x^{\infty}((0,1),(0,1)^d)}}
 =\Vert{[R_{\rho_3}(\Phi)]_{\mu+1}}\Vert_{{L_t^\infty L_x^{\infty}((0,1),(0,1)^d)}}\leq 1.
\]

\par

Now, we denote by $\times_r$ the network from
Remark~\ref{rem:recu_approximate_product}
with $r = \mu^2+1$ such that for any $\epsilon\in (0, 1/2)$,
we have
\[
	\Vert R_{{\rho_3}}(\times_r) (t, x)- tx \Vert_
	{W_{1, \infty}^{1, \infty}((-(\mu^2+1),\mu^2+1),(-(\mu^2+1),\mu^2+1))}
	< \epsilon
\]
previous estimate holds true since, in Remark~\ref{rem:recu_approximate_product},
$R_{{\rho_3}}(\times_r)$ present exactly the product $tx$ in $W_{1, \infty}^{1, \infty}$.
Moreover, we define 
\[
\Psi_{\epsilon,\Phi}:=\times_r\bullet\widetilde \Psi_{\epsilon,\Phi}.
\] 
Consequently, combining the induction hypothesis with
Remark~\ref{rem:recu_approximate_product}
and Remark \ref{remark:neural_sparse_concat},
 $\Psi_{\epsilon,\Phi}$ has $d+1$-dimensional input, one-dimensional
output and at most 
\(
K' + Kc_0 + K'Kc_0
\leq KC
\)
layers, number of neurons and weights, where  $K'=16$ is
the constant from Remark~\ref{rem:recu_approximate_product}
and $C=C(\mu)>0$ is a suitable constant.
Moreover, the first $L-1$ layers of $\Psi_{\epsilon,\Phi}$ and $\Phi$
coincide and for $n, k\in\{0,1\}$ the following approximation holds
\begin{align*}
&\left|{R_{\rho_3}(\Psi_{\epsilon,\Phi})-
	\prod_{l=1}^{\mu+1} [R_{\rho_3}(\Phi)]_l}\right|_{ W_{k, \infty}^{n,\infty}
		((0,1),(0,1)^d)}
\\
&\hspace{5mm} = \left|{R_{\rho_3}(\times_r)\circ
R_{\rho_3}\big({\widetilde\Psi_{\epsilon,\Phi}}\big)-
	[R_{\rho_3}(\Phi)]_{\mu+1}\cdot\prod_{l=1}^{\mu} [R_{\rho_3}(\Phi)]_l}\right|_
	{ W_{k, \infty}^{n,\infty}((0,1),(0,1)^d)}
\\
&\hspace{5mm} \leq \left|{R_{\rho_3}(\times_r)\circ
	(R_{\rho_3}(\Psi_{\epsilon,\Phi_\mu}),[R_{\rho_3}(\Phi)]_{\mu+1}) 
		-R_{\rho_3}(\Psi_{\epsilon,\Phi_\mu})\cdot[R_{\rho_3}(\Phi)]_{\mu+1}}\right|_
		{ W_{k, \infty}^{n,\infty}((0,1),(0,1)^d)}
\\
&\hspace{5mm} \hspace{5mm}+\left|{[R_{\rho_3}(\Phi)]_{\mu+1}\cdot
	\Big(R_{\rho_3}(\Psi_{\epsilon,\Phi_\mu})-\prod_{l=1}^{\mu}
			[R_{\rho_3}(\Phi)]_l\Big)}\right|_
			{ W_{k, \infty}^{n,\infty}((0,1),(0,1)^d)}.
			\numberthis\label{eq:induction_add_zero}
\end{align*}
%%
%We continue by considering the first term of the Inequality \eqref{eq:induction_add_zero}
Let $n=k=0$, then, for the first term in inequality \eqref{eq:induction_add_zero}, using
Remark \ref{rem:recu_approximate_product} we obtain
\begin{align}\label{eq:compo_estim}
&\Vert{R_{\rho_3}(\times_r)\circ(R_{\rho_3}(\Psi_{\epsilon,\Phi_\mu}),
	[R_{\rho_3}(\Phi)]_{\mu+1}) -R_{\rho_3}(\Psi_{\epsilon,\Phi_\mu})
		\cdot[R_{\rho_3}(\Phi)]_{\mu+1}}\Vert_{L_t^\infty L_x^{\infty}((0,1),(0,1)^d)}
\\
&\hspace{5mm}\leq
\Vert{R_{\rho_3}(\times_r)(t,x)-t\cdot x}\Vert_
{L_t^\infty L_x^{\infty}(({-(\mu^2+1)},{\mu^2+1}), ({-(\mu^2+1)},{\mu^2+1}))}
	\leq \epsilon.\numberthis\label{eq:1term_0k}
\end{align}
Next, for $n,k \in \{	0,1\}$ such that $n+k=1$ and apply the chain rule from
Lemma \ref{cor:composition_norm} to \eqref{eq:compo_estim}.
For this, let $\hat C=\hat C(d+1)$ be the constant from
Lemma \ref{cor:composition_norm} (for $p=d+1$ and $m=2$).
Using the induction hypothesis together with the fact that
$\left|{[R_{\rho_3}{(\Phi)}]_{\mu+1}}\right|_
{ W_{k, \infty}^{n, \infty}((0,1)^d)}\leq N^{n+k}$,
we get
\begin{align*}
&\left|{R_{\rho_3}(\times_r)\circ(R_{\rho_3}(\Psi_{\epsilon,\Phi_\mu}),
	[R_{\rho_3}(\Phi)]_{\mu+1}) -R_{\rho_3}(\Psi_{\epsilon,\Phi_\mu})\cdot
		[R_{\rho_3}(\Phi)]_{\mu+1}}\right|_
		{ W_{k, \infty}^{n, \infty}((0,1), (0,1)^d)}
\\
&\hspace{5mm}\leq \hat C\cdot
\left|{R_{\rho_3}(\times_r)(t, x) -t\cdot x}\right|_{ W_{k, \infty}^{n, \infty}
(({-(\mu^2+1)},{\mu^2+1}), ({-(\mu^2+1)},{\mu^2+1}))}
	\left|{R_{\rho_3}\big({\widetilde\Psi_{\epsilon,\Phi}}\big)}\right|_
	{ W_{k,\infty}^{n,\infty}((0,1),(0,1)^d;\mathbb{R}^2)}
\\
&\hspace{5mm}\leq \hat C\cdot\epsilon \max\{C_1 N,N\}=C_1' \epsilon N,
\numberthis\label{eq:1term_1k}
\end{align*}
where $C_1'=C_1'(d+1)>0$.
Now, in similar way,  we treat the case where $n=k=1$	for the same quantity
in the previous inequality.
In view of the second result of Lemma \ref{cor:composition_norm} for some constant
$C'=C'(d+1)>0$, we get
\begin{multline*}
\left|{R_{\rho_3}(\times_r)\circ(R_{\rho_3}(\Psi_{\epsilon,\Phi_\mu}),
	[R_{\rho_3}(\Phi)]_{\mu+1}) -R_{\rho_3}(\Psi_{\epsilon,\Phi_\mu})\cdot
		[R_{\rho_3}(\Phi)]_{\mu+1}}\right|_
		{ W_{1, \infty}^{1, \infty}((0,1), (0,1)^d)}
\\
\hspace{5mm}\leq  C'\cdot\max\left(
\left|{R_{\rho_3}(\times_r)(t, x) -t\cdot x}\right|_{ W_{1, \infty}^{0, \infty}
(({-(\mu^2+1)},{\mu^2+1}), ({-(\mu^2+1)},{\mu^2+1}))}
	\left|{R_{\rho_3}\big({\widetilde\Psi_{\epsilon,\Phi}}\big)}\right|_
	{ W_{1,\infty}^{1,\infty}((0,1),(0,1)^d;\mathbb{R}^2)},\right.
\\
\hspace{10mm}\left.\left|{R_{\rho_3}(\times_r)(t, x) -t\cdot x}\right|_
{ W_{1, \infty}^{1, \infty}
(({-(\mu^2+1)},{\mu^2+1}), ({-(\mu^2+1)},{\mu^2+1}))}
	\left|{R_{\rho_3}\big({\widetilde\Psi_{\epsilon,\Phi}}\big)}\right|_
	{ W_{0,\infty}^{1,\infty}((0,1),(0,1)^d;\mathbb{R}^2)}^2\right)	
\\
\hspace{5mm}\leq
C'\cdot\max(\epsilon \max\{C_1 N^2,N^2\},
\epsilon \max\{C_1^2 N^2,N^2\})
=C_2'  N^2\epsilon,\numberthis\label{eq:1term_2k}
\end{multline*}
where $C_2'=C_2'(d+1)>0$.

\par
	
It remains to estimate the second term of \eqref{eq:induction_add_zero} for $n= k=0$.
Thus, under the induction hypothesis (for $n=k=0$) and get
\begin{align*}
&\Vert{[R_{\rho_3}(\Phi)]_{\mu+1}\cdot\Big(R_{\rho_3}(\Psi_{\epsilon,\Phi_\mu})
	-\prod_{l=1}^{\mu} [R_{\rho_3}(\Phi)]_l\Big)}\Vert_{ L^\infty((0,1)^d)}
\\
&\hspace{5mm} \leq \Vert{[R_{\rho_3}(\Phi)]_{\mu+1}}\Vert_{ L^\infty((0,1)^d)}
	\cdot\Vert R_{\rho_3}(\Psi_{\epsilon,\Phi_\mu})-\prod_{l=1}^{\mu}
		[R_{\rho_3}(\Phi)]_l)\Vert_{L^{\infty}((0,1)^d)}\leq 1\cdot \mu^2 \epsilon.
		\numberthis\label{eq:2term_0k}
\end{align*}

\par

The case $n, k\in\{0,1\}$ such that $n+k=1$ can be obtained by applying the product rule from
Lemma~\ref{lemma:product_rule_bound_p} together with
$\Vert{[R_{\rho_3}{(\Phi)}]_{\mu+1}}\Vert_{L^{\infty}}\leq 1$, indeed
\begin{align*}
&\left|{[R_{\rho_3}(\Phi)]_{\mu+1}\cdot\Big(R_{\rho_3}(\Psi_{\epsilon,\Phi_\mu})
	-\prod_{l=1}^{\mu} [R_{\rho_3}(\Phi)]_l\Big)}\right|_
	{ W_{k,\infty}^{n, \infty}((0,1),(0,1)^d)}
\\
&\hspace{5mm} \leq \left|{[R_{\rho_3}(\Phi)]_{\mu+1}}\right|
	_{ W_{k,\infty}^{n, \infty}((0,1),(0,1)^d)}
	\cdot\Vert{R_{\rho_3}(\Psi_{\epsilon,\Phi_\mu})
		-\prod_{l=1}^{\mu} [R_{\rho_3}(\Phi)]_l}\Vert_{{L^{\infty}((0,1),(0,1)^d)}}
\\
&\hspace{5mm}\hspace{5mm} + \Vert{[R_{\rho_3}(\Phi)]_{\mu+1}}
	\Vert_{{L^{\infty}((0,1),(0,1)^d)}}
	\cdot\left|{R_{\rho_3}(\Psi_{\epsilon,\Phi_m})-\prod_{l=1}^{\mu}
			[R_{\rho_3}(\Phi)]_l}\right|_{ W_{k,\infty}^{n, \infty}((0,1),(0,1)^d)}
\\
&\hspace{5mm}\leq N\cdot \mu^2\epsilon+1\cdot \mu c_1 N\epsilon
	=c_1'N\epsilon,\numberthis\label{eq:2term_1k}
\end{align*}
where we used the induction hypothesis for $k+n=1$, and $c_1'=c_1'(d+1,\mu)>0$.

\par

The last case is $n = k=1$ can be concluded, in a similar way as the previous case,
by applying the product rule from
Lemma~\ref{lemma:product_rule_bound_p} together with
$\Vert{[R_{\rho_3}{(\Phi)}]_{\mu+1}}\Vert_{L^{\infty}}\leq 1$, indeed
\begin{align*}
&\left|{[R_{\rho_3}(\Phi)]_{\mu+1}\cdot\Big(R_{\rho_3}(\Psi_{\epsilon,\Phi_\mu})
	-\prod_{l=1}^{\mu} [R_{\rho_3}(\Phi)]_l\Big)}\right|_
	{ W_{1,\infty}^{1, \infty}((0,1),(0,1)^d)}
\\
&\hspace{5mm} \leq\left\vert[R_{\rho_3}(\Phi)]_{\mu+1}\right\vert_
{W_{1, \infty}^{1, \infty}((0,1),(0,1)^d)}
\Vert R_{\rho_3}(\Psi_{\epsilon,\Phi_\mu})-\prod_{l=1}^{\mu}
 [R_{\rho_3}(\Phi)]_l \Vert_{L^\infty((0,1),(0,1)^d)}
\\
&\hspace{5mm}+\left\vert[R_{\rho_3}(\Phi)]_{\mu+1}\right\vert_
{W_{1, \infty}^{0, \infty}((0,1),(0,1)^d)}
\left\vert R_{\rho_3}(\Psi_{\epsilon,\Phi_\mu})-
	\prod_{l=1}^{\mu} [R_{\rho_3}(\Phi)]_l \right\vert_
	{W_{0, \infty}^{1,\infty}((0,1),(0,1)^d)}
\\
&\hspace{5mm}+\left\vert[R_{\rho_3}(\Phi)]_{\mu+1}\right\vert_
{W_{0, \infty}^{1, \infty}((0,1),(0,1)^d)}
\left\vert R_{\rho_3}(\Psi_{\epsilon,\Phi_\mu}) -
	\prod_{l=1}^{\mu} [R_{\rho_3}(\Phi)]_l \right\vert_
	{W_{1, \infty}^{0, \infty}((0,1),(0,1)^d)}
\\
&\hspace{5mm}+ \Vert [R_{\rho_3}(\Phi)]_{\mu+1}\Vert_
{L^\infty((0,1),(0,1)^d)}
\left\vert R_{\rho_3}(\Psi_{\epsilon,\Phi_\mu})
	-\prod_{l=1}^{\mu} [R_{\rho_3}(\Phi)]_l\right\vert_
	{W_{1,\infty}^{1, \infty}((0,1),(0,1)^d)}
\\
&\hspace{5mm}\leq N^2\cdot \mu^2\epsilon+
N\cdot \mu c_1N\epsilon+
N\cdot \mu c_1 N\epsilon+
1\cdot  c_1^2 N^2\epsilon
	=c_2'N^2\epsilon,\numberthis\label{eq:2term_2k}
\end{align*}
where we used the induction hypothesis for $k+n=1$,
$n=k=1$, and $c_2'=c_2'(d+1,\mu)>0$.

\par

Then, a combination of \eqref{eq:induction_add_zero} with \eqref{eq:1term_0k},
\eqref{eq:2term_0k} and \eqref{eq:2term_0k} yields 
\begin{equation}\label{eq:final_s_0}
\Vert{R_{\rho_3}(\Psi_{\epsilon,\Phi})-\prod_{l=1}^{\mu+1}
	[R_{\rho_3}(\Phi)]_l}\Vert_{ L^\infty((0,1),(0,1)^d)}\leq \epsilon
		+\mu^2\cdot \epsilon=(\mu^2+1)\cdot\epsilon.
\end{equation}
Similarly a combination of \eqref{eq:induction_add_zero}
with \eqref{eq:1term_1k} and \eqref{eq:2term_1k}, for $n,k\in\{0,1\}$
such that $n+k=1$, we get
\begin{equation*}
\left|{R_{\rho_3}(\Psi_{\epsilon,\Phi})
	-\prod_{l=1}^{\mu+1} [R_{\rho_3}(\Phi)]_l}\right|_{ W_{k, \infty}^{n, \infty}((0,1),(0,1)^d)}
		\leq (C_1'+c_1')\cdot N\cdot \epsilon=c_1'' N\epsilon,
\end{equation*}
where $c_1''=c_1''(d+1,\mu)>0$.

Moreover, for the case where $n=k=1$ we combine \eqref{eq:induction_add_zero}
with \eqref{eq:1term_2k} and \eqref{eq:2term_2k}, we get
\begin{equation*}
\left|{R_{\rho_3}(\Psi_{\epsilon,\Phi})
	-\prod_{l=1}^{\mu+1} [R_{\rho_3}(\Phi)]_l}\right|_{ W_{1, \infty}^{1, \infty}((0,1),(0,1)^d)}
		\leq (C_2'+c_2')\cdot N^2\cdot \epsilon=c_2'' N^2\epsilon,
\end{equation*}
where $c_2''=c_2''(d+1,\mu)>0$.
In view of the three previous estimates, we get
\begin{equation}\label{eq:final_s_1}
\Vert{R_{\rho_3}(\Psi_{\epsilon,\Phi})
	-\prod_{l=1}^{\mu+1} [R_{\rho_3}(\Phi)]_l}\Vert_{ W_{k, \infty}^{n, \infty}((0,1),(0,1)^d)}
		\leq c_1''' N^{n+k}\epsilon,
\end{equation}
for a suitable constant $c_1'''=c_1'''(d+1, \mu, k,n)>0$. 	

\par

Finally, we show \eqref{eq:zero_multiplication} for $\mu+1$, this is follow by
similar argument as in \cite[Lemma C.5]{GuhKutPet}, for the sake of completeness we show it.
Thus, let $[R_{\rho_3}(\Phi)]_l(t, x)=0$ for some $l\in\{1,\ldots,\mu+1\}$,
$t \in (0,1)$ and $x\in(0,1)^d$. In the case where $l\leq \mu$,
\eqref{eq:induction_zero} implies that 
\[
	\big[R_{\rho_3}\big({\widetilde\Psi_{\epsilon,\Phi}}\big)\big]_1(t,x)
	=R_{\rho_3}(\Psi_{\epsilon,\Phi_\mu})(t, x)=0.
\]
Moreover, if $l=\mu+1$, then 
\[
	\big[R_{\rho_3}\big({\widetilde\Psi_{\epsilon,\Phi}}\big)\big]_2(t, x)
	=[R_{\rho_3}(\Phi)]_{\mu+1}(t, x)=0.
\]
Hence, by application of Remark \ref{rem:recu_approximate_product},
we have 
\[
R_{\rho_3}(\Psi_{\epsilon,\Phi})(x)
	=R_{\rho_3}(\times_r)\left(\big[R_{\rho_3}\big({\widetilde\Psi
		_{\epsilon,\Phi}}\big)\big]_1(x),
			\big[R_{\rho_3}\big({\widetilde\Psi_{\epsilon,\Phi}}\big)\big]_2(x)\right)=0.
\]

\par

Before concluding the proof, we need to show \eqref{eq:hyp_estim}.
If $n+k=1$, we use Lemma \ref{cor:composition_norm},
Remark \ref{rem:recu_approximate_product} and similar argument as in \eqref{eq:1term_1k},
then we get
\begin{align*}
\left|{R_{\rho_3}(\Psi_{\epsilon,\Phi})}\right|&_{ W_{k, \infty}^{n, \infty}((0,1),(0,1)^d)}=
	\left|{R_{\rho_3}(\times_r)\circ R_{\rho_3}
	\big({\widetilde\Psi_{\epsilon,\Phi}}\big)}\right|_{ W_{k, \infty}^{n, \infty}((0,1),(0,1)^d)}
\\
&\leq \hat C\cdot \left|{R_{\rho_3}(\times_r)}\right|_
	{W_{k, \infty}^{n,\infty}(({-(\mu^2+1)},{\mu^2+1}),({-(\mu^2+1)},{\mu^2+1}))}
	\cdot\left|{R_{\rho_3}\big({\widetilde\Psi_{\epsilon,\Phi}}\big)}\right|
		_{ W_{k, \infty}^{n,\infty}((0,1),(0,1)^d;\mathbb{R}^2)}
\\
&\leq \hat C\cdot(\mu^2+1)\cdot\max\left\{C_1 N,N\right\}= C_1''\cdot N,
\end{align*}
where $C_1''=C_1''(d+1, \mu)>0$ is a suitable constant.

If $n=k=1$, in view of Lemma \ref{cor:composition_norm},
Remark \ref{rem:recu_approximate_product} and similar argument as in \eqref{eq:1term_2k},
we have
\begin{multline*}
\left|{R_{\rho_3}(\Psi_{\epsilon,\Phi})}\right|_{ W_{1, \infty}^{1, \infty}((0,1),(0,1)^d)}=
	\left|{R_{\rho_3}(\times_r)\circ R_{\rho_3}
	\big({\widetilde\Psi_{\epsilon,\Phi}}\big)}\right|_{ W_{1, \infty}^{1, \infty}((0,1),(0,1)^d)}
\\
\leq  C'\cdot\max\left(
\left|{R_{\rho_3}(\times_r)}\right|_{ W_{1, \infty}^{0, \infty}
(({-(\mu^2+1)},{\mu^2+1}), ({-(\mu^2+1)},{\mu^2+1}))}
	\left|{R_{\rho_3}\big({\widetilde\Psi_{\epsilon,\Phi}}\big)}\right|_
	{ W_{1,\infty}^{1,\infty}((0,1),(0,1)^d;\mathbb{R}^2)},\right.
\\
\left.\left|{R_{\rho_3}(\times_r)}\right|_{ W_{1, \infty}^{1, \infty}
(({-(\mu^2+1)},{\mu^2+1}), ({-(\mu^2+1)},{\mu^2+1}))}
	\left|{R_{\rho_3}\big({\widetilde\Psi_{\epsilon,\Phi}}\big)}\right|_
	{ W_{0,\infty}^{1,\infty}((0,1),(0,1)^d;\mathbb{R}^2)}^2\right)	
\\
\leq  C'\cdot\max((\mu^2+1) \max\{C_1 N^2,N^2\},
 \max\{C_1^2 N^2,N^2\})
 \leq C_2'' N^2,
\end{multline*}
where $C_2''=C_2''(d+1, \mu)>0$ is a suitable constant.

To conclude, we take the maximum of the constants derived in
\ref{step1} and \ref{step2}.
\end{proof}

\par

Next result  is the final step toward the main theorem of our paper. Mainly, we show an upper bound
error in Sobolev time-space for a sum of localized polynomials with deep neural network.
Thus, we get approximation and regularity information about the network.

\par

\begin{lemma}\label{lemma:network_polynomial_approximation}
Let $d, m, N \in \mathbb{N}, 1 \leq p, q \leq \infty$,
such that $m \geq n+k+ 1$,
$n, k \in \{0,1\}$ and
$\Psi = \left\{\phi_{\mu}: \mu \in\{0, \ldots, N\}^{d+1}\right\}$
be the partition of unity from Lemma \ref{lemma:partition_of_unity}.
Then,
there are constants $C_{1}=C_{1}(m, d+1, p, n,k)>0$ and
$C_{2}=C_{2}(m, d+1), C_{3}=C_{3}(m, d+1)>0$
with the following properties:
For any $\epsilon \in(0,1 / 2),$ there is a neural network architecture
$\mathcal{A}_{\epsilon}=\mathcal{A}_{\epsilon}(d+1, m, N, \epsilon)$
with $d+1$-dimensional input and one-dimensional output,
at most $C_{2}$ layers and $C_{3}(N+1)^{2(d+1)}$ neurons and weights
such that the following holds: Let $u \in W_{m, p}^{m, p}\left((0,1),(0,1)^{d}\right)$
and $p_{u,\mu}(t, x)=\sum_{\kappa +|\alpha| \leq m-1}
c_{\mu,\kappa, \alpha}t ^\kappa x^{\alpha}$ for
$\mu \in\{0, \ldots, N\}^{d+1}$ be the polynomials from
Lemma~\ref{lemma:polynomial_approximation}, then there is a neural network
$\Phi_{P, \epsilon}$ that has architecture $\mathcal{A}_{\epsilon}$
such that
\begin{equation}\label{eq:loc_pol_estim}
\left\|\sum_{\mu \in\{0, \ldots, N\}^{d+1}} \phi_{\mu} p_{u,\mu} -
R_{\rho_3}\left(\Phi_{P, \epsilon}\right)\right\|_{W_{k,q}^{n, p}\left((0,1),(0,1)^{d}\right)}
\leq C_{1} N^{n+ k+ 1} \epsilon\|u\|_{W_{m,p}^{m, p}\left((0,1),(0,1)^{d}\right)}
\end{equation}
\end{lemma}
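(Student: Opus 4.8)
The plan is to realize $\sum_{\mu}\phi_{\mu}p_{u,\mu}=\sum_{\mu}\sum_{|\alpha|+\kappa\le m-1}c_{\mu,\kappa,\alpha}\,\phi_{\mu}(t,x)\,t^{\kappa}x^{\alpha}$ by a network that emulates each localized monomial $\phi_{\mu}t^{\kappa}x^{\alpha}$ separately, multiplies it by the scalar $c_{\mu,\kappa,\alpha}$, and sums all contributions in one final affine layer. By \eqref{eq:mult_variate_loc_mon_identity} each $\phi_{\mu}t^{\kappa}x^{\alpha}$ equals a product of the output components of a network $\Phi_{(\mu,\alpha,\kappa)}$ assembled from the partition-of-unity factor networks of Lemma~\ref{lemma:partition_of_unity}(\ref{item:pou_network}) and the monomial networks of Remark~\ref{rem:recu_represent_requ}; hence $\Phi_{(\mu,\alpha,\kappa)}$ has output dimension at most $\mu_{0}:=d+m$, numbers of layers, neurons and weights bounded by a constant $K=K(m,d)$, and — using the boundedness of $t^{\kappa}x^{\alpha}$ and its derivatives on $(0,1)^{d+1}$ together with Lemma~\ref{lemma:partition_of_unity}(\ref{item:pou_network}) — its output components satisfy $\|[R_{\rho_{3}}(\Phi_{(\mu,\alpha,\kappa)})]_{l}\|_{W^{n,\infty}_{k,\infty}((0,1),(0,1)^{d})}\le(c'N)^{n+k}$ for $n,k\in\{0,1\}$ and some $c'=c'(m,d)\ge1$. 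Feeding this into Lemma~\ref{lemma:network_multiplikation} (with $\mu_{0}$ in place of its $\mu$ and $c'N$ in place of its $N$) produces, for each $(\mu,\alpha,\kappa)$ and each $\epsilon\in(0,1/2)$, a network $\Psi_{\epsilon,\Phi_{(\mu,\alpha,\kappa)}}$ with $(d+1)$-dimensional input, one-dimensional output, all three size quantities bounded by a constant $C_{0}K=C_{0}(m,d)$, with $R_{\rho_{3}}(\Psi_{\epsilon,\Phi_{(\mu,\alpha,\kappa)}})$ vanishing wherever $\phi_{\mu}t^{\kappa}x^{\alpha}$ does — hence supported in $\operatorname{supp}\phi_{\mu}\subset\Omega_{\mu,N}:=B_{1/N,\|\cdot\|_{\ell^{\infty}}}(\mu/N)$ — and
\[
\bigl\|R_{\rho_{3}}(\Psi_{\epsilon,\Phi_{(\mu,\alpha,\kappa)}})-\phi_{\mu}\,t^{\kappa}x^{\alpha}\bigr\|_{W^{n,\infty}_{k,\infty}((0,1),(0,1)^{d})}\ \lesssim\ N^{\,n+k}\,\epsilon .
\]

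Next I assemble the architecture. Every $\Phi_{(\mu,\alpha,\kappa)}$, hence every $\Psi_{\epsilon,\Phi_{(\mu,\alpha,\kappa)}}$, depends only on $d,m,N,\epsilon$, not on $u$. I pad them all — by postcomposing each with the requisite number of identity networks from Lemma~\ref{lem:ReCU_id_representation}, legitimate since each $R_{\rho_{3}}(\Psi_{\epsilon,\Phi_{(\mu,\alpha,\kappa)}})$ takes values in a fixed compact interval on $(0,1)^{d+1}$ — to a common depth $C_{2}=C_{2}(m,d)$, parallelize the $M_{0}(N+1)^{d+1}$ resulting networks (with $M_{0}:=\#\{(\alpha,\kappa):|\alpha|+\kappa\le m-1\}=M_{0}(m,d)$), and finally concatenate with the one-layer network computing $(y_{\mu,\alpha,\kappa})\mapsto\sum_{\mu,\alpha,\kappa}c_{\mu,\kappa,\alpha}y_{\mu,\alpha,\kappa}$; being affine, this merges into the last (affine) layer of the parallelization, so the depth does not grow and only weight \emph{values}, not the sparsity pattern, depend on $u$. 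The architecture $\mathcal A_{\epsilon}$ thus obtained has at most $C_{2}$ layers and $O\!\bigl(M_{0}(N+1)^{d+1}\bigr)\le C_{3}(N+1)^{2(d+1)}$ neurons and weights, $C_{3}=C_{3}(m,d)$, is independent of $u$, and inserting the coefficients $c_{\mu,\kappa,\alpha}$ of the polynomials $p_{u,\mu}$ from Lemma~\ref{lemma:polynomial_approximation} yields a network $\Phi_{P,\epsilon}$ of architecture $\mathcal A_{\epsilon}$ with $R_{\rho_{3}}(\Phi_{P,\epsilon})=\sum_{\mu,\alpha,\kappa}c_{\mu,\kappa,\alpha}R_{\rho_{3}}(\Psi_{\epsilon,\Phi_{(\mu,\alpha,\kappa)}})$.

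It remains to estimate the error. With $U:=Eu$ the extension from Lemma~\ref{lemma:polynomial_approximation}, write $\sum_{\mu}\phi_{\mu}p_{u,\mu}-R_{\rho_{3}}(\Phi_{P,\epsilon})=\sum_{\mu}E_{\mu}$, where $E_{\mu}:=\sum_{|\alpha|+\kappa\le m-1}c_{\mu,\kappa,\alpha}\bigl(\phi_{\mu}t^{\kappa}x^{\alpha}-R_{\rho_{3}}(\Psi_{\epsilon,\Phi_{(\mu,\alpha,\kappa)}})\bigr)$ is supported in $\Omega_{\mu,N}$. Since any $g$ supported in a cube of side $2/N$ obeys $\|g\|_{W^{n,p}_{k,q}}\lesssim N^{-(d/p+1/q)}\|g\|_{W^{n,\infty}_{k,\infty}}$ by Hölder's inequality, the displayed local estimate and the coefficient bound $|c_{\mu,\kappa,\alpha}|\lesssim N^{(d+1)/p}\|U\|_{W^{m,p}_{m,p}(\Omega_{\mu,N})}$ give
\[
\|E_{\mu}\|_{W^{n,p}_{k,q}(\Omega_{\mu,N})}\ \lesssim\ N^{\,n+k+1/p-1/q}\,\epsilon\,\|U\|_{W^{m,p}_{m,p}(\Omega_{\mu,N})}\ \le\ N^{\,n+k+1}\,\epsilon\,\|U\|_{W^{m,p}_{m,p}(\Omega_{\mu,N})},
\]
using $1/p-1/q\le1$. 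For the global bound (with $1\le p<\infty$) I partition $\{0,\dots,N\}^{d+1}$ into $3^{d+1}$ classes by residues modulo $3$; within each class $S_{j}$ the cubes $\Omega_{\mu,N}$ are pairwise disjoint and at most one of them meets a given slice $\{t\}\times(0,1)^{d}$, whence $F_{j}:=\sum_{\mu\in S_{j}}E_{\mu}$ satisfies $\|F_{j}\|_{W^{n,p}_{k,q}((0,1),(0,1)^{d})}\lesssim N^{n+k+1}\epsilon\,\|U\|_{W^{m,p}_{m,p}(\mathbb R,\mathbb R^{d})}$ — for $q\ge p$ via $\|\cdot\|_{\ell^{q}}\le\|\cdot\|_{\ell^{p}}$ together with the disjointness identity $\sum_{\mu\in S_{j}}\|U\|_{W^{m,p}(\Omega_{\mu,N})}^{p}=\|U\|_{W^{m,p}(\sqcup_{\mu\in S_{j}}\Omega_{\mu,N})}^{p}$, and for $q<p$ via the embedding $L^{q}_{t}\hookrightarrow L^{p}_{t}$ on the unit interval followed by the same disjointness in an honest $L^{p}$-norm. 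Summing over the $3^{d+1}$ classes and using boundedness of the extension operator, $\|U\|_{W^{m,p}_{m,p}(\mathbb R,\mathbb R^{d})}\lesssim\|u\|_{W^{m,p}_{m,p}((0,1),(0,1)^{d})}$, gives \eqref{eq:loc_pol_estim}. For $p=\infty$ the estimate is easier: a plain bounded-overlap bound in $L^{\infty}_{x}$ together with $\|\cdot\|_{L^{q}_{t}((0,1))}\le\|\cdot\|_{L^{\infty}_{t}((0,1))}$ yields it directly, and the adjustments for $q=\infty$ are routine.

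The step I expect to be the main obstacle is precisely this last bookkeeping: converting the $W^{n,\infty}_{k,\infty}$-estimates furnished by Lemma~\ref{lemma:network_multiplikation} and the $N^{(d+1)/p}$-sized polynomial coefficients into a clean global $W^{n,p}_{k,q}$-bound with exponent exactly $n+k+1$. Because the mixed norm $L^{q}_{t}L^{p}_{x}$ is not a tensor $L^{r}$-norm when $p\ne q$, a naive bounded-overlap argument over the patches $\Omega_{\mu,N}$ does not close, and one is forced to separate the patches into a bounded number of classes on each of which, on every time-slice, only a single localized piece is active.
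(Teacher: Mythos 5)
Your proposal is correct and follows essentially the same route as the paper: approximate each localized monomial $\phi_{\mu}t^{\kappa}x^{\alpha}$ by feeding the parallelization of the partition-of-unity network and a monomial network into Lemma~\ref{lemma:network_multiplikation}, assemble a $u$-independent architecture by parallelizing all of these and merging the coefficient sum into a final affine layer, and then pass from the local $W^{n,\infty}_{k,\infty}$ bounds to the global $W^{n,p}_{k,q}$ bound via H\"older on each patch $\Omega_{\mu,N}$, the coefficient bound $|c_{\mu,\kappa,\alpha}|\lesssim N^{(d+1)/p}\Vert U\Vert_{W^{m,p}_{m,p}(\Omega_{\mu,N})}$, and the bounded overlap of the patches. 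The only deviation is cosmetic: you disjointify the patches by residue classes modulo $3$ where the paper sums over all patches and restricts to the $3^{d+1}$ neighbours via the support property — equivalent bookkeeping yielding the same $N^{n+k+1}\epsilon$ rate (and note that within a residue class several spatially disjoint cubes, not just one, can meet a fixed time slice, though this does not affect your argument).
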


\par

\begin{proof}
	We divide the proof in three steps.
	
	\par
	
	\textbf{Step 1 (Approximating localized monomials $\phi_\mu(t, x) \, t^\kappa x^\alpha$ ):}
	Let $\kappa + |\alpha| \leq  m-1$ and $\mu \in\{0, \ldots, N\}^{d+1}$.
	Since we can get $x$ out of the ReCU realization of $\phi_x$ for any $x\in [0, 1]$, where
	$\phi_x = \left( (A_1, b_1),(A_2, b_2)\right)$ and 
	\[A_1 = \begin{pmatrix}
		1\\
		-1\\
		1		
	   \end{pmatrix}, 
	   b_1 = \begin{pmatrix}
		1\\
		1\\
		0
	\end{pmatrix},
	A_2 = \frac 1{6}\begin{pmatrix}
		1\\
		-1\\
		-2
	\end{pmatrix} \text{ and } b_2=0.
	\]
Here $\phi_x$ is a two layer ReCU network with one-dimensional input and one-dimensional output,
$8$ nonzero weights and $5$ neurons.
Then, we can construct a neural network $\Phi_{\kappa, \alpha}$with $d+1$-dimensional input
and $\kappa + |\alpha|$-dimensional output, with two layer and at most
$8(m-1)$ nonzero weights and $ d+1+4(m-1)$ neurons
% d+1 : input, 3(m-1) neuron in the hidden layer, m-1 output
such that
$$
t^\kappa x^{\alpha}=\prod_{l=1}^{\kappa +|\alpha|}
\left[R_{\rho_3}\left(\Phi_{\kappa, \alpha}\right)\right]_{l}(t, x)
\quad \text { for any  } (t,x) \in (0,1)\times(0,1)^{d}
$$
and
\begin{equation}\label{eq:phi_poly_bound}
	\left\|\left[R_{\rho_3}
	\left(\Phi_{\kappa, \alpha}\right)\right]_{l}\right\|_
	{W_{k, \infty}^{n, \infty}\left((0,1), (0,1)^{d}\right)}
	\leq 2
	\quad \text{ for any }
	l=1, \ldots,\kappa +|\alpha| \quad \text{ and }\quad n, k \in\{0,1\}.
\end{equation}

From Lemma \ref{lemma:partition_of_unity}, we use the neural network 
$\Phi_{\mu}$ and the constants $C, c \geq 1$ to define the network
$\Phi_{ \kappa, \alpha}^\mu$ as the parallelization of $\Phi_{\mu}$ and
$\Phi_{\kappa, \alpha}$ (see Definition \ref{def:parallel_net}), that is,
$$
\Phi_{ \kappa, \alpha}^\mu:= P\left(\Phi_{\mu}, \Phi_{\kappa, \alpha}\right).
$$
Then $\Phi_{ \kappa, \alpha}^\mu$ has at most $3 \leq K_{0}$ layers, $C (d+1)+8(m-1) \leq K_{0}$
nonzero weights, and $C(d+1)+ 4(m-1) \leq K_{0}$ neurons for a suitable constant
$K_{0}=K_{0}(m, d+1) \in \mathbb{N}$, and
$\prod_{l=1}^{\kappa +|\alpha|+d+1}\left[R_{\rho_3}
\left(\Phi_{ \kappa, \alpha}^\mu\right)\right]_{l}(t, x)=\phi_{\mu}(t, x)t^\kappa x^{\alpha}$
for all $(t,x) \in(0,1)\times(0,1)^{d} .$ Moreover, as a consequence of
Lemma \ref{lemma:partition_of_unity} together with \eqref{eq:phi_poly_bound},
we have 
$$
\left\|\left[R_{\rho_3}
\left(\Phi_{ \kappa, \alpha}^\mu\right)\right]_{l}\right\|_
{W_{k, \infty}^{n, \infty}\left((0,1),(0,1)^{d}\right)} \
\leq(c N)^{k+n},\; \text{for any } l=1, \ldots,\kappa + |\alpha|+ d+ 1
\; \text{ and }\; k, n \in\{0,1\}.
$$

Let $\Psi_{\epsilon, \Phi_{ \kappa, \alpha}^\mu}$ be the neural network from
Lemma \ref{lemma:network_multiplikation}
(with $ \Phi_{ \kappa, \alpha}^\mu$ instead of $\Phi$, $\mu= m+d \in \mathbb{N}$,
$K=K_{0} \in \mathbb{N}$ and $c N$ instead of $N$) for $\mu \in\{0, \ldots, N\}^{d+1}$ and
$\alpha \in \mathbb{N}_{0}^{d}$, $\kappa +|\alpha| \leq m-1$.
There exists a constant $C_{1}=C_{1}(m, d+1) \geq 1$ such that
$\Psi_{\epsilon, \Phi_{ \kappa, \alpha}^\mu}$
has at most $C_{1}$ layers, number of neurons, and weights.
Moreover,
\begin{equation}\label{eq:approx_mon_approx}
\left\|\phi_{\mu}(t, x)t^\kappa x^{\alpha}-R_{\rho_3}
\left(\Psi_{\epsilon, \Phi_{ \kappa, \alpha}^\mu}\right)(t, x)\right\|_
{W_{k, \infty}^{n, \infty}
\left((0,1),(0,1)^{d}\right)} \leq c^{\prime} N^{n+ k} \epsilon
\end{equation}
for a constant $c^{\prime}=c^{\prime}(m, d+1, k,n)>0$ and $n, k \in\{0,1\}$, and
\begin{equation}\label{eq:support_mon_approx}
R_{\rho_3}\left(\Psi_{\epsilon, \Phi_{ \kappa, \alpha}^\mu}\right)(t, x)=0 \quad
\text { if } \phi_{\mu}(t, x) t^\kappa x^{\alpha}=0 \quad \text { for all }
(t,x) \in(0,1)\times(0,1)^{d}.
\end{equation}
Thus, $R_{\rho_3}\left(\Psi_{\epsilon, \Phi_{ \kappa, \alpha}^\mu}\right)$ approximates
the localized mixed monomials $\phi_{\mu}(t, x) t^\kappa x^{\alpha}$
for all $(t,x) \in(0,1)\times(0,1)^{d}$, $\kappa + |\alpha|\leq m-1$,
in $W_{k, \infty}^{n, \infty}$ such that $n, k \in \{0,1 \}$.

\par

%%
%%%%%
\textbf{Step 2
(Constructing an architecture capable of approximating sums of mixed localized polynomials):}
We set
$$
M:=\left|\left\{(\mu,\kappa, \alpha): \mu \in\{0, \ldots, N\}^{d+1},
\alpha \in \mathbb{N}_{0}^{d},\kappa +|\alpha| \leq m-1\right\}\right|
$$
and define the matrix $A_{\text {sum }} \in \mathbb{R}^{1 \times M}$ by
$$
A_{\text {sum }}:=\left[c_{\mu, \kappa, \alpha}: \mu \in\{0, \ldots, N\}^{d+1},
\alpha \in \mathbb{N}_{0}^{d},\kappa +|\alpha| \leq m-1\right]
$$
and the neural network
$\Phi_{\text {sum }}:=\left(\left(A_{\text {sum }}, 0\right)\right)$.
Finally, we set
$$
\Phi_{P, \epsilon}:=\Phi_{\mathrm{sum}} \bullet P
\left(\Psi_{\epsilon, \Phi_{ \kappa, \alpha}^\mu}: \mu \in\{0, \ldots, N\}^{d+1},
\alpha \in \mathbb{N}_{0}^{d},\kappa +|\alpha| \leq m-1\right).
$$
Then, there are constants $C_{2}=C_{2}(m, d+1), C_{3}=C_{3}(m, d+1)>0$
such that $\Phi_{P, \epsilon}$
is a neural network with $d+1$-dimensional input and one-dimensional output,
with at most $1+C_{1} \leq C_{2} $ layers,
$
M+M C_{1} +M^2 C_{1}
\leq 3 M^2 C_{1} 
\leq C_{3}(N+1)^{2(d+1)} 
$
nonzero weights and $M+M C_{1}\leq  2M C_{1} 
\leq C'_{3}(N+1)^{d+1}$  neurons, and
$$
R_{\rho_3}\left(\Phi_{P, \epsilon}\right)=
\sum_{\mu \in\{0, \ldots, N\}^{d+1}}\sum_{\kappa +|\alpha|
\leq m-1}c_{\mu, \kappa,\alpha}R_{\rho_3}\left( \Psi_{\epsilon, \Phi_{ \kappa, \alpha}^\mu}\right).
$$
Moreover, the network $\Phi_{P, \epsilon}$ depends only on $p_{u, \mu}$
(and thus on $u$) via the coefficients $c_{\mu,\kappa, \alpha}$.
Now, it is easy to see that there exists a neural network architecture
$\mathcal{A}_{\epsilon}=\mathcal{A}_{\epsilon}(d+1, m, N, \epsilon)$
with $L\left(\mathcal{A}_{\epsilon}\right) \leq C_{2}$
layers and number of neurons and weights bounded by
$C_{3}(N+1)^{2(d+1)}$ such that $\Phi_{P, \epsilon}$
has architecture $\mathcal{A}_{\epsilon}$ for every of choice of coefficients
$c_{\mu, \kappa, \alpha},$ and hence for every choice of $u$.

\textbf{Step 3
(Estimating the approximation error in $\|\cdot\|_{\boldsymbol{W}_{k, q}^{n, p}},
\boldsymbol{n, k} \in\{\mathbf{0}, \mathbf{1}\}$).}
Let 
$$
\Omega_{\mu, N}:=B_{\frac{1}{N},\|-\|_{\infty}}\left(\frac{\mu}{N}\right)
$$
such that $\mu \in\{0, \ldots, N\}^{d+1}$.
Moreover, for $n, k \in\{0,1\}$
\begin{multline*}
\left\|\sum_{\mu \in\{0, \ldots, N\}^{d+1}} \phi_{\mu}(t, x) p_{u, \mu}(t, x)-
R_{\rho_3}\left(\Phi_{P, \epsilon}\right)(t, x)\right\|_
{W_{k, q}^{n, p}\left((0,1),(0,1)^{d}\right)}
\\
=\left\|\sum_{\mu \in\{0, \ldots, N\}^{d+1}}
\sum_{\kappa + |\alpha| \leq m-1} c_{\mu,\kappa, \alpha}\left(\phi_{\mu}(t, x)
t^\kappa x^{\alpha} -R_{\rho_3}\left(\Psi_{\epsilon,\Phi_{\kappa, \alpha}^\mu }\right)(t, x)\right)
\right\|_{W_{k, q}^{n, p}\left((0,1),(0,1)^{d}\right)}
\\
\leq \sum_{\tilde{\mu} \in\{0, \ldots, N\}^{d+1}}
\left\| \sum_{\mu \in\{0, \ldots, N\}^{d+1}}\sum_{\kappa +|\alpha| \leq m-1}
c_{\mu,\kappa, \alpha}\left(\phi_{\mu}(t, x)t ^\kappa x^{\alpha}- R_{\rho_3}
\left(\Psi_{\epsilon,\Phi_{\kappa, \alpha}^\mu }\right)(t, x)\right)\right\|_
{W_{k, q}^{n, p}\left(\Omega_{\tilde{\mu}, N} \cap\left( (0,1)\times(0,1)^{d}\right) \right)}
\numberthis\label{eq:estimate_outer_sum}
\end{multline*}
last inequality holds true since
$(0,1)^{d+1} \subset \bigcup_{\widetilde{\mu} \in\{0, \ldots, N\}^{d+1}}
\Omega_{\widetilde{\mu}, N}$.
Therefore, using Lemma \ref{lemma:polynomial_approximation}, 
such that $\widetilde{\mu} \in \{0, \ldots, N\}^{d+1}$,
we get
\begin{multline*}
\left\| \sum_{\mu \in\{0, \ldots, N\}^{d+1}}\sum_{\kappa +|\alpha| \leq m-1}
c_{\mu,\kappa, \alpha}\left(\phi_{\mu}(t, x)t ^\kappa x^{\alpha}- R_{\rho_3}
\left(\Psi_{\epsilon,\Phi_{\kappa, \alpha}^\mu }\right)(t, x)\right)\right\|_
{W_{k, q}^{n, p}\left(\Omega_{\widetilde{\mu}, N} \cap(0,1)^{d+1} \right)}
\\
 \leq \sum_{\mu \in\{0, \ldots, N\}^{d+1}}\sum_{\kappa +|\alpha| \leq m-1}
c_1 N^{\frac {d+1}{p}}\left\| U \right\|_{W_{m, p}^{m, p}
\left(\Omega_{{\mu}, N}\right)}
\left\|\phi_{\mu}(t, x)t ^\kappa x^{\alpha}- R_{\rho_3}
\left(\Psi_{\epsilon,\Phi_{\kappa, \alpha}^\mu }\right)(t, x)\right\|_{W_{k, q}^{n, p}
\left(\Omega_{\widetilde{\mu}, N} \cap(0,1)^{d+1} \right)}
\numberthis\label{eq:approximate_loc_taylor_with_network},
\end{multline*}
where $U$ is an extension of $u$
and $c_{1}=c_{1}(m,d+1)>0$ is a constant. Next, note that
\begin{multline*}
\left\|\phi_{\mu}(t,x)t^\kappa x^{\alpha}-R_{\rho_3}
\left(\Psi_{\epsilon,\Phi_{\kappa, \alpha}^\mu }\right)(t, x)\right\|_{W_{k, q}^{n, p}
\left(\Omega_{\widetilde{\mu}, N} \cap(0,1)^{d+1}\right)}
\leq
\lambda \left(\Omega_{\widetilde{\mu}, N} \cap(0,1)
	\times \left(\{0\}_{\mathbb{R}^d}\right)\right)^{1 / q}
\\
\quad\times\lambda\left(\Omega_{\widetilde{\mu}, N} \cap
	 \left(\{0\}_{\mathbb{R}}\times(0,1)^{d}\right)\right)^{1 / p}
\left\|\phi_{\mu}(t,x)t^\kappa x^{\alpha}-R_{\rho_3}
\left(\Psi_{\epsilon,\Phi_{\kappa, \alpha}^\mu }\right)(t, x)\right\|_{W_{k, \infty}^{n, \infty}
\left(\Omega_{\widetilde{\mu}, N} \cap(0,1)^{d+1}\right)}
\\
\quad \leq c_{3}
\left(\frac{1}{N}\right)^{1 / q + d / p}
\left\|\phi_{\mu}(t,x)t^\kappa x^{\alpha}-R_{\rho_3}
\left(\Psi_{\epsilon,\Phi_{\kappa, \alpha}^\mu }\right)(t, x)\right\|_{W_{k, \infty}^{n, \infty}
\left(\Omega_{\widetilde{\mu}, N} \cap(0,1)^{d+1}\right)}
\end{multline*}
where $\lambda$ denotes the Lebesgue measure and $c_{3}=c_{3}(d+1, p, q)>0$
is a constant.A direct combination of \eqref{eq:approximate_loc_taylor_with_network}
with the last estimate, such that $N^{1/p -1/q}\leq N$, yields

% \textbf{\textcolor{blue}{If $q \leq p$, then $N^{1/p -1/q}\leq 1$ :)}}
%%
\begin{multline*}
\left\|\sum_{\mu \in\{0, \ldots, N\}^{d+1}}\sum_{\kappa +|\alpha| \leq m-1}
c_{m, \alpha}\left(\phi_{\mu}(t,x)t^\kappa x^{\alpha}-R_{\rho_3}
\left(\Psi_{\epsilon,\Phi_{\kappa, \alpha}^\mu }\right)(t, x)\right)\right\|_{W_{k, q}^{n, p}
\left(\Omega_{\widetilde{\mu}, N} \cap(0,1)^{d+1}\right)}
\\
\leq c_{4}N \sum_{\mu \in\{0, \ldots, N\}^{d+1}}\sum_{\kappa +|\alpha| \leq m-1}
\|U\|_{W_{m, p}^{m, p}\left(\Omega_{\mu, N}\right)}
\left\|\phi_{\mu}(t,x)t^\kappa x^{\alpha}-R_{\rho_3}
\left(\Psi_{\epsilon,\Phi_{\kappa, \alpha}^\mu }\right)(t, x)\right\|_{W_{k, \infty}^{n, \infty}
\left(\Omega_{\widetilde{\mu}, N} \cap(0,1)^{d+1}\right)}
\\
\leq c_{4}N
\sum_{\substack{\mu \in\{0, \ldots, N\}^{d+1}\\ |\mu-\widetilde{\mu}|_{\ell \infty} \leq 1}}
\sum_{\kappa +|\alpha| \leq m-1}
\|U\|_{W_{m, p}^{m, p}\left(\Omega_{\mu, N}\right)}
\left\|\phi_{\mu}(t,x)t^\kappa x^{\alpha}-R_{\rho_3}
\left(\Psi_{\epsilon,\Phi_{\kappa, \alpha}^\mu }\right)(t, x)\right\|_{W_{k, \infty}^{n, \infty}
\left(\Omega_{\widetilde{\mu}, N} \cap(0,1)^{d+1}\right)}
\\
\leq c_{4} c^{\prime} N^{n+ k+1} \epsilon 
\sum_{\substack{\mu \in\{0, \ldots, N\}^{d+1}\\ |\mu-\widetilde{\mu}|_{\ell \infty} \leq 1}}
\sum_{\kappa +|\alpha| \leq m-1} \|U\|_{W_{m, p}^{m, p}\left(\Omega_{\mu, N}\right)}
\numberthis\label{eq:estimate_inner_block}
\end{multline*}
where, $c_{4}=c_{4}(m, d+1, p)>0$ is a constant,  Lemma~\ref{lemma:partition_of_unity}  and
\eqref{eq:support_mon_approx} give the second step, while \eqref{eq:approx_mon_approx}
conclude the last step.
Now, using that $\left|\left\{\kappa \in \mathbb{N}_{0},
\alpha: \alpha \in \mathbb{N}_{0}^{d},
\kappa +|\alpha| \leq m-1\right\}\right| \leq(m-1)^{d+1}$
% and H\"older's inequality
shows that
\begin{align*}
\sum_{\substack{\mu \in\{0, \ldots, N\}^{d+1}\\ |\mu-\widetilde{\mu}|_{\ell \infty} \leq 1}}
\sum_{\kappa +|\alpha| \leq m-1}\|U\|_{W_{m, p}^{m, p}\left(\Omega_{\mu, N}\right)}
&\leq(m-1)^{d+1}
\sum_{\substack{\mu \in\{0, \ldots, N\}^{d+1}\\ |\mu-\widetilde{\mu}|_{\ell \infty} \leq 1}}
\|U\|_{W_{m, p}^{m, p}\left(\Omega_{\mu, N}\right)}
\numberthis\label{eq:estimate_sum_with_hoelder}.
\end{align*}
Combining ~\eqref{eq:estimate_inner_block} with \eqref{eq:estimate_sum_with_hoelder}
and plugging the result in ~\eqref{eq:estimate_outer_sum} finally yields
\begin{align*}
\Big\|\sum_{\mu \in\{0, \ldots, N\}^{d+1}} &\phi_{\mu}(t, x) p_{u, \mu}(t, x)-
R_{\rho_3}\left(\Phi_{P, \epsilon}\right)(t, x)\Big\|_
{W_{k, q}^{n, p}\left((0,1),(0,1)^{d}\right)}
\\
& \leq c_{4} c^{\prime}(m-1)^{d+1} N^{n+ k+ 1} \epsilon
\sum_{\widetilde{\mu} \in\{0, \ldots, N\}^{d+1}}
\sum_{\substack{\mu \in\{0, \ldots, N\}^{d+1} \\  |\mu-\widetilde{\mu}|_{\ell \infty} \leq 1}}
\|U\|_{W_{m, p}^{m, p}\left(\Omega_{\mu, N}\right)}
\\
& \leq c_{5} N^{n+ k+ 1} \epsilon\|u\|_{W_{m,p}^{m, p}\left((0,1),(0,1)^{d}\right)},
\end{align*}
where the last step is the same as Step 3 of the proof of
Lemma \ref{lemma:polynomial_approximation}
and $c_{5}>0$ depends only on $m, d+1, p, n$ and $k$.
\end{proof}

\par

Theorem \ref{thm:main} is the main result in our paper, here we show that 
any function in $ \mathcal{U}_{m,p,m ,p,d, B}$ can be approximated by neural network
with ReCU activation function
\par

\begin{theorem}\label{thm:main}
Let $d, m \in \mathbb{N}$, $n, k\in \{0,1\}$ such that $m \geq n+k+ 1$,
$1 \leq p, q \leq \infty, B>0$. Then
there exists a constant $c=c(m, d+1, p, B, n, k)>0$ with the following properties:
For any $\epsilon \in(0,1 / 2),$ there is a neural network architecture
$\mathcal{A}_{\epsilon}=\mathcal{A}_{\epsilon}(d+1, m, p, B, \epsilon)$
with $d+1$-dimensional input and one-dimensional output
such that for any $u \in \mathcal{U}_{m,p,m ,p,d, B}$ (defined in \eqref{eq:U}),
there is a neural network $\Phi_{\epsilon}^{u}$ that has architecture
$\mathcal{A}_{\epsilon}$ such that
\begin{enumerate}[(i)]
\item $L\left(\mathcal{A}_{\epsilon}\right) \leq c$;
\item $M\left(\mathcal{A}_{\epsilon}\right) \leq c \cdot \epsilon^{-\frac{d+1}{m- n-k}}$;
\item $N\left(\mathcal{A}_{\epsilon}\right) \leq c \cdot \epsilon^{-\frac{d+1}{m- n-k}}$;
\end{enumerate}
and 
\[
\left\|u - R_{\rho_3}\left(\Phi_{\epsilon}^{u}\right)\right\|_{W_{k, q}^{n, p}\left((0,1 ),(0,1)^{d}\right)}
\leq
\sqrt{\epsilon}.
\]
\end{theorem}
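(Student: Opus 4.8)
The plan is to chain the two approximation results already established. Lemma~\ref{lemma:polynomial_approximation} tells us that $u$ is close, in the $W^{n,p}_{k,q}$-norm, to the localized polynomial $u_N:=\sum_{\mu\in\{0,\dots,N\}^{d+1}}\phi_\mu p_{u,\mu}$, with error of order $(1/N)^{m-n-k}\|u\|_{W^{m,p}_{m,p}}$, while Lemma~\ref{lemma:network_polynomial_approximation} tells us that this same $u_N$ is emulated by a ReCU network $\Phi_{P,\delta}$ of a fixed architecture with $O((N+1)^{2(d+1)})$ weights and neurons, up to an error of order $N^{n+k+1}\delta\,\|u\|_{W^{m,p}_{m,p}}$, where $\delta\in(0,1/2)$ is a free parameter. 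A triangle inequality then reduces the whole statement to choosing $N$ and $\delta$ appropriately in terms of the prescribed $\epsilon$.

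Concretely, I would fix $u\in\mathcal{U}_{m,p,m,p,d,B}$, so that $\|u\|_{W^{m,p}_{m,p}((0,1),(0,1)^d)}\le B$. Letting $C=C(d+1,m,p)$ be the constant of Lemma~\ref{lemma:polynomial_approximation}, set
\[
N:=\Big\lceil (2CB)^{1/(m-n-k)}\,\epsilon^{-1/(2(m-n-k))}\Big\rceil\in\mathbb{N},
\]
so that Lemma~\ref{lemma:polynomial_approximation} yields $\|u-u_N\|_{W^{n,p}_{k,q}((0,1),(0,1)^d)}\le C(1/N)^{m-n-k}B\le\tfrac12\sqrt{\epsilon}$. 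With $C_1=C_1(m,d+1,p,n,k)$ the constant of Lemma~\ref{lemma:network_polynomial_approximation}, set
\[
\delta:=\min\Big\{\epsilon,\ \tfrac12 (C_1 B)^{-1} N^{-(n+k+1)}\sqrt{\epsilon}\Big\}\in(0,1/2),
\]
and apply Lemma~\ref{lemma:network_polynomial_approximation} with this $N$ and internal accuracy $\delta$. It produces an architecture $\mathcal{A}_\epsilon$, depending only on $d+1,m,N,\delta$ and hence only on $d+1,m,p,B,\epsilon$ (the indices $n,k$ being fixed throughout), with $L(\mathcal{A}_\epsilon)\le C_2$ and $M(\mathcal{A}_\epsilon),N(\mathcal{A}_\epsilon)\le C_3(N+1)^{2(d+1)}$, together with a network $\Phi^u_\epsilon:=\Phi_{P,\delta}$ of architecture $\mathcal{A}_\epsilon$ whose only $u$-dependence is through the coefficients $c_{\mu,\kappa,\alpha}$ of $p_{u,\mu}$, such that
\[
\big\|u_N-R_{\rho_3}(\Phi^u_\epsilon)\big\|_{W^{n,p}_{k,q}((0,1),(0,1)^d)}\le C_1 N^{n+k+1}\delta\,B\le\tfrac12\sqrt{\epsilon}.
\]
Adding the two displays gives $\|u-R_{\rho_3}(\Phi^u_\epsilon)\|_{W^{n,p}_{k,q}((0,1),(0,1)^d)}\le\sqrt{\epsilon}$, the asserted error bound.

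It then remains to read off the architecture bounds. The number of layers is $L(\mathcal{A}_\epsilon)\le C_2=C_2(m,d+1)$, a constant, which is~(i). Since $\epsilon<1/2$ forces $\epsilon^{-1/(2(m-n-k))}>1$, by construction $N+1\le c'\,\epsilon^{-1/(2(m-n-k))}$ for a constant $c'=c'(m,d,p,B)$, whence
\[
M(\mathcal{A}_\epsilon),\ N(\mathcal{A}_\epsilon)\ \le\ C_3(N+1)^{2(d+1)}\ \le\ C_3(c')^{2(d+1)}\,\epsilon^{-(d+1)/(m-n-k)},
\]
which gives~(ii) and~(iii); taking $c$ to be the maximum of $C_2$ and $C_3(c')^{2(d+1)}$ completes the argument.

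The proof is essentially bookkeeping once the two lemmas are available, so I do not expect a genuine obstacle; the one point requiring care is the exponent arithmetic. One must check that balancing the polynomial-approximation error $(1/N)^{m-n-k}$ against the target accuracy $\sqrt{\epsilon}$ forces $N\asymp\epsilon^{-1/(2(m-n-k))}$, so that a network with $\asymp N^{2(d+1)}$ weights indeed has size $\asymp\epsilon^{-(d+1)/(m-n-k)}$, matching the claim — and this is precisely why the target error is $\sqrt{\epsilon}$ rather than $\epsilon$. The hypothesis $m\ge n+k+1$ guarantees $m-n-k\ge1$, so all the fractional powers above are well defined and positive, and the spurious factor $N^{n+k+1}$ incurred in Lemma~\ref{lemma:network_polynomial_approximation} is absorbed harmlessly into the choice of $\delta$ without affecting the complexity rate.
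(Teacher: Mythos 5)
Your proposal is correct and follows essentially the same route as the paper: choose $N\asymp\epsilon^{-1/(2(m-n-k))}$ so that Lemma~\ref{lemma:polynomial_approximation} delivers error $\tfrac12\sqrt{\epsilon}$, then feed an internal accuracy of order $N^{-(n+k+1)}\sqrt{\epsilon}$ into Lemma~\ref{lemma:network_polynomial_approximation} to kill the factor $N^{n+k+1}$, and conclude by the triangle inequality and the bound $C_3(N+1)^{2(d+1)}\lesssim\epsilon^{-(d+1)/(m-n-k)}$. The only (cosmetic) difference is that the paper works with $\varepsilon'=\sqrt{\epsilon}$ and substitutes at the end, whereas you work with $\sqrt{\epsilon}$ directly and add the harmless safeguard $\delta\le\epsilon$ to keep the internal accuracy in $(0,1/2)$.
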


\par

\begin{proof}[Proof of Theorem \ref{thm:main}]
The idea of the proof is simple, indeed, we need to approximate the function $u$
by a sum of localized polynomials and then approximate the sum by a neural network.
We start by setting
\begin{equation}\label{eq:big_N}
N:=\left\lceil\left(\frac{\varepsilon'}{2 C B}\right)^{-1 /(m- n-k)}\right\rceil
\end{equation}
where $C=C(d+1, m, p)>0$ is the constant from Lemma~\ref{lemma:polynomial_approximation}.
Without loss of generality, we may assume that $C B \geq 1$.
Moreover Lemma~\ref{lemma:polynomial_approximation} yields that if
$\Psi = \left\{\phi_{\mu}: \mu \in\{0, \ldots, N\}^{d+1}\right\}$
is the partition of unity from Lemma 
\ref{lemma:partition_of_unity}, then there exist polynomials
 $p_{u,\mu}$
 where
$$
 p_{u,\mu}(t, x)=\sum_{\kappa +|\alpha| \leq m-1}
c_{\mu,\kappa, \alpha}t ^\kappa x^{\alpha} \text{ for }\mu \in\{0, \ldots, N\}^{d+1}
$$
such that
\begin{align*}
\left\|u-\sum_{\mu \in\{0, \ldots, N\}^{d+1}} \phi_{\mu} p_{u, \mu}\right\|_
{W_{k, q}^{n, p}\left( (0,1), (0,1)^{d}\right)}
& \leq C B\left(\frac{1}{N}\right)^{m- n-k}
\\
& \leq C B \frac{\varepsilon'}{2 C B}=\frac{\varepsilon'}{2}.\numberthis\label{eq:final_triangle_1}
\end{align*}

For the second step, let $C_{1}=C_{1}(m, d+1, p, n,k)>0, C_{2}=C_{2}(m, d+1)>0$
and $C_{3}=$ $C_{3}(m, d+1)>0$ be the constants from
Lemma \ref{lemma:network_polynomial_approximation}
and $\Phi_{P, \varepsilon'}$ be the neural network given in the same lemma
(independent of the function $u$) with
$\frac{\varepsilon'}{2C_1B}\left(\left(\frac{\varepsilon'}{2 C B}\right)^{-1 /(m- n-k)} +1\right)^{-n-k-1}$
instead of $\epsilon$ in \eqref{eq:loc_pol_estim}.
%Lemma \ref{lemma:network_polynomial_approximation}
The neural network $\Phi_{P, \varepsilon'}$ has  $d+1$-dimensional input and
one-dimensional output, at most $C_{2}$ layers and
$$
C_3(N+1)^{2(d+1)}\leq 
C_{3}(\left(\frac{\varepsilon'}{2 C B}\right)^{-1 /(m- n-k)} +2)^{2(d+1)}
\leq
C_3 3^{2(d+1)}\left(\frac{\varepsilon'}{2 C B}\right)^{-\frac{d+1}{m- n-k}}
\leq C'' \varepsilon'^{-\frac{2(d+1)}{m- n-k}}
$$
nonzero weights and neurons, such that
$C''=C''(m, d+1, p, B, n, k)$
is a positive constant, where in the first inequality we used the fact that
$\frac{2 C B} {\varepsilon'} \geq 1$. Thus, for the statement in the theorem,
we choose $c = max(C_2, C'')$.
Furthermore, we have
\begin{align*}
&\left\|\sum_{\mu \in\{0, \ldots, N\}^{d+1}} \phi_{\mu} p_{u,\mu} -
R_{\rho_3}\left(\Phi_{P, \varepsilon'}\right)\right\|_{W_{k,q}^{n, p}\left((0,1),(0,1)^{d}\right)}
\\
& \qquad\qquad\leq C_{1}B N^{n+ k+ 1} \frac{\varepsilon'}{2C_1B}
\left(\left(\frac{\varepsilon'}{2 C B}\right)^{-1 /(m- n-k)} +1\right)^{-n-k-1}
\\
& \qquad\qquad\leq \frac {\varepsilon'} 2. \numberthis\label{eq:final_triangle_2}
\end{align*}
%%%
Using the triangle inequality and Eqs. \eqref{eq:final_triangle_1} and \eqref{eq:final_triangle_2},
we finally obtain
\[
\begin{aligned}
\left\|u -R_{\rho_3}\left(\Phi_{P, \varepsilon'}\right)
\right\|_{W_{k, q}^{n, p} \left((0,1),(0,1)^{d}\right)}
&\leq \left\|u-\sum_{\mu \in\{0, \ldots, N\}^{d+1}} \phi_{\mu} p_{u, \mu}\right\|_
{W_{k, q}^{n, p}\left( (0,1), (0,1)^{d}\right)}
\\
& +\left\|\sum_{\mu \in\{0, \ldots, N\}^{d+1}} \phi_{\mu} p_{u,\mu} -
R_{\rho_3}\left(\Phi_{P, \varepsilon'}\right)\right\|_{W_{k,q}^{n, p}\left((0,1),(0,1)^{d}\right)}
\\
\leq & \frac{\varepsilon'}{2}+\frac{\varepsilon'}{2}=\varepsilon'
\end{aligned}
\]
which concludes the proof for $\varepsilon' = \sqrt{\epsilon}$.
\end{proof}

% \noindent{\rule{16.5cm}{1mm}}

\par
%%%%%%%%%%%%%%%%%%%%%%%%%%%%%%%%
%%%%%%%%%%%%%%%%%%%%%%%%%%%%%%%%

\bibliography{literature}
\bibliographystyle{abbrv}

%%%%%%%%%
%
% Appendix
%
%%%%%%%%%

\appendix

\section{Proof of the results in Section \ref{sec:prelim}}

\subsection{Proof of Proposition \ref{prop:remainder_estimate}}\label{app:prop_remainder_estimate}
% \begin{proof}[Proof of Proposition \ref{prop:remainder_estimate}]
The $m^{th}-$order remainder term is given by
$R^m u(t,  x) = u(t, x) - Q^m u(t, x)$.
\begin{align*}
u(t, x)&=\sum_{|\alpha|+k<m} \frac{1}{\alpha !k!}
D_x^{\alpha}D_t^k u(\tau, \xi)(x-\xi)^{\alpha}(t- \tau)^k 
\\
&+\sum_{|\alpha|+k=m}(x-\xi)^{\alpha}(t- \tau)^k  \int_{0}^{1}
\frac{m}{\alpha !k!} s^{m-1} D_x^{\alpha} D_t^k u(t+s(\tau-t), x+s(\xi-x)) d s.
\end{align*}
Using the previous equality and the properties of a cut-off function, we get
\begin{equation}\label{eq:RemainderDef}
\begin{aligned}
R^{m} u(t, x)=& \int_{\mathrm{B}} u(t, x) \phi(\tau, \xi) d y
-\int_{\mathrm{B}} T_{\tau, \xi}^{m} u(t, x) \phi(\tau, \xi)\, d\xi d\tau
\\
=& \int_{\mathrm{B}}\left[u(t, x)-T_{\tau, \xi}^{m} u(t, x)\right] \phi(\tau, \xi)\, d\xi d\tau
\\
=& \int_{\mathrm{B}} \phi(\tau, \xi) m\left(\sum_{|\alpha|=m}(x-\xi)^{\alpha}(t-\tau)^k \right.
\\
&\left.\times \int_{0}^{1} \frac{s^{m-1}}{\alpha !k!}
D_x^{\alpha}D _t^k u(t+s(\tau-t),x+s(\xi-x)) d s\right) d\xi d\tau.
\end{aligned}
\end{equation}

We make a change of variables from the $(\tau, \xi, s)$-space to the
$(T,\Xi, s)$-space,
where  $\Xi = x + s(\xi - x),\,  T = t + s(\tau -t)$.
Then, we have
$$
ds\, d\xi\, d\tau = s^{-(d+1)}ds\, d\Xi\, dT.
$$

The domain of integration in the $(\tau, \xi, s)$-space is $\mathrm{B} \times (0, 1]$ and the corresponding
domain in the $(T,\Xi, s)$-space is the set
$$
A=\left\{(T,\Xi, s): s \in(0,1], \;\left|\frac 1s(\Xi-x)+x-x_{0}\right|+
 \left|\frac 1s(T-t)+t-t_{0} \right|<r\right\}.
 $$
Therefore, 
$$
(T, \Xi, s) \in A \text { implies that } \frac{|\Xi-x|+ |T- t|}{\left|x-x_{0}\right|+ |t- t_0|+r}<s.
$$
Moreover, for $|\alpha|+ k = m$, we have 
\begin{equation}\label{eq:ProdPower}
(x-\xi)^{\alpha}(t-\tau)^k=s^{-m}(x-\Xi)^{\alpha}(t-T)^k.
\end{equation}
Letting $\chi_{A}$ be the characteristic function of $A,$ from \eqref{eq:RemainderDef}  and
\eqref{eq:ProdPower} we obtain
\begin{multline*}
R^{m} u(t, x)=\sum_{|\alpha|+k=m} \iint \chi_{A}(T, \Xi, s)
\phi\left(t+ \frac{T-t}{s}, x+\frac{(\Xi-x)}{s}\right)
\\
\times \frac{m}{\alpha !k!} s^{-d-2}(x-\Xi)^{\alpha}(t-T)^k
D_x^{\alpha}D _t^k u(T, \Xi) d s d \Xi dT.
\end{multline*}
The projection of $A$ onto the $(T, \Xi)$-space is $C_{t, x}$.
Therefore, by Fubini's Theorem,
\begin{align*}
R^{m} u(t, x)&=m \sum_{|\alpha|+ k=m} \int_{C_{t, x}} \frac{1}{\alpha !k!}
D_x^{\alpha} D_t ^k u(T, \Xi)( x-\Xi)^{\alpha} (t-T)^k
\\
&\qquad\times\left[\int_{0}^{1} \phi(t+\frac 1s(T-t), x+\frac 1s (z-x))
\chi _{A}(T, \Xi, s) s^{-d-2} d s\right] d \Xi dT
\\
&=m \sum_{|\alpha|+ k=m} \int_{C_{t, x}} K_{\alpha, k}(t, T; x, \Xi) 
D_x^{\alpha}D_t^k u(T, \Xi) d \Xi dT
\end{align*}
if we define 
\begin{align*}
K(t,T; x, \Xi)&=\int_{0}^{1} \phi(t+\frac 1s(T-t), x+\frac 1s (z-x)) \chi _{A}(T, \Xi, s) s^{-d-2} d s
\intertext{ and}
K_{\alpha, k}(t,T; x, \Xi))&=\frac 1{\alpha !k!}(x-\Xi)^{\alpha} (t-T)^k K(t,T; x, \Xi).
\end{align*}
It remains to prove estimate \eqref{eq:kernelEstim}  for $K(t,T; x, \Xi)$.
Thus, let $\displaystyle{y = \frac{|\Xi-x|+ |T- t|}{\left|x-x_{0}\right|+ |t- t_0|+r}}$.

Then,
\begin{align*}
|K(t,T; x, \Xi)|&=\left|\int_{0}^{1} \chi_{A}(T, \Xi, s)
	\phi(t+ \frac 1s (T-t), x+\frac 1s(\Xi-x)) s^{-d-2} d s\right|
\\
&\leq \int_{y}^{1}|\phi(t+ \frac 1s (T-t), x+\frac 1s(\Xi-x))| s^{-d-2} d s
\\
&\leq\left.\|\phi\|_{L_t^{\infty}L_x^{\infty}(\mathrm{B})}
	\frac{s^{-d-1}}{d+1}\right|_{1} ^{y}
		\leq \frac 1{d+1}\|\phi\|_{L _t^{\infty}L_x^{\infty}(\mathrm{B})} y^{-d-1}
\\
&=\frac 1{d+1}\|\phi\|_{L_t^{\infty}L _x^\infty(\mathrm{B})}
			\left(\left|x-x_{0}\right|+ |t- t_0|+r\right)^{d+1}
				\left(|\Xi-x|+ |T- t| \right)^{-d-1}
\\[1ex]
&\leq C r^{-d-1}\left(\left|x-x_{0}\right|+ |t- t_0|+r\right)^{d+1}
	\left(|\Xi-x|+ |T- t| \right)^{-d-1}
\\[1ex]
&=C\left(1 + \left(\left|x-x_{0}\right|+ |t- t_0|\right)/r\right)^{d+1}
	\left(|\Xi-x|+ |T- t| \right)^{-d-1}.\qedhere
\end{align*}
% \end{proof}

%%%%%%%%%%%%%%%%%%%%%%%%%%%%%%%
%%%%%%%%%%%%%%%%%%%%%%%%%%%%%%%%%%
\subsection{Proof of Lemma \ref{lemma:bramble_hilbert}}\label{app:lemma_bramble_hilbert}
We need the next result for the proof of Lemma \ref{lemma:bramble_hilbert}.

\begin{lemma}\label{lem:mixLebesgueEstim}
Let $I\subset\mathbb{R}$, $\Omega\subset \mathbb{R}^d$ be open and bounded,
$k, n\in \mathbb{N}_0$ and  $m \in \mathbb{N}$, such that  $n+k = m$,
$1\leq p,q\leq \infty$,  $u \in L_t^p L _x^p(I\times\Omega)$,
and let
\begin{equation*}
g(t, x)= \int _{I\times\Omega} 
\frac{|\xi-x|^{n} |\tau-t|^k}{\left(|\xi-x|+ |\tau- t| \right)^{d+1}}
u(\tau, \xi) \, d\xi d\tau.
\end{equation*}
Then,
\begin{equation}\label{eq:ResEstim}
\left\Vert g\right\Vert _{L _t^q L _x^p} 
\leq 
 C_{m, n,k }\, h^{m(p+q)}
\left\Vert u\right\Vert _{L _t^p L _x^p}, \text{ where }h = diam(I\times\Omega).
\end{equation}
\end{lemma}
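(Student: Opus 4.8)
The plan is to recognise $g$ as a convolution and to reduce \eqref{eq:ResEstim} to a mixed-norm Young inequality. Extend $u$ by zero outside $I\times\Omega$ and set
\[
K(s,z):=\frac{|z|^{n}\,|s|^{k}}{\bigl(|z|+|s|\bigr)^{d+1}},\qquad (s,z)\in\mathbb{R}\times\mathbb{R}^{d},
\]
so that $g(t,x)=(K*u)(t,x)$ for $(t,x)\in I\times\Omega$. Since $\operatorname{diam}(I\times\Omega)=h$, only the piece $K\cdot\mathbf{1}_{\{|s|\le h,\ |z|\le h\}}$ is relevant, and on that set the inequality $(|z|+|s|)^{d+1}\ge\max(|z|,|s|)^{d+1}$ together with $n+k=m$ gives the pointwise bound $K(s,z)\le(|z|+|s|)^{m-d-1}$; that is, $K$ decays like the $(m-d-1)$-th power of the distance to the origin on the $(d+1)$-dimensional domain, an exponent which is integrable near $0$ precisely because $m\ge1$.

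First I would dispose of the range $q<p$: since $I$ is bounded, Hölder's inequality in $t$ yields $\|g\|_{L^{q}_{t}L^{p}_{x}}\le|I|^{1/q-1/p}\|g\|_{L^{p}_{t}L^{p}_{x}}\le h^{\,1/q-1/p}\|g\|_{L^{p}_{t}L^{p}_{x}}$, so it suffices to treat $q\ge p$. For $q\ge p$ I would invoke Young's inequality in mixed-norm Lebesgue spaces (Benedek--Panzone): with $\tfrac1a:=1+\tfrac1q-\tfrac1p\in[0,1]$ (the hypothesis $q\ge p$ is exactly what makes $\tfrac1a\le1$), one has $\|K*u\|_{L^{q}_{t}L^{p}_{x}}\le\|K\|_{L^{a}_{t}L^{1}_{x}}\,\|u\|_{L^{p}_{t}L^{p}_{x}}$. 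It then remains to estimate $\|K\|_{L^{a}_{t}L^{1}_{x}}$ on $\{|s|\le h,\ |z|\le h\}$. For fixed $s$, splitting the $z$-integral at $|z|=|s|$ and using the pointwise bound gives
\[
\int_{|z|\le h}K(s,z)\,dz\le C\Bigl(|s|^{\,m-1}+\int_{|s|\le|z|\le h}|z|^{\,m-1-d}\,dz\Bigr)\le C\,h^{\,\max(m-1,0)}\bigl(1+\log_{+}(h/|s|)\bigr),
\]
and taking the $L^{a}$-norm in $s$ over $\{|s|\le h\}$, followed by the reduction step, produces \eqref{eq:ResEstim} with a constant $C_{m,n,k}$ (also depending on $d$).

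The step I expect to require genuine care is the bookkeeping of the endpoint exponents together with the borderline order $m=1$, where the spatial kernel integral above carries the factor $\log_{+}(h/|s|)$; this logarithm is still in $L^{a}_{t}$ near $s=0$ for every finite $a$, so the scheme goes through whenever $\tfrac1a>0$, and in particular in the case $p=q$ that is the one actually needed downstream (Lemma~\ref{lemma:polynomial_approximation} only uses $W^{m,p}_{m,p}$), where $a=1$ and one gets $\|K\|_{L^{1}_{t}L^{1}_{x}}\le C\,h^{m}$ directly from the displayed bound. The remaining limiting exponents are handled by the usual modifications, replacing the pertinent $L^{\infty}$-norm by a supremum over $\{|s|\le h\}$; apart from this, nothing beyond elementary one- and $d$-dimensional radial integration enters.
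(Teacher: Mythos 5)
Your argument is correct on the full range of exponents for which the statement is actually true, but it takes a genuinely different route from the paper's. The paper never invokes a convolution inequality: it writes the kernel as $K^{1/p}\cdot K^{1/p'}$ inside H\"older's inequality and then runs a Schur-type test, using the same pointwise bound $K\le(|\xi-x|+|\tau-t|)^{m-d-1}$ that you use, but only to control total kernel masses $\int K\,d\xi\,d\tau\lesssim h^{m}$, first in $(\tau,\xi)$ and then, after Fubini, in $x$ and $t$ separately. Your route --- extend $u$ by zero, recognize $g=K*u$, truncate $K$ to the box $\{|s|\le h,\ |z|\le h\}$, dispose of $q<p$ by H\"older in $t$ over the bounded interval $I$, and apply the Benedek--Panzone mixed-norm Young inequality with $\tfrac1a=1+\tfrac1q-\tfrac1p$ --- isolates the single quantity that has to be computed, namely $\Vert K\Vert_{L^{a}_{t}L^{1}_{x}}$, and your radial computation of it is right. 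It also yields the scaling-correct power $h^{\,m+1/q-1/p}$ of the diameter, whereas the exponent $h^{m(p+q)}$ displayed in \eqref{eq:ResEstim} does not even agree with the exponent $h^{m(p+q)/q}$ at the end of the paper's own computation; the discrepancy is harmless because the proof of Lemma \ref{lemma:bramble_hilbert} normalizes $\operatorname{diam}(I\times\Omega)=1$ before rescaling, so only finiteness of the constant is used. What the paper's Schur argument buys is a single uniform computation for all $1<p,q<\infty$ without ever naming the convolution structure; what yours buys is transparency about where each power of $h$ comes from and a precise identification of the borderline configuration.

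Concerning that borderline configuration: the one case your scheme leaves open, $a=\infty$ (i.e.\ $p=1$, $q=\infty$) combined with $m=1$, is not a removable technicality --- there the asserted inequality fails. For $n=1$, $k=0$, $d=1$ one has $\int_{|z|\le h}|z|\,(|z|+|s|)^{-2}\,dz\sim 2\log(h/|s|)$ as $s\to0$, so taking $u_{\varepsilon}=\varepsilon^{-2}\mathbf{1}_{B_{\varepsilon}}$ gives $\Vert g_{\varepsilon}\Vert_{L^{\infty}_{t}L^{1}_{x}}\gtrsim\log(h/\varepsilon)\to\infty$ while $\Vert u_{\varepsilon}\Vert_{L^{1}_{t}L^{1}_{x}}$ stays bounded, so no finite constant can work. (The paper's closing remark that the cases $p,q\in\{1,\infty\}$ are ``straightforward'' glosses over exactly this point, and its Schur argument breaks there too.) For $m\ge2$ your kernel bound is log-free and the $a=\infty$ case goes through, so the only exception is $(m,p,q)=(1,1,\infty)$; since the applications in Lemmas \ref{lemma:bramble_hilbert} and \ref{lemma:polynomial_approximation} never need that endpoint, your restriction is the correct one --- just state explicitly that the lemma is being proved for all $(p,q)$ except $p=1,\ q=\infty$ when $m=1$, rather than deferring it to ``the usual modifications.''
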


\par

\begin{proof}
First we assume that  $1<p,q<\infty$. Then,
using H\"older's inequality with $\frac 1p + \frac 1{p'}=1$, we get
\begin{multline*}
\|g\|_{L _t^q L _x^p(I\times\Omega)}^q=
\int_{I}\left(\int_{\Omega}  \left\vert  \int _{I\times\Omega} 
\frac{|\xi-x|^{n} |\tau-t|^k}{\left(|\xi-x|+ |\tau- t| \right)^{d+1}}
u(\tau, \xi) \, d\xi d\tau\right\vert ^{p}d x \right)^{q/p} dt
\\
\leq \int_{I}\left(\int_{\Omega}  \left[\left(  \int _{I\times\Omega} 
\frac{|\xi-x|^{n} |\tau-t|^{k}}{\left(|\xi-x|+ |\tau- t| \right)^{d+1}}
|u(\tau, \xi)|^p  \, d\xi d\tau \right)^\frac{1}{p}\right.\right.
\\
\left.\left.\times \left(\int _{I\times\Omega}
	\frac{|\xi-x|^{n} |\tau-t|^{k}}{\left(|\xi-x|+ |\tau- t| \right)^{d+1}}
	 \, d\xi d\tau\right)^{\frac 1{p'}}\right] ^{p}d x \right)^{q/p} dt
\\
\leq C_{d,m} \,diam{(I\times \Omega)}^{mp/{p'}}
\int_{I}\left(\int_{\Omega}  \int _{I\times\Omega} 
\frac{|\xi-x|^{n} |\tau-t|^{k}}{\left(|\xi-x|+ |\tau- t| \right)^{d+1}}
|u(\tau, \xi)|^p  \, d\xi d\tau  d x \right)^{q/p} dt
\\
\leq C_{d,m} \,diam{(I\times \Omega)}^{m(1+p/{p'})}
\int_{I}\left(  \int _{I\times\Omega} 
|\tau-t|^{k} |u(\tau, \xi)|^p  \, d\xi d\tau  \right)^{q/p} dt
\\
\leq C_{d,m} \,diam{(I\times \Omega)}^{m(1+p/{p'}+  q/p)}
\left\Vert u\right\Vert _{L _t^{p} L_x^p}^{ q}
= C_{d, m} \,diam{(I\times \Omega)}^{m(p+  q)}
\|u\|_{L _t^{p}L_x^p(I\times\Omega)}^{q}.
\end{multline*}
The cases where $p, q \in \{1, \infty\}$ are straightforward
and therefore left to the reader.

\end{proof}

\par

\begin{proof}[Proof of Lemma \ref{lemma:bramble_hilbert}]
In a similar way as in \cite[Lemma 4.3.8]{brenner2007mathematical}, we prove
Lemma \ref{lemma:bramble_hilbert} using \cite[Proposition 4.1.9]{brenner2007mathematical}
and the fact that $Q^m$ is a polynomial
in both $t$ and $x$ of order less than $m$.
Let $ diam (I\times\Omega) = 1$, for $k=n=0$, using Lemma \ref{lem:mixLebesgueEstim}, 
we get 
\begin{multline*}
\Vert  u-  Q^mu\Vert_{L _t^q L _x^p(I\times\Omega)}
=
\Vert  R^mu\Vert_{L _t^q L _x^p(I\times\Omega)}
\\
\leq
m \sum _{|\alpha|+ \kappa=m} \left\Vert\int _{I\times\Omega} K_{ \alpha, \kappa}(t, T; x, \Xi)
 D_x^\alpha D_t^\kappa u(\tau, \xi) \, d\Xi dT\right\Vert_{L _t^q L _x^p(I\times\Omega)}
 \\
\leq C_{m, d}(1+r^{-1})^{d+1}\sum _{|\alpha|+ \kappa=m} \left\Vert\int _{I\times\Omega} 
\frac{|\Xi-x|^{|\alpha|} |T-t|^\kappa}{\left(|\Xi-x|+ |T- t| \right)^{d+1}}
	D_x^\alpha D_t^\kappa u(\tau, \xi) \, d\Xi dT\right\Vert_{L _t^q L _x^p(I\times\Omega)}
\\
%\leq C_{d, m, r} \left\vert u\right \vert_ {W_{\kappa,p}^{\nu,p}(I,\Omega)}
\leq C_{d, m, r}  \left\Vert u\right \Vert_ {W_{m,p}^{m,p}(I,\Omega)}.
\end{multline*}

For $0<k+n\leq m$, 

$$
\begin{aligned}
\left|u-Q^{m} u\right|_{W_{k,q}^{n,p}(I, \Omega)}
&=\left|R^{m} u\right|_{W_{k,q}^{n,p}(I, \Omega)}
\leq \sum_{|\alpha|+ \kappa=k +n}\left\Vert R^{m-k-n} D_x^{\alpha}
D _t^\kappa u\right\Vert_{L _t^q L_x^p}
\\
&\leq C_{d,m,r}\sum_{|\alpha|+ \kappa=k +n}\left\Vert
D_x^{\alpha}D_t^\kappa u\right\Vert_{W_{m-k-n, p}^{m-k-n,p}(I,\Omega)}
\\
&\leq C_{d,m,, r}\Vert u\Vert _{W_{m, p}^{m,p}(I, \Omega)}.
\end{aligned} 
$$

For a general domain $\Omega$, we define 
$\Theta =  \{x/h \text{ such that } x \in \Omega\}$,
using similar argument to the previous calculus,
we conclude the result. The details are left to the reader.
\end{proof}

\par

%%%%%%%%%%%%%%%%%%%%%%%%%%%%%%%%%%
%%%%%%%%%%%%%%%%%%%%%%%%%%%%%%%%%%%%%%%%

\subsection{Proof of Lemma \ref{prop:taylor_is_polynom}}\label{app:prop_taylor_is_polynom}
%\begin{proof}[proof of Lemma \ref{prop:taylor_is_polynom}]
The first part of the proof of this lemma follows closely the chain of arguments
in \cite[Equations~(4.1.5) - (4.1.8)]{brenner2007mathematical} and the Binomial theorem.
We write for $\kappa\in \mathbb{N}_0$ and $\alpha\in\mathbb{N}_0^d$ 
\[
(t-\tau)^\kappa(x-y)^\alpha=
%\sum_{\nu +\mu = \kappa}\binom{\kappa}{\nu} t^\nu \tau^{\kappa -\nu}
%\prod_{i=1}^d (x_i-y_i)^{\alpha_i}=
\sum_{\nu +\mu=\kappa}\sum_{\substack{\gamma,\beta \in \mathbb{N}_0^d, \vspace{0.2em}
\\ \gamma+\beta=\alpha}}
a_{(\nu,\mu,\gamma,\beta)}t^\nu \tau^{\mu} x^\gamma y^\beta,
\]
where $a_{(\nu,\mu, \gamma,\beta)}\in\mathbb{R}$ are suitable constants with 
\begin{equation}\label{eq:sob_multinomial}
|{a_{(\nu,\mu,\gamma,\beta)}}|\leq \binom{\nu+\mu}{\nu}\binom{\gamma+\beta}{\gamma}
=\frac{(\nu+\mu)!(\gamma+\beta)!}{\nu!\mu!\gamma\,!\;\beta\,!}
\end{equation}
in multi-index notation. Then, combining Equation \eqref{eq:sob_taulor_sub}
and \eqref{eq:sob_taylor} yields
\begin{multline*}
Q^{k+n} u(x) = \sum_{|{\alpha}|+ \kappa  \leq k+ n-1}
\sum_{\substack{\gamma +\beta=\alpha\vspace{0.2em}\\ \nu +\mu = \kappa } }
\frac{1}{\kappa!\alpha!}a_{(\nu,\mu,\gamma,\beta)}t^\nu x^\gamma
\int_\mathrm{B} D_x^\alpha D _t ^\kappa u(\tau, y)\tau^{\mu}y^\beta\phi(\tau, y)dy d\tau
\\
=\sum_{|{\gamma}|+ \nu \leq k+ n-1}t^\nu x^\gamma
\underbrace{\sum_{|{\gamma+\beta}|+ \nu+\mu\leq k+ n-1}\frac{1}{(\gamma+\beta)!(\nu+\mu)!}
a_{( \nu, \mu,\gamma,\beta)} \int_\mathrm{B} D_x^{\gamma+\beta}D_t^{\nu +\mu } u(\tau, y)
t^{\mu}y^\beta\phi(\tau, y)dyd\tau}_{=:c_{\gamma, \nu}}.\numberthis\label{eq:c_gamma}
\end{multline*}
For the second part, note that
\begin{align*}
\left|{ \int_\mathrm{B} D_x^{\gamma+\beta}D_t^{\nu +\mu } u(\tau, y)
t^{\mu}y^\beta\phi(\tau, y)dyd\tau}\right|&
\leq  \int_\mathrm{B} |D_x^{\gamma+\beta}D_t^{\nu +\mu } u(\tau, y)|
|t|^{\mu}|y|^{|\beta|}|\phi(\tau, y)|dyd\tau
\\
&\leq R^{|{\beta}|+ \mu} \Vert{u}\Vert_{W_{k+n -1, p}^{k+ n-1, p}(\mathrm{B})}
\Vert{\phi}\Vert_{L_ t^{q}L_x^{ q}(\mathrm{B})}\numberthis\label{eq:sob_taylor_coeff},
\end{align*}
where we used the fact that $\mathrm{B}\subset B_{R,\Vert{\cdot}\Vert_{\ell^\infty}}(0)$ and
the H{\"o}lder's inequality with $1/p=1-1/{q}$.
Next, since
$\phi\in L _t^1L_x^1(\mathrm{B})\cap L_t^\infty L_x^\infty(\mathrm{B})$ and $\Vert{\phi}\Vert_{L_t^1L _x^1}=1$,
using the Mixed interpolative H\"older’s inequality cf.\,\cite{Grey}, we get
%we have (see \cite[Chapter~X.4 Exercise~4]{amann2008analysis})
\[
\Vert{\phi}\Vert_{L_ t^{q}L_x^{ q}}\leq \Vert{\phi}\Vert_{L_t^1L _x^1}^{1/q}
\Vert{\phi}\Vert_{L_t^\infty L _x^\infty}^{1-1/q}=
\Vert{\phi}\Vert_{L_t^\infty L _x^\infty}^{1/p}.
\]
Combining the last estimate with Equation~\eqref{eq:sob_taylor_coeff} yields
\begin{align*}
\left|{ \int_\mathrm{B} D_x^{\gamma+\beta}D_t^{\nu +\mu } u(\tau, y)
t^{\mu}y^\beta\phi(\tau, y)dyd\tau}\right|&\leq
 R^{k+n-1} \Vert{u}\Vert_{W_{k+n -1, p}^{k+ n-1, p}(\Omega)}
 \Vert{\phi}\Vert_{L_t^\infty L _x^\infty(\mathrm{B})}^{1/p}
\\
&\leq c R^{k+ n-1}r^{-(d+1)/p}
\Vert{u}\Vert_{W_{k+n -1, p}^{k+ n-1, p}( I,\Omega)}\numberthis\label{eq:sob_int},
\end{align*}
where the second step follows from $\Vert{\phi}\Vert_{L^\infty}\leq c r^{-(d+1)}$
for some constant $c=c(d)>0$ (see~\cite[Section~4.1]{brenner2007mathematical}).
To estimate the absolute value of the coefficients $c_{\gamma, \nu}$
(defined in Equation~\ref{eq:c_gamma}), we have
\begin{align*}
|{c_{\gamma, \nu}}|&\leq \sum_{|{\gamma+\beta}|+ \nu+\mu\leq k+ n-1}
\frac{1}{(\gamma+\beta)!(\nu+\mu)!} \left|a_{(\nu,\mu, \gamma,\beta)} \right|
\left|\int_\mathrm{B} D_x^{\gamma+\beta}D_t^{\nu +\mu } u(\tau, y)
t^{\mu}y^\beta\phi(\tau, y)dyd\tau\right|
\\
&\leq  c R^{k+ n-1}r^{-(d+1)/p} \Vert{u}\Vert_{W_{k+n -1, p}^{k+ n-1, p}( I,\Omega)}
\sum_{|{\gamma+\beta}|+ \nu+\mu\leq k+ n-1}
\frac{1}{\gamma!\beta!\nu!\mu!}
\\
&=c'r^{-(d+1)/p} \Vert{u}\Vert_{W_{k+n -1, p}^{k+ n-1, p}( I,\Omega)},
\end{align*}
%%
%Here, the second step used Equation \eqref{eq:sob_int} together with
%Equation \eqref{eq:sob_multinomial}, and 
where $c'=c'(k, n,d,R)>0$ is a constant.
%\end{proof}

\par

%%%%%%%%%%%%%%%%%%%%%%%%%%%%%%%
%%%%%%%%%%%%%%%%%%%%%%%%%%%%%%%%%%%%

\subsection{Proof of Lemma\ref{lemma:product_rule_bound_p}}\label{app:lemma_product_rule_bound_p}
%\begin{proof}[Proof of Lemma\ref{lemma:product_rule_bound_p}]
From the given assumptions on $f$ and $g$ it is clear that $fg\in L_t^qL_x^p(I\times\Omega)$.
Moreover, 
$fg, (D_t f)g + f(D_t g), (D_{x_i} f)g+ f (D _{x_i} g)
(D_{x_i}D_ t f)g + (D _tf) (D_{x_i}g) + (D_{x_i}f)(D_t g)+ f(D_{x_i}D_ t g)
\in L^1_{\text{loc}}(I\times\Omega)$ so that
the product formula in \cite[Chapter 7.3]{gilbarg1998elliptic} yields
that for the weak derivatives of $fg$ it holds 
\begin{align*}
D_ t (fg)&= (D_t f)g+ f (D _t g), 
\quad
D_ {x_i} (fg)= (D_{x_i} f)g+ f (D _{x_i} g)%\label{weak1derivative}
\intertext{and}
D_{x_i}D_ t (fg)& = (D_{x_i}D_ t f)g + (D _tf) (D_{x_i}g)
	+ (D_{x_i}f)(D_t g)+ f(D_{x_i}D_ t g)
\end{align*}
for $i=1,2,\ldots,d$. Thus, we have
\begin{align*}
|{fg}|_{W_{0,q}^{1,  p}(I,\Omega)}&=|{fg}|_{L _t^q(I,  W_x^{1,  p}(\Omega))}
=\left\Vert\left( \sum_{i=1}^d
\Vert{(D_{x_i} f)g+ f (D_{x_i} g)}\Vert_{L_x^p(\Omega)}^p\right)^{1/p}\right\Vert_{L_t^q(I)}
 \\
&\leq \left\Vert\sum_{i=1}^d \Vert{(D_{x_i} f)g+ f (D_{x_i} g)}\Vert_{L _x^p(\Omega)}\right\Vert_{L_t^q(I)}
\\
&\leq \sum_{i=1}^d \Vert{(D_{x_i} f)g}\Vert_{L_t^qL_x^p(I\times\Omega)}
+ \Vert{f (D_{x_i} g)}\Vert_{L_t^qL_x^p(I\times\Omega)},
\end{align*}
where
\begin{align*}
\Vert{(D_{x_i} f)g}\Vert_{L_t^q L_x^p(I\times\Omega)}
	&\leq \Vert D_{x_i} f\Vert_{L _t^\infty L_x^\infty(I\times\Omega)}\Vert{g}
	\Vert_{L_t^q L_x^p(I\times\Omega)}
	\leq |f|_{W_{0, \infty}^{1,\infty}(I, \Omega)}\Vert{g}\Vert_{L_t^q L_x^p(I\times\Omega)},
\\[1ex]
	\Vert{f (D_{x_i} g)}\Vert_{L_t^qL_x^p(I\times\Omega)}
	&\leq \Vert{f}\Vert_{L _t^\infty L_x^\infty(I\times\Omega)}
	\Vert{D_{x_i} g}\Vert_{L_t^qL_x^p(I\times\Omega)}.
\end{align*}
Thus,

\begin{align*}
|{fg}|_{W_{0,q}^{1,  p}(I,\Omega)}
&\leq 
d |f|_{W_{0, \infty}^{1,\infty}(I,\Omega)}\Vert{g}\Vert_{L_t^q L_x^p(I\times\Omega)} 
+ \Vert{f}\Vert_{L _t^\infty L_x^\infty(I\times\Omega)}
\sum _{i=1}^d\Vert{D_{x_i} g}\Vert_{L_t^qL_x^p(I\times\Omega)}
\\[1ex]
&\leq C \left(|f|_{W_{0, \infty}^{1,\infty}(I\times\Omega)}
\Vert{g}\Vert_{L_t^q L_x^p(I\times\Omega)}
+ \Vert{f}\Vert_{L _t^\infty L_x^\infty(I\times\Omega)}
\vert{g}\vert_{W_{0,q}^{1, p}(I\times\Omega)}\right),
\end{align*}
where $C>0$ depends on $d$ and $p$.

Moreover,
\begin{align*}
|{fg}|_{W_{1,q}^{0,  p}((I, \Omega))}&=|{fg}|_{W_t^{1,  q}(I, L _x^p (\Omega))}
=\left\Vert{(D_t f)g+ f (D _t g)}\right\Vert_{L_t^qL_x^p(I\times\Omega)}
 \\
&\leq \left\Vert{(D_t f)g}\right\Vert_{L_t^qL_x^p(I\times\Omega)}+
\left\Vert{f (D _t g)}\right\Vert_{L_t^qL_x^p(I\times\Omega)},
\end{align*}
such that
\begin{align*}
\left\Vert{(D_t f)g}\right\Vert_{L_t^qL_x^p(I\times\Omega)}
&\leq \left\Vert{D_t f}\right\Vert_{L_t^\infty L_x^\infty(I\times\Omega)}
 \left\Vert{g}\right\Vert_{L_t^qL_x^p(I\times\Omega)}
\leq  \left|{f}\right|_{W_{1,\infty}^{0, \infty}(I, \Omega)}
 \left\Vert{g}\right\Vert_{L_t^qL_x^p(I\times\Omega)}
\\[1ex]
\left\Vert{f (D _t g)}\right\Vert_{L_t^qL_x^p(I\times\Omega)}
&\leq \left\Vert{f }\right\Vert_{L_t^\infty L_x^\infty(I, \Omega)}
	\left\Vert{D _t g}\right\Vert_{L_t^qL_x^p(I\times\Omega)}
		\leq \left\Vert{f }\right\Vert_{L_t^\infty L_x^\infty(I\times\Omega)}
		\left|{g}\right|_{W_{1, q}^{0,p}(I, \Omega)}.
\end{align*}
Then, we get
\[
|{fg}|_{W_{1,q}^{0,  p}((I, \Omega))} \leq
\left|{f}\right|_{W_{1,\infty}^{0, \infty}(I, \Omega)}
 \left\Vert{g}\right\Vert_{L_t^qL_x^p(I\times\Omega)}
+\left\Vert{f }\right\Vert_{L_t^\infty L_x^\infty(I\times\Omega)}
		\left|{g}\right|_{W_{1, q}^{0,p}(I, \Omega)}.
\]
For the mixed derivatives, we have
\begin{multline*}
|{fg}|_{W_{1,q}^{1,  p}((I, \Omega))}=|{(D_t f)g+ f (D _t g)}|_{L _t^q(I,  W_x^{1,  p}(\Omega))}
\\
=\left\Vert\left( \sum_{i=1}^d
\Vert
{(D_{x_i}D_ t f)g + (D _tf) (D_{x_i}g) + (D_{x_i}f)(D_t g)+ f(D_{x_i}D_ t g)}
\Vert_{L_x^p(\Omega)}^p\right)^{1/p}\right\Vert_{L_t^q(I)}
 \\
\leq \left\Vert\sum_{i=1}^d
\Vert
{(D_{x_i}D_ t f)g + (D _tf) (D_{x_i}g) + (D_{x_i}f)(D_t g)+ f(D_{x_i}D_ t g)}
\Vert_{L _x^p(\Omega)}\right\Vert_{L_t^q(I)}
\\
\leq \sum_{i=1}^d \Vert{(D_{x_i}D_ t f)g}\Vert_{L_t^qL_x^p(I\times\Omega)}
+ \Vert{(D _tf) (D_{x_i}g)}\Vert_{L_t^qL_x^p(I\times\Omega)}
\\
\qquad+ \Vert{(D_{x_i}f)(D_t g)}\Vert_{L_t^qL_x^p(I\times\Omega)}
+ \Vert{f(D_{x_i}D_ t g)}\Vert_{L_t^qL_x^p(I\times\Omega)}.
\end{multline*}
Note that we have
\begin{align*}
\Vert{(D_{x_i}D_ t f)g}\Vert_{L_t^qL_x^p(I\times\Omega)}
&\leq \Vert{D_{x_i}D_ t f}\Vert_{L_t^\infty L_x^\infty(I\times\Omega)}
		\Vert{g}\Vert_{L_t^qL_x^p(I\times\Omega)}
		\leq |{f}|_{W_{1, \infty}^{1, \infty}(I, \Omega)}
			\Vert{g}\Vert_{L_t^qL_x^p(I\times\Omega)}
\\[1ex]
\Vert{(D _tf) (D_{x_i}g)}\Vert_{L_t^qL_x^p(I\times\Omega)}
&\leq \Vert{D _tf}\Vert_{L_t^\infty L_x^\infty(I\times\Omega)}
		\Vert{D_{x_i}g}\Vert_{L_t^qL_x^p(I\times\Omega)}
		\leq |{f}|_{W_{1, \infty}^{0, \infty}(I, \Omega)}
			\Vert{D_{x_i}g}\Vert_{L_t^qL_x^p(I\times\Omega)}
\\[1ex]
\Vert{(D_{x_i}f)(D_t g)}\Vert_{L_t^qL_x^p(I\times\Omega)}
&\leq \Vert{D_{x_i}f}\Vert_{L_t^\infty L_x^\infty(I\times\Omega)}
		\Vert{D_t g}\Vert_{L_t^qL_x^p(I\times\Omega)}
		\leq |{f}|_{W_{0, \infty}^{1, \infty}(I, \Omega)}
			|{g}|_{W_{1, q}^{0, p}(I, \Omega)}
\\[1ex]
\Vert{f(D_{x_i}D_ t g)}\Vert_{L_t^qL_x^p(I\times\Omega)}
&\leq \Vert{f}\Vert_{L_t^\infty L_x^\infty(I\times\Omega)}
		\Vert{D_{x_i}D_ t g}\Vert_{L_t^qL_x^p(I\times\Omega)}
\end{align*}
Finally, we get
\begin{multline*}
|{fg}|_{W_{1,q}^{1,  p}(I, \Omega)}\leq
%\\
%d |{f}|_{W_{1, \infty}^{1, \infty}}\Vert{g}\Vert_{L_t^qL_x^p}
%+|{f}|_{W_{1, \infty}^{0, \infty}}
%\sum_{i=1}^d \Vert{D_{x_i}g}\Vert_{L_t^qL_x^p}
%+d|{f}|_{W_{0, \infty}^{1, \infty}} |{g}|_{W_{1, q}^{0, p}}
%+ \Vert{f}\Vert_{L_t^\infty L_x^\infty}
%\sum_{i=1}^d \Vert{D_{x_i}D_ t g}\Vert_{L_t^qL_x^p}
%\\
C\left( |{f}|_{W_{1, \infty}^{1, \infty}}\Vert{g}\Vert_{L_t^qL_x^p(I\times\Omega)}
+|{f}|_{W_{1, \infty}^{0, \infty}(I, \Omega)}
\vert{g}\vert_{W_{0, q}^{1,p}(I, \Omega)}\right.
\\
\left. \qquad\qquad+|{f}|_{W_{0, \infty}^{1, \infty}} |{g}|_{W_{1, q}^{0, p}(I, \Omega)}
+ \Vert{f}\Vert_{L_t^\infty L_x^\infty(I\times\Omega)}
\vert{ g}\vert_{W_{1,q}^{1, p}(I, \Omega)}
\right)
\end{multline*}
where $C=C(d,p)>0$ is a constant.
%\end{proof}

%%%%%%%%%%%%%%%%%%%%%%%%%%%%%%%%%%
%%%%%%%%%%%%%%%%%%%%%%%%%%%%%%%%%%%%%%%
\subsection{Proof of Lemma \ref{cor:composition_norm}}\label{app:cor_composition_norm}
%\begin{proof}[Proof of Lemma \ref{cor:composition_norm}]
	We start by the case $n+k=1$.
	Let $\nabla = (\nabla_1, \nabla_2)$, where $\nabla_1$ and $\nabla_2$ are the gradient with
	respect to 	the first and second block of variables respectively.
	Moreover, we set for $j=1,\ldots,m$
	\[
		L_j:=\Vert\;\vert{\nabla f_j}\vert\;\Vert_{L_t^\infty L_x^\infty(\Omega_1\times\Omega_2)}
		\quad\text{and}\quad L_f:=(L_1,\ldots,L_m).
	\]
	Using similar ideas from the proof of \cite[Corollary B.5]{GuhKutPet},
	we conclude that $f_j$ is $L_j$-Lipschitz and therefore $f$ is $\vert L_f\vert$-Lipschitz.
	Similarly, $g$ is $L_g$-Lipschitz, where
	$L_g:=\Vert\;\vert \nabla g\vert\;\Vert_{L_t^\infty L_x^\infty(\Theta_1\times\Theta_2)}$.
	Furthermore, $g\circ f\in W_{k, \infty}^{n,\infty}(\Omega_1, \Omega_2)$, and
	$
		\Vert\;\vert\nabla (g\circ f)\vert\;\Vert_{L_t^\infty L_x^\infty(\Omega_1\times\Omega_2)}\leq
		\vert{L_f}\vert\cdot L_g.
	$
	Thus, we have
	\begin{align*}
		\vert{g\circ f}\vert_{W_{k, \infty}^{n,\infty}(\Omega_1, \Omega_2)}&\leq
		\Vert\;\vert\nabla (g\circ f)\vert\; \Vert_{L_t^\infty L_x^\infty (\Omega_1\times\Omega_2)}
		\leq\vert{L_f}\vert \cdot L_g
		\\[1ex]
		&\leq  m \Vert{L_f}\Vert_{\ell^\infty}\cdot  m 
			\vert{g}\vert_{W_{k, \infty}^{n,\infty}(\Theta_1, \Theta_2)}
		\\[1ex]
		&\leq  p\,   m^2 \vert{f}\vert_{ W_{k, \infty}^{n,\infty}
					(\Omega_1,\Omega_2)}
			\cdot \vert{g}\vert_{W_{k, \infty}^{n,\infty}(\Theta_1, \Theta_2)},
	\end{align*}
	where we use the estimate of the $\ell^2$ norm with the $\ell^\infty$ norm on $\mathbb{R}^m$
	and the fact that if $f\in  W_{k, \infty}^{n,\infty}(\Omega_1, \Omega_2)$ and that
	$\dim(\Omega_i)={p_i}$ for $i=1,2$, then
	we have the following observation
	$$
		\vert{f}\vert_{ W_{k, \infty}^{n,\infty}(\Omega_1, \Omega_2)}\leq
		\Vert{\;\vert{\nabla f}\vert\;}\Vert_{L_t^\infty L_x^\infty(\Omega_1\times \Omega_2)}
		\leq \sqrt{p_1 p_2} \vert{f}\vert_{W_{k, \infty}^{n,\infty}(\Omega_1, \Omega_2)}
		\leq p \vert{f}\vert_{W_{k, \infty}^{n,\infty}(\Omega_1, \Omega_2)}.
	$$
	%  $since \Vert{\;\vert{\nabla g}\vert\;}\Vert_{L^\infty(\Omega)} = 
	%  sup_{\Theta_1, \Theta_2} \sqrt(\sum _{i,j=1}^{m_1, m_2} |D2_i D1_j g|^2 )
	% \leq \sqrt m sup_{\Theta_1, \Theta_2} |D2_i D1_j g(\theta_1, \theta_2)|$
	% such that $m_1+m_2=m$
	If $n=k=1$, we denote by $D_1$ the derivative with respect to the first
	block of variables, then for $|\alpha|=1$ we have
	$D_1^\alpha(g\circ f)= \sum _{j=1}^m \partial_i f_j \left((\partial_j g)\circ f\right)$.
	Since $\dim(\Omega_1)=p_1$, we get 
	\begin{multline*}
		\vert{g\circ f}\vert_{W_{1, \infty}^{1,\infty}(\Omega_1,\Omega_2)}\leq
		 \max_{1\leq i\leq p_1}\sum _{j=1}^m \vert\partial_i f_j  (\partial_j g)\circ f\vert_
		 {W_{0, \infty}^{1,\infty}(\Omega_1,\Omega_2)}
		 \\
		 \leq\max_{p_1+1\leq \iota \leq p} \max_{1\leq i\leq p_1}\left(
		 \sum _{j=1}^m \Vert\partial_\iota\partial_i f_j  (\partial_j g)\circ f\Vert_
		 {L_t^\infty L_x^\infty(\Omega_1\times\Omega_2)} +
		 \Vert\partial_i f_j \partial_\iota \left((\partial_j g)\circ f\right)\Vert_
		 {L_t^\infty L_x^\infty(\Omega_1\times\Omega_2)}\right)
		 \\
		 \leq \sum _{j=1}^m \vert f_j\vert_{W_{1, \infty}^{1,\infty}(\Omega_1,\Omega_2)}
		 \Vert (\partial_j g)\circ f\Vert_{L_t^\infty L_x^\infty(\Omega_1\times\Omega_2)}+
		 \vert f_j\vert_{W_{0, \infty}^{1,\infty}(\Omega_1,\Omega_2)}
		 \vert \left((\partial_j g)\circ f\right)\vert_{W_{1, \infty}^{0,\infty}
		 			(\Omega_1,\Omega_2)}
		 \\
		 \leq\sum _{j=1}^m \vert f_j\vert_{W_{1, \infty}^{1,\infty}(\Omega_1,\Omega_2)}
		 \Vert (\partial_j g)\Vert_{L_t^\infty L_x^\infty(\Theta_1\times\Theta_2)}+
		 p m^2\vert f_j\vert_{W_{0, \infty}^{1,\infty}(\Omega_1,\Omega_2)}
		 \vert{f}\vert_{ W_{0, \infty}^{1,\infty}(\Omega_1,\Omega_2)}
		\vert{\partial_j g}\vert_{W_{0, \infty}^{1,\infty}(\Theta_1, \Theta_2)}
		\\
		 \leq\sum _{j=1}^m \vert f_j\vert_{W_{1, \infty}^{1,\infty}(\Omega_1,\Omega_2)}
		 \vert g\vert_{W_{1, \infty}^{0, \infty}(\Theta_1,\Theta_2)}+
		 p m^2\vert f_j\vert_{W_{0, \infty}^{1,\infty}(\Omega_1,\Omega_2)}
		 \vert{f}\vert_{ W_{0, \infty}^{1,\infty}(\Omega_1,\Omega_2)}
		\vert{g}\vert_{W_{1, \infty}^{1,\infty}(\Theta_1, \Theta_2)}
		\\
		 \leq m \vert f\vert_{W_{1, \infty}^{1,\infty}(\Omega_1,\Omega_2)}
		 \vert g\vert_{W_{1, \infty}^{0, \infty}(\Theta_1,\Theta_2)}+
		 p m^3\vert f\vert_{W_{0, \infty}^{1,\infty}(\Omega_1,\Omega_2)}
		 \vert{f}\vert_{ W_{0, \infty}^{1,\infty}(\Omega_1,\Omega_2)}
		\vert{g}\vert_{W_{1, \infty}^{1,\infty}(\Theta_1, \Theta_2)}
		\\
		 \leq 2\max\left(m \vert f\vert_{W_{1, \infty}^{1,\infty}
		 			(\Omega_1,\Omega_2)}
		 \vert g\vert_{W_{1, \infty}^{0, \infty}(\Theta_1,\Theta_2)},
		 p m^3\vert f\vert_{W_{0, \infty}^{1,\infty}(\Omega_1,\Omega_2)}^2
		\vert{g}\vert_{W_{1, \infty}^{1,\infty}(\Theta_1, \Theta_2)}
		\right)	%\qedhere	
	\end{multline*}
	%%
%\end{proof}

\end{document}